\newtheorem{theo}{Theorem}[section]
\newtheorem{prop}[theo]{Proposition}
\newtheorem{coro}[theo]{Corollary}
\newtheorem{lemme}[theo]{Lemma}
\newtheorem{assumption}[theo]{Assumption}
\newcommand{\argmax}[1]{\underset{#1}{\operatorname{arg}\!\operatorname{max}}\;}
\newcommand{\argmin}[1]{\underset{#1}{\operatorname{arg}\!\operatorname{min}}\;}
\newcommand{\w}{\widehat}
\newcommand{\one}{\mathds{1}}
\newcommand{\E}{\mathbb{E}}
\renewcommand{\P}{\mathbb{P}}
\newcommand{\R}{\mathbb{R}}
\newcommand{\p}{{\bf p}}
\newcommand{\h}{{\bf h}}
\newcommand{\bb}{{\bf b}}
\title{Empirical risk minimization algorithm

for multiclass classification of S.D.E. paths}
\author{Christophe Denis$^{(1)}$,  Eddy-Michel Ella-Mintsa$^{(2)}$}
\begin{document}
\date{}
\maketitle
\begin{center}
${(1)}$ Université Paris 1 Panthéon-Sorbonne, SAMM\\ 
$(2)$ Université Paris-Saclay, AgroParisTech, INRAE, UMR MIA Paris-Saclay
\end{center}


\begin{abstract}
We address the multiclass classification problem for stochastic diffusion paths, assuming that the classes are distinguished by their drift functions, while the diffusion coefficient remains common across all classes.
In this setting, we propose a classification algorithm that relies on the minimization of the $L_2$ risk. 
We establish rates of convergence for the resulting predictor. Notably, we introduce a margin assumption under which we show that our procedure can achieve fast rates of convergence. Finally, a simulation study highlights the numerical performance of our classification algorithm. 
\end{abstract}

{\bf Keywords.} Multiclass classification, Empirical risk minimization, Diffusion process, $B$-spline basis.

\section{Introduction}
\label{sec:intro-chap3}

Functional data analysis~\citep{ramsaySilverman} is an active field of research, as technological progress has facilitated the large-scale collection of such data in a wide range of fields, including biology~\citep{crow2017introduction}), physics~\citep{romanczuk2012active}), and finance~\citep{lamberton2011introduction}).
In particular, the classification of functional data is at the core of recent research efforts~\citep{Ismail_Fawaz_2019, NEURIPS2022_times_series}.
A specific case arises when the data are assumed to be generated by diffusion processes.
The literature on statistical methods for stochastic differential equations has generated significant attention.
It encompasses a broad range of statistical problems, such as inference (see for instance~\citep{hoffmann1999adaptive,comte2007penalized, comte2021drift, denis2020ridge, pmlr-v139-kidger21b, marie_2023, MOHAMMADI2024104239}), generative modeling~\citep{pmlr-v37-sohl-dickstein15, YangGenerative}, and supervised learning~\citep{cadre2013supervised, denis2020consistent, gadat2020optimal, denis2024nonparametric}.

In this work, we consider the multi-class classification problem where the observation is a random pair $(\bar{X},Y)$. The label $Y$ belongs to $\{1, \ldots, K\}$
while the feature $\bar{X}$ is assumed to be discrete observations on $[0,1]$ of a time homogeneous diffusion $X$, solution of the following stochastic differentiable equation (s.d.e.) 
\begin{equation}
\label{eq:introTheModel}
dX_t =  b_Y^*(X_t) +\sigma^*(X_t)dW_t,
\end{equation}
where $(W_t)_{t \in [0,1]}$ is a standard Brownian motion.
Hence, in our model, the classes are discriminated by the drift functions while the diffusion coefficient is common for all classes. We assume that both drift functions and diffusion coefficient are unknown. In particular, no parametric form for both drift functions and diffusion coefficient is assumed.
In this framework a classifier $g$ is a measurable function such that $g(\bar{X})$ provides a prediction of $Y$. Its performance is assessed through the misclassification risk. 
We assume that the learning sample consists of $N$ independent realizations of $(\bar{X},Y)$. The goal is then to build, based on the learning sample, a classifier $\hat{g}$ such that its risk is closed to the risk of the Bayes classifier $g^*$ that minimizes the misclassification risk. 
We propose an algorithm based on the empirical risk minimization (E.R.M.) principle.
In particular, the resulting predictor minimizes the empirical $L_2$ risk over a set of score functions that relies on modeling both drift and diffusion coefficients. The resulting predictor offers good numerical and theoretical performance.
Notably, we significantly extend the results compared to the existing literature. Specifically, we show that our procedure achieves faster rates of convergence than those obtained in~\citep{denis2024nonparametric} under weaker assumptions. A salient point of our contributions is the introduction of a margin condition~\citep{audibert2007fast} tailored to our specific problem. Based on this assumption, we demonstrate that fast rates of convergence can be achieved.

\subsection{Related works}

There is a wide body of literature that deals with the supervised classification problem for temporal data. For instance, depth classification for functional data~\citep{lopez2006depth, cuevas2007, lange2014, deMicheaux2021},  times series classification~\citep{wang2020, NEURIPS2022_times_series}  or classification of Gaussian processes~\citep{baillo2011, torrecilla2020}.
Recently, \cite{gadat2020optimal} studies a similar, but simpler, problem as defined in Equation~\eqref{eq:introTheModel}, where the drift function is time-dependent and the diffusion coefficient is known. They consider a plug-in approach and provide rates of convergence for the proposed procedure.

Closest to ours, several works deal with the supervised classification problem defined by Equation~\eqref{eq:introTheModel}. In particular, in the binary classification setting, \cite{cadre2013supervised} considers a classification procedure based on empirical risk minimizer that relies on the $0-1$ loss. In order to derive theoretical properties, due to the use of the $0-1$ loss, the set over which the minimization is performed should be finite and sufficiently large to provide a good approximation of the Bayes classifier $g^*$. It is worth noting that it is a significant limitation in applying this method. It is shown that the resulting classifier reaches a rate of convergence of order $N^{-k/(2k+1)}$ over the space $\mathcal{C}^k$ of $k$-continuously differentiable functions. 
However, due to the non convexity of the $0-1$ loss, the proposed predictor can not be considered for practical purpose. A similar result is provided in~\citep{denis2020consistent} for the multi-class framework. In~\citep{denis2020consistent}, the authors consider also a classification procedure based on the minimization of the empirical $L_2$ loss for which they derive rates of convergence. However, they only consider the parametric setting. More precisely, a parametric form for the drift functions is assumed while the diffusion coefficient is assumed to be known and constant. Finally, \cite{denis2024nonparametric} considers a plug-in approach where both  drift functions and  diffusion coefficient are preliminarily estimated. Then the authors use the obtained estimator to build their classification procedure. The construction of their estimator relies also on $B$-spline approximation of both drift and diffusion functions.
They obtain rates of convergence for their procedure under different sets of assumptions. In particular, when the drift and diffusion functions are assumed to be Lipschitz and bounded, the authors show that the resulting classifier achieves a rates of convergence of order $N^{-1/5}$ up to some extra factor of order $\exp(\sqrt{\log(N)})$.

\subsection{Main contribution}

In this work, we propose a classification procedure based on empirical risk minimization, which is more effective than the plug-in approach. Indeed, it directly focuses on the prediction task, 
rather than on the intermediate problem of estimating the drift and diffusion functions.
Due to the limitations of the method considered in~\citep{cadre2013supervised, denis2020consistent} an important question arises: the challenge is to derive rates of convergence for convex surrogates of the $0-1$ loss in the nonparametric setting while ensuring good computational properties. One of the main difficulty is to built a set of classifiers that provides strong approximation properties. Due to the nature of the data, this leads to some technical challenges. Specifically, we propose a set of classifiers that relies on the modeling of drift and diffusion functions by $B$-spline functions~\citep{deboor1978}. 
The resulting procedure exhibits appealing properties from both theoretical and numerical point of view. More precisely, our contribution are as follows:

\begin{enumerate}
    \item[i)] We prove the consistency of our E.R.M. type classifier, and derive a rate of convergence of order $N^{-\beta/(2\beta+1)}$, up to a log-factor, where $\beta$ is the H\"older's smoothness parameter associated to the drift and diffusion functions. In particular, when both drift and diffusion functions are Liptschitz,  our classification procedure achieves a rates of convergence of order, up to some logarithmic factor, $N^{-1/3}$ which is faster than the one obtained in \citep{denis2024nonparametric}. In particular, our result does not require that the drift functions are bounded. Therefore, as expected, we show
    that better rates of convergence can obtained for the E.R.M. procedure under weaker assumptions.
    \item[ii)] In the binary classification setup, in the case where the drift functions are bounded and the diffusion coefficient is constant, we introduce a  margin assumption that relies on a distance between drift functions. Notably, under this assumption and if the drift functions belong to the H\"older space with $\beta \geq 1$, we establish a rate of convergence of order $N^{-4\beta/3(2\beta+1)}$, up to some logarithmic factor. In particular if $\beta \geq 3/2$, our classification procedure achieves fast rates of convergence ({\it e.g.} faster than $N^{-1/2}$). 
    \item[iii)] We propose an adaptive version of our E.R.M. predictor that involves the minimization of a penalized criterion. In particular, we show that the adpative classifier reaches same rates of convergence that the non-adaptive one. 
    \item[iv)] We assess the numerical performance of our classification procedure through synthetic data. We show that, for the considered model, our algorithm compares favorably with respect to the plug-in classifier provided in~\cite{denis2024nonparametric} or depth classifiers~\citep{lopez2006depth}.  
    \end{enumerate}

\subsection{Outline of the paper.}

In Section~\ref{sec:ModelAss}, we present the classification problem as well as the main assumptions. Section~\ref{sec:ERM-procedure} is devoted to the construction of the E.R.M. procedure and the general consistency result while the rates of convergence are presented in Section~\ref{sec:RatesConv}. The adaptive predictor and its theoretical properties are presented in~Section~\ref{sec:AdaptiveERM} and a numerical study over simulated data highlights its performance in Section~\ref{sec:NumericStudy}. Finally, we provide a conclusion and draw some perspectives in Section~\ref{sec:Conclusion}. The proof of our results are relegated to the Appendix.

\paragraph*{Notations.}

For an integer $K \geq 1$, the set $\{1,\ldots, K\}$ is denoted by $[K]$. We define the weighted {\rm softmax} function as follows. For a vector ${\bf x} = (x_1, \ldots, x_K) \in \mathbb{R}^K$ and a vector of probability weights ${\bf p}^* = (p_1, \ldots, p_K)$,
\begin{equation*}
{\rm softmax}^{{\bf p}^*}({\bf x)} =  \left({\rm softmax}^{{\bf p}^*}_1({\bf x)}, \ldots, {\rm softmax}^{{\bf p}^*}_K({\bf x)}\right),
\end{equation*}

with, for $k \in [K]$,
\begin{equation*}
{\rm softmax}^{{\bf p}^*}_k({\bf x)} = \dfrac{p_k \exp(x_k)}{\sum_{i=1}^K p_i\exp(x_i)}.
\end{equation*}

\section{Model and assumptions}
\label{sec:ModelAss}

In this section, we introduce the considered model and the observations in Section~\ref{subsec:MulticlassSetup}. The main assumptions considered throughout the paper are detailed in Section~\ref{subsec:MainAss}. Finally, the convexification of the multiclass classification problem is described in 
Section~\ref{subsec:OptimalClassifier}.

\subsection{Multiclass setup for S.D.E. paths}
\label{subsec:MulticlassSetup}

In this work, we consider the following multiclass classification problem. The feature $X = (X_{t})_{t \in [0,1]}$ is assumed to be solution of the following s.d.e. 
\begin{equation}\label{eq:ERM-model}
    dX_t = b^{*}_{Y}(X_t)dt + \sigma^{*}(X_t)dW_t, ~~ t \in [0,1], ~~ X_0 = 0,
\end{equation}
where $Y \in [K]$ is the associated label, and 
$(W_t)_{t \geq 0}$ is a standard Brownian motion independent of $Y$. 
Throughout the paper, we assume that the functions ${\bf b}^{*} = \left(b_1^*, \ldots, b_K^*\right)$, $\sigma^*$ as well as the distribution of $Y$ are unknown.
In the sequel, the distribution of $Y$ is denoted by ${\bf p}^* = (p_1^*, \ldots, p_K^*)$ with 
$p_k^* = \mathbb{P}(Y=k)$, for $k \in [K]$. 

In the multiclass framework, the goal is to construct a measurable function $g$, namely a classifier, that takes as input a solution $X$ of Equation~\eqref{eq:ERM-model}, and outputs a prediction $g(X) \in [K]$ of its associated label. The performance of such classifier $g$ is then assessed through its misclassification risk $R(g) = \mathbb{P}\left(g(X)\neq Y\right)$.
Naturally at this step, a classifier of interest is the Bayes classifier $g^*$ that satisfies 
\begin{equation*}
g^* \in \argmin{g \in \mathcal{G}} R(g),  
\end{equation*}

where $\mathcal{G}$ is the set of all classifiers. However, since the distribution of $(X,Y)$ is unknown, we cannot compute $g^*$. Therefore, our goal is to build a predictor $\widehat{g}$ that mimics the Bayes classifier.

\paragraph*{Discrete time observations.}

Let $n > 0$. In this work, we assume that a generic observation is a couple $(\bar{X},Y)$, where 
$\bar{X} = (X_{k \Delta_n})_{0 = 1, \ldots, n}$ is a discretized observation of $X$ with $\Delta_n = 1/n$.   The construction of an empirical classifier $\widehat{g}$ is then based on a learning sample $\mathcal{D}_N := \left\{(\bar{X}^j,Y_j), ~~ j = 1, \ldots, N\right\}$
composed of $N$ independent discrete observations of the random pair $(\bar{X},Y)$.
Note that for an observation $\bar{X}$, and a classifier based on $\mathcal{D}_N$, we have that $\widehat{g}(X) = \widehat{g}(\bar{X})$.

\paragraph*{Empirical risk minimizer.}

A natural way to build a predictor $\widehat{g}$ is to consider an Empirical risk minimizer (ERM) (see {\it e.g.} \cite{vapnik1991principles}). More precisely, given a set of classifiers $\mathcal{G}$, we define the ERM classifier as 
\begin{equation}
\label{eq:eqERM01}
\widehat{g} \in \argmin{g \in \widetilde{\mathcal{G}}} \dfrac{1}{N} \sum_{j = 1}^N \one_{\{{\widehat{g}}(\bar{X}^j) \neq Y^j\}}.   
\end{equation}

The theoretical properties of the ERM classifiers are well known. In particular, in our specific framework, \cite{denis2020consistent} establishes the consistency of $\widehat{g}$ in the case where $\sigma^*$ and the distribution of $Y$ are known. Nevertheless, due to computational complexity (see {\it e.g.} \cite{arora1997hardness}), an empirical predictor $\widehat{g}$ defined by~\eqref{eq:eqERM01} cannot be considered for practical purposes. 
It is then usual to consider a convexification of the classification problem (\cite{zhang2004statistical}, \cite{bartlett2006convexity}). In particular, the 0-1 loss is then replace by a convex surrogate. In this work, we consider the $L_2$ loss. In Section~\ref{subsec:OptimalClassifier}, we provide a closed-form expression of the optimal classifier based on the $L_2$-loss.

\subsection{Main assumptions}
\label{subsec:MainAss}

In this section, we gather the main assumptions considered throughout the paper.
The first one ensures that each class may occur with non-zero probability.
\begin{assumption}
we assume that 
$\p_{\min} = \min_{k \in \{1, \ldots,K\}} p_k > 0$.
\end{assumption}

The following assumptions ensure the existence of a diffusion process $X$ solution of Equation~\eqref{eq:ERM-model}, and the existence of its transition density.

\begin{assumption}
\label{ass:RegEll}
\begin{enumerate}
    \item $b^{*}_{i}$ and $\sigma^{*2}$ are $L_0-$Lipshitz functions, where =$L_0 >0$ is a numerical constant.
    \item There exist two constants $c_0,c_1>0$ such that $c_0\leq\sigma^{*}(x)\leq c_1$ for all $x\in\mathbb{R}$.
\end{enumerate}
\end{assumption}

Assumption~\ref{ass:RegEll} guarantees the existence of a unique strong solution $X$ of Equation~\eqref{eq:ERM-model} conditionally to the label $Y \in [K]$ (see \cite{karatzas2014brownian}). Moreover, the unique strong solution $X$ of Equation~\eqref{eq:ERM-model} admits a transition density $(t,x) \in [0,1] \times \mathbb{R} \mapsto f_X(t,x)$. Consequently, we obtain that $X$ admits a moment of any order $q \geq 1$ since from \cite{ella2024nonparametric}, \textit{Lemma 2.2}, we have under Assumption~\ref{ass:RegEll},
\begin{equation*}
    \forall~ q \geq 1, ~~ \mathbb{E}\left[\underset{t \in [0,1]}{\sup}{|X_t|^{q}}\right] < \infty.
\end{equation*}
Note that the transition density $f_X$ of the process $X$ which depends on the label $Y$, can be decomposed as follows
\begin{equation*}
    f_X := \sum_{k=1}^{K}{{p}^{*}_{k}f_{k,X}},
\end{equation*}
where for each $k \in [K]$, $f_{k,X}$ is the transition density of the process $X$ conditional on the event $\{Y = k\}$.

Finally, we assume that the Novikov's criterion is satisfied. In particular, it allows to apply the Girsanov Theorem (see \cite{revuz2013continuous}) which is the main argument to derive a closed-form expression of the Bayes classifier (see~\cite{denis2020consistent, denis2024nonparametric}.

\begin{assumption}[Novikov's condition]
\label{ass:Novikov}
$$\forall k \in [K], \ \ \mathbb{E}\left[\exp\left(\frac{1}{2}\int_{0}^{1}{\frac{b^{*2}_{k}}{\sigma^{*2}}(X_s)ds}\right)\right] < \infty.$$
\end{assumption}

\subsection{Convexification of the problem with the square loss }
\label{subsec:OptimalClassifier}

In this section, we provide a closed-form expression of the oracle classifier based on the $L_2$ loss as well as the ERM classifier based on this loss.
First, we recall the characterization of the Bayes classifier in our framework.

\paragraph*{Bayes classifier.}
For $X$ solution of~\eqref{eq:ERM-model},
we recall that the Bayes classifier is defined as follows 
 \begin{equation*}
        g^{*}(X) = \underset{k \in [K]}{\arg\max}{~\pi^{*}_{k}(X)},
    \end{equation*}
 where for each $k \in [K]$, $\pi^{*}_{k}(X) = \mathbb{P}(Y = k | X)$.
The following proposition provide a useful characterization of the Bayes classifier $g^*$.

\begin{prop}
\label{prop:bayes-classif}
   Under Assumption~\ref{ass:Novikov}, we obtain from \cite{denis2020consistent} that
\begin{equation}\label{eq:conditionalproba}
    \forall ~ k \in [K], ~~ \pi^*_k(X) = {\rm softmax}^{{\bf p}^*}_k\left(\mathbf{F}^*(X)\right), ~~~ {\bf F}^* = \left(F_1^*, \ldots, F_K^*\right)
\end{equation}
with, for each $k \in [K]$,
\begin{equation*}
F^*_k(X):=\int_{0}^{1}{\frac{b_k^{*}}{\sigma^{*2}}(X_s)dX_s}-\frac{1}{2}\int_{0}^{1}{\frac{b^{*2}_{k}}{\sigma^{*2}}(X_s)ds}. 
\end{equation*}
\end{prop}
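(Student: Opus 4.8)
The plan is to recover the posterior probabilities $\pi^*_k(X)=\P(Y=k\mid X)$ by a Bayes-rule computation carried out on the path space, thereby reducing the whole statement to the identification of the class-conditional laws of the trajectory. Write $P_k$ for the law of $X$ conditional on $\{Y=k\}$, that is, the law of the strong solution of~\eqref{eq:ERM-model} driven by the drift $b_k^*$ and the diffusion coefficient $\sigma^*$ (well defined under Assumption~\ref{ass:RegEll}). Since $(W_t)$ is independent of $Y$, for any measure $\mu$ dominating all the $P_k$ on the canonical space of continuous paths started at $0$, Bayes' formula gives
\begin{equation*}
\pi^*_k(X)=\frac{p_k^*\,\dfrac{dP_k}{d\mu}(X)}{\sum_{i=1}^K p_i^*\,\dfrac{dP_i}{d\mu}(X)} .
\end{equation*}
Everything then comes down to computing these Radon--Nikodym densities.

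The key step is to choose as reference the law $P_0$ of the driftless diffusion $dX_t=\sigma^*(X_t)\,dW_t$, $X_0=0$, and to evaluate $dP_k/dP_0$ through Girsanov's theorem with the shift $b_k^*/\sigma^*$. The associated exponential martingale is
\begin{equation*}
\mathcal{E}_k:=\exp\!\left(\int_0^1 \frac{b_k^*}{\sigma^{*}}(X_s)\,dW_s-\frac12\int_0^1 \frac{b_k^{*2}}{\sigma^{*2}}(X_s)\,ds\right),
\end{equation*}
and, using that $dX_s=\sigma^*(X_s)\,dW_s$ under $P_0$ (so $dW_s=\sigma^*(X_s)^{-1}dX_s$), one checks that $\mathcal{E}_k=\exp\big(F_k^*(X)\big)$. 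Assumption~\ref{ass:Novikov} is precisely Novikov's criterion, which ensures that $\mathcal{E}_k$ is a genuine martingale with $\E_{P_0}[\mathcal{E}_k]=1$; hence the change of measure is legitimate and Girsanov's theorem yields $dP_k/dP_0=\mathcal{E}_k=\exp(F_k^*(X))$, $P_k$ being exactly the measure under which $X$ acquires the drift $b_k^*$ while retaining $\sigma^*$. All the $P_k$ share the deterministic initial condition $X_0=0$, so each is absolutely continuous with respect to the single reference $P_0$, which validates the reduction above with $\mu=P_0$.

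Substituting $dP_k/dP_0=\exp(F_k^*(X))$ into the Bayes formula, the common normalizing factors cancel and one obtains
\begin{equation*}
\pi^*_k(X)=\frac{p_k^*\exp\big(F_k^*(X)\big)}{\sum_{i=1}^K p_i^*\exp\big(F_i^*(X)\big)}={\rm softmax}^{\p^*}_k\big(\mathbf{F}^*(X)\big),
\end{equation*}
which is the claimed identity. I expect the main obstacle to be the rigorous justification of the Bayes step on the infinite-dimensional path space: one must argue that the conditional law of the whole trajectory given $\{Y=k\}$ is absolutely continuous with respect to $P_0$ and that the posterior disintegrates as the corresponding $\p^*$-weighted ratio of densities. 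This is exactly where the Girsanov--Novikov machinery is essential, as it simultaneously produces the densities and guarantees they integrate to one; once the densities are identified, the final step is purely algebraic. Since this characterization is established in~\cite{denis2020consistent}, the argument here may be presented as a recollection of that computation.
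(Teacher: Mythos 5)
Your overall strategy --- identify the class-conditional laws $P_k$ on path space, compute their densities with respect to the driftless law $P_0$ via Girsanov, and conclude by Bayes' formula for the discrete label $Y$ --- is exactly the argument behind the cited result in \cite{denis2020consistent} (the paper itself offers no proof and defers to that reference), and your disintegration step is fine. There is, however, a genuine gap in the way you invoke Novikov's criterion. You apply Girsanov starting from $P_0$, which requires the exponential $\mathcal{E}_k$ to be a true martingale \emph{under} $P_0$, i.e.
$\E_{P_0}\bigl[\exp\bigl(\tfrac12\int_0^1 (b_k^{*2}/\sigma^{*2})(X_s)\,ds\bigr)\bigr]<\infty$.
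But Assumption~\ref{ass:Novikov} states this integrability under the law of the model, that is under the mixture $\sum_i p_i^* P_i$ (equivalently under each $P_i$, since all $p_i^*>0$), not under $P_0$. These conditions are not interchangeable: exponential moments of $\int_0^1 b_k^{*2}(X_s)\,ds$ are not preserved under an equivalent change of measure, and for an unbounded, merely Lipschitz drift such as $b_k^*(x)=Lx$ with $\sigma^*\equiv 1$ one has $\E_{P_0}\bigl[\exp\bigl(\tfrac{L^2}{2}\int_0^1 W_s^2\,ds\bigr)\bigr]=\infty$ as soon as $L\geq \pi/2$, regardless of what the assumption grants under the $P_i$. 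So the step ``Assumption~\ref{ass:Novikov} ensures $\E_{P_0}[\mathcal{E}_k]=1$'' is not licensed as written.

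The repair is standard and stays entirely within your framework: run Girsanov in the opposite direction. Under $P_k$, the stated assumption (restricted to the event $\{Y=k\}$) is precisely Novikov's condition for
$Z_k=\exp\bigl(-\int_0^1 (b_k^*/\sigma^*)(X_s)\,dW_s-\tfrac12\int_0^1 (b_k^{*2}/\sigma^{*2})(X_s)\,ds\bigr)$,
where $W$ is the Brownian motion driving $X$ under $P_k$; hence $Z_k$ is a uniformly integrable martingale, and under the measure $Z_k\,dP_k$ the process $X$ solves $dX_t=\sigma^*(X_t)\,d\widetilde{W}_t$, so by uniqueness in law (valid under Assumption~\ref{ass:RegEll}) that measure equals $P_0$. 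Since $Z_k>0$ almost surely, this gives $P_k\sim P_0$ with $dP_k/dP_0=Z_k^{-1}$, and substituting $dW_s=\bigl(dX_s-b_k^*(X_s)\,ds\bigr)/\sigma^*(X_s)$ shows $Z_k^{-1}=\exp\bigl(F_k^*(X)\bigr)$. From there your Bayes computation goes through verbatim and yields the softmax formula.
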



\paragraph*{Convexification with $L_2$-loss. }

As exposed in Section~\ref{subsec:MulticlassSetup}, we consider the convexification of our multiclass classification problem.
To this end, we consider $\mathcal{H}$, the convex set of measurable score functions of the solution $X$ defined as 
\begin{equation}\label{eq:ScoreFunction}
        \mathcal{H} = \left\{\mathbf{h} = \left(h_1, \ldots, h_K\right) : \mathcal{X} \longrightarrow \R^{K}\right\}.
    \end{equation}
    
From a score function ${\bf h}$, we consider its associated classifier $g_{\bf h}$ defined as
\begin{equation*}
g_{\bf h}(X) = \argmax{k \in [K]} h^k(X).    
\end{equation*}

Now, for a score function ${\bf h}$, we defined its associated $L_2$-risk as 
\begin{equation*}
R_2(h) = \sum_{k=1}^K \mathbb{E}\left[\left(Z_k-h^k(X)\right)^2\right],
\end{equation*}

with $Z_k = 2\one_{\{Y_j = k\}}-1$. Then, it is not difficult to see that
\begin{equation*}
{\bf h}^* = \argmin{{\bf h} \in \mathcal{H}} R_2({\bf h}),
\end{equation*}

is characterized by $h^{*k}(X) = 2\pi_k^*(X)-1$ for $k \in [K]$.
Therefore, we have that $g_{h^*} = g^*$.
Furthermore, for ${\bf h} \in \mathcal{H}$, the Zhang lemma highlights the link between the excess risk of the classifier $g_{\bf h}$ and the excess risk of the score function ${\bf h}$.

\begin{prop}[\cite{zhang2004statistical}]
\label{prop:RelatingExcessRisks}
Let ${\bf h} \in \mathcal{H}$. The following holds
\begin{align*}
    R(g_{{\bf h}}) - R(g_{\bf h^{*}})\leq\frac{1}{\sqrt{2}}\sqrt{{R_2}({\bf h})-{R_2}(\bf h^{*})}.
\end{align*}
\end{prop}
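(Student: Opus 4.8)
The plan is to reduce the claimed inequality to a \emph{pointwise} calibration estimate, conditional on $X$, and then to integrate it with Jensen's inequality. Throughout I condition on $X$ and abbreviate $\hat{k} = g_{\mathbf h}(X) = \argmax{k \in [K]} h^k(X)$ and $k^* = g_{\mathbf h^*}(X) = g^*(X) = \argmax{k \in [K]} \pi^*_k(X)$. First I would rewrite the excess misclassification risk in terms of the conditional class probabilities: since $\P(g(X) \ne Y \mid X) = 1 - \pi^*_{g(X)}(X)$, we obtain
\begin{equation*}
R(g_{\mathbf h}) - R(g_{\mathbf h^*}) = \E\left[\pi^*_{k^*}(X) - \pi^*_{\hat{k}}(X)\right],
\end{equation*}
where the integrand $\Delta(X) := \pi^*_{k^*}(X) - \pi^*_{\hat k}(X)$ is nonnegative by definition of $k^*$.

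Next I would identify the excess $L_2$ risk with the squared $L_2$ distance between $\mathbf h$ and $\mathbf h^*$. Because $h^{*k}(X) = \E[Z_k \mid X] = 2\pi^*_k(X) - 1$ is the conditional mean of $Z_k$, the cross term in the bias--variance expansion of $\E[(Z_k - h^k(X))^2 \mid X]$ vanishes, so summing over $k$ gives
\begin{equation*}
R_2(\mathbf h) - R_2(\mathbf h^*) = \E\left[\sum_{k=1}^K \bigl(h^k(X) - h^{*k}(X)\bigr)^2\right].
\end{equation*}
In view of the two displays above, it suffices to establish the pointwise bound $\Delta(X)^2 \le \tfrac{1}{2}\sum_{k=1}^K (h^k(X) - h^{*k}(X))^2$ and then conclude, since $\Delta(X)\ge 0$, by
\begin{equation*}
\E[\Delta(X)] \le \sqrt{\E[\Delta(X)^2]} \le \frac{1}{\sqrt{2}}\sqrt{\E\Bigl[\textstyle\sum_{k}(h^k(X)-h^{*k}(X))^2\Bigr]} = \frac{1}{\sqrt{2}}\sqrt{R_2(\mathbf h) - R_2(\mathbf h^*)}.
\end{equation*}

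The hard part, and the real content of the statement, is the pointwise inequality, which is where both $\argmax$ optimalities must be used simultaneously. The key identity is $\pi^*_k(X) = \tfrac{1}{2}(h^{*k}(X)+1)$, so that $\Delta(X) = \tfrac{1}{2}\bigl(h^{*k^*}(X) - h^{*\hat k}(X)\bigr)$. Writing $u = h^{\hat k}(X) - h^{*\hat k}(X)$ and $v = h^{k^*}(X) - h^{*k^*}(X)$, I would decompose
\begin{equation*}
h^{*k^*}(X) - h^{*\hat k}(X) = u - v + \bigl(h^{k^*}(X) - h^{\hat k}(X)\bigr),
\end{equation*}
and note that $h^{k^*}(X) - h^{\hat k}(X) \le 0$ by optimality of $\hat k$ for $\mathbf h$, while $h^{*k^*}(X) - h^{*\hat k}(X) \ge 0$ by optimality of $k^*$ for $\mathbf h^*$. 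Hence $0 \le h^{*k^*}(X) - h^{*\hat k}(X) \le u - v$, and $(u-v)^2 \le 2u^2 + 2v^2 \le 2\sum_{k}(h^k(X)-h^{*k}(X))^2$. Squaring and multiplying by $\tfrac14$ yields exactly $\Delta(X)^2 \le \tfrac12 \sum_{k}(h^k(X)-h^{*k}(X))^2$ (the case $\hat k = k^*$ being trivial), which completes the argument. I expect the only delicate point to be this sign bookkeeping; everything else is a routine conditional-expectation computation.
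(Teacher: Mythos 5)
Your proof is correct. The paper itself gives no proof of this proposition; it simply cites Zhang (2004), so your argument cannot coincide with "the paper's proof" --- instead it is a complete, self-contained derivation of the multiclass quadratic-loss calibration inequality. Your route is the standard one underlying Zhang's lemma: (i) reduce the excess misclassification risk to $\E[\pi^*_{k^*}(X)-\pi^*_{\hat k}(X)]$; (ii) use the regression identity $h^{*k}(X)=\E[Z_k\mid X]$ so that the excess $L_2$ risk is exactly $\E\bigl[\sum_k (h^k(X)-h^{*k}(X))^2\bigr]$; (iii) prove the pointwise bound $\bigl(\pi^*_{k^*}(X)-\pi^*_{\hat k}(X)\bigr)^2\le \tfrac12\sum_k (h^k(X)-h^{*k}(X))^2$ by playing the two argmax optimalities against each other (your sign bookkeeping $0\le h^{*k^*}-h^{*\hat k}\le u-v$ together with $(u-v)^2\le 2u^2+2v^2$ is exactly right, and the tie case $\hat k=k^*$ is indeed trivial); and (iv) integrate with Jensen. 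The constant $1/\sqrt 2$ emerges correctly from the factor $\tfrac12$ in step (iii) after the halving $\pi^*_k=\tfrac12(h^{*k}+1)$. What your write-up buys over the paper's bare citation is transparency: it makes explicit that the inequality needs no assumption beyond $h^{*k}$ being the conditional mean of $Z_k$, and it localizes the whole content of the lemma in the elementary pointwise inequality of step (iii).
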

A straightforward consequence of the above proposition is that consistency {\it w.r.t.} 
the $L_2$-risk implies consistency with respect to the misclassification risk. In the next section, we present our proposed procedure which is based on the empirical risk minimization of the $L_2$-risk rather than the misclassification risk.

\section{E.R.M. Procedure}\label{sec:ERM-procedure}
\label{sec:ERMprocedure}

In this section, we detail the proposed classification algorithm.
The building of the procedure relies on the learning sample $\mathcal{D}_N = \{(\bar{X}^j,Y_j), ~~ j = 1, \ldots, N\}$ composed of $N$ i.i.d. random variables distributed according to the distribution of $(\bar{X},Y)$.
Our E.R.M estimator based on the $L_2$-loss relies on a suitable choice of the set of score functions ${\bf H}$. Once the set of score functions defined, the E.R.M. classifier is then expressed as
\begin{equation*}
\widehat{\bf h} \in \argmin{{\bf h} \in {\bf H}} \widehat{R}_2({\bf h}),   
\end{equation*}

where for each ${\bf h} = (h^1, \ldots,h^K) \in {\bf H}$, $\widehat{R}_2({\bf h})$ is the empirical counterpart of $R_2({\bf h})$ given by
\begin{equation*}
\widehat{R}_2({\bf h}) = \dfrac{1}{N} \sum_{j = 1}^N \sum_{k=1}^K \left(Z_k^j -h^k(\bar{X}^j)\right)^2,    
\end{equation*}

with $Z_k^j = 2\one_{Y_j = k}-1$, $k \in [K]$, $j \in [N]$.
The resulting classifier is then given by $g_{\widehat{\bf h}}$.

In Section~\ref{subsec:score-functions}, we define the set of score functions ${\bf H}$ while the whole procedure is detailed in Section~\ref{subsec:procedure}. Finally, a general consistency result is provided in Section~\ref{subsec:Consistency}.


\subsection{Set of score functions}
\label{subsec:score-functions}

The construction of the set ${\bf H}$ relies on the characterization of ${\bf h}^* = \left(h^{*1}, \ldots, h^{*K}\right)$, the optimal score function defined in Section~\ref{subsec:OptimalClassifier}:
\begin{equation*}
h^{*k}(X) = 2 \pi_k^{*}(X) -1, \; \; k \in [K].
\end{equation*}

In view of the expression of $(\pi_k^*)_{k \in [K]}$ provided in Equation~\eqref{eq:conditionalproba}, we aim at considering a set of score functions composed of functions ${\bf h}_{{\bf b}, \sigma^2,{\bf p}} = (h_{{\bf b}, \sigma^2,{\bf p}}^1, \ldots, h_{{\bf b}, \sigma^2,{\bf p}}^K)$ that map $\mathbb{R}^{n+1}$ onto $[-1,1]^K$. More precisely, for each $k \in [K]$
\begin{equation*}
h_{{\bf b}, \sigma^2,{\bf p}}^k(\bar{X}) = 2 \bar{\pi}^k_{\mathbf{b},\sigma^2, {\bf p}}(\bar{X}) -1,    
\end{equation*}
where $\bar{\pi}^k_{\mathbf{b},\sigma^2, {\bf p}}$ is a discrete times approximation of $\pi^*_k$ based on some functions ${\bf b}, \sigma^2$, and ${\bf p}$, a vector of probability weights on $[K]$.
Specifically, 
\begin{equation*}
\bar{\pi}^{k}_{\mathbf{b},\sigma^2, {\bf p}}(\bar{X}) =  {\rm softmax}^{p}_{k}(\bar{\mathbf{{F}}}_{{\bf b}, \sigma^2}(\bar{X})),
\end{equation*}

with $\bar{\mathbf{F}}_{{\bf b}, \sigma^2} = \left(\bar{F}^{1}_{{\bf b}, \sigma^2}, \ldots, \bar{F}^{K}_{{\bf b}, \sigma^2}\right)$, and for each $k \in [K]$,
\begin{equation*}
    \bar{F}_{{\bf b},\sigma^2}^{k}(\bar{X}) = \sum_{k=0}^{n-1}{\dfrac{b_{k}}{\sigma^{2}}(X_{k\Delta_n})(X_{(k+1)\Delta_n} - X_{k\Delta_n})} - \dfrac{\Delta_n}{2}\sum_{k=0}^{n-1}{\dfrac{b_{k}^2}{\sigma^2}}(X_{k\Delta_n}).
\end{equation*}

We highlight that for each $k \in [K]$, $\bar{F}^{k}(\bar{X})$ is a discrete times approximation of $F^*_k(X)$, where the true functions $b_k^*, \sigma^{*2}$ are replaced by the functions $b_k, \sigma^2$.
Therefore, as score functions, we consider functions parameterized by 
functions ${{\bf b}}$ and ${\sigma^2}$ which should be viewed as approximations of the drift and diffusion functions. Hereafter, we present the considered spaces of approximation used to model ${\bf b}^*$ and $\sigma^{*2}$.

\paragraph*{Space of approximations.}

To model the drift and diffusion functions, we consider the $B$-spline space (see {\it e.g.} \cite{gyorfi2006distribution}). Specifically, let $M, D > 0$ two  integers, and 
${\bf u} = (u_{-M}, \ldots, U_{D+M})$ the sequence of regular knots of the interval $[-\log(N),\log(N)]$ defined as
\begin{align*}
&u_{-M}  = \ldots = u_0 = -\log(N),  \;\;  {\rm} \;\; u_D = \ldots = U_{D+M} = \log(N).\\
&\forall l  \in  [D], \;\; u_l = -\log(N) + \dfrac{2l\log(N)}{D}.
\end{align*}

Then, we denote by $\{B_l, l = -M, \ldots, D\}$ the $B$-spline basis of order $M$ defined 
by the knots vector ${\bf u}$.
Finally, the considered spaces of approximation are defined as follows

\begin{align*}
&\mathcal{S}_{D} = \left\{\sum_{\ell = -M}^{D-1} a_{\ell}B_{\ell}, ~~ \sum_{\ell=-M}^{D-1}{a^{2}_{\ell}} \leq (D+M)\log^3(N)\right\},  \\
&\widetilde{\mathcal{S}}_{D} = \left\{x \mapsto \sigma^2(x)  = \widetilde{\sigma}^2(x) \one_{\{\widetilde{\sigma}^2(x)\geq 1/\log(N)\}}   + \frac{1}{\log(N)} \one_{\{\widetilde{\sigma}^2(x)\leq 1/\log(N)\}}, \;\; \widetilde{\sigma}^2 \in \mathcal{S}_D\right\}.
\end{align*}

We highlight that $\mathcal{S}_{D}$ is dedicated to the approximation of the drift functions, while
the set $\tilde{S}_{D}$ is dedicated to the approximation of the diffusion coefficient.
In particular, for $\sigma^2 \in \tilde{S}_{D}$, we have $\sigma^2 > 0$. Note that from the properties of ${\bf B}-$spline functions (see {\it e.g.} \cite{gyorfi2006distribution}), the elements of the approximation space $\mathcal{S}_D$ are $M-1$ times differentiable piece-wise polynomial functions with compact supports. The choice of the constraint space $\mathcal{S}_D$
is motivated by its approximation property (see Proposition~3 in~\citep{denis2024nonparametric}

\paragraph*{Set of score functions. }

Now we are able to define the considered set of score functions. Rather to include the estimation of the weights ${\bf p}^*$, we consider initial estimators $\mathbf{\widehat{p}}$ of it based on a additional labeled dataset $\mathcal{D}_N^{(1)}$. Then we consider the following set of score functions
\begin{equation}
\label{eq:eqSetScoreFun}
\w{\bf H}_{D_1,D_2} = \left\{h_{{\bf b}, \sigma^2, \mathbf{{\widehat{p}}}}, \;\; {\bf b} \in \mathcal{S}_{D_1}^K, \;\; \sigma^2 \in \tilde{\mathcal{S}}_{D_2}\right\},   
\end{equation}

where $D_1$ and $D_2$ are the dimension parameters of the considered spaces of approximation. Note that these parameters have to be chosen beforehand. In particular, the choice of $D_1,D_2$ depends on the size $N$ of the learning sample $\mathcal{D}_N$, and have to be properly chosen to ensure the convergence of the resulting classifier. A data-driven choice of these parameters is discussed in Section~\ref{sec:AdaptiveERM}.

\subsection{Procedure}
\label{subsec:procedure}

In this section, we summarize the overall procedure. As exposed in Section~\ref{subsec:score-functions},
the proposed procedure is two steps. A first step is dedicated to the estimation of the distribution of
$Y$ while in a second step, we compute the ERM estimator over the set of score functions $\w{\bf H}_{D_1,D_2}$.
To this extent, we assume that two independent learning samples are available. For the sake of simplicity, we assume that the two learning samples have the same size $N$.

Since the estimation of ${\bf p}^*$ requires to have access to the output variables, we denote the first sample $\mathcal{Y}_N = \{\widetilde{Y}_1, \ldots, \widetilde{Y}_N\}$ which is composed of i.i.d variables distributed according 
to ${\bf p}^*$.
Then, we define $\boldsymbol{\widehat{p}} = \left(\widehat{p}_1, \ldots, \widehat{p}_K\right)$
as the empirical counterpart of ${\bf p}^*$ with
\begin{equation*}
\forall ~ k \in [K], ~~ \widehat{p}_k = \dfrac{1}{N} \sum_{j=1}^N \tilde{Y}_j.   
\end{equation*}

Note that the use of two independent samples for the estimation of  ${\bf p}^{*}$ is mainly considered for technical reasons. For practical purpose, we can use a single learning sample for the estimation of  ${\bf p}^*$ on one side, and the coefficients ${\bf b}^*$ and $\sigma^{*2}$ on the other side.  

Then, based on $\mathcal{D}_N$, the E.R.M is then define as
$g_{\widehat{\bf h}}$ with
\begin{equation*}
\widehat{{\bf h}} \in \argmin{{\bf h} \in \w{\bf H}_{D_1,D_2}} \widehat{R}_2({\bf h}),
\end{equation*}

and $\w{\bf H}_{D_1,D_2}$ defined by Equation~\eqref{eq:eqSetScoreFun}.
Therefore, in view of the parametrization of the set $\w{\bf H}_{D_1, D_2}$, the ERM score function $\widehat{{\bf h}}$ can be expressed as $\widehat{{\bf h}} = h_{\widehat{{\bf b}}, \widehat{\sigma}^2, \widehat{\mathbf{p}}}$, with 
$\widehat{{\bf b}} \in \mathcal{S}_{D_1}^K, \widehat{\sigma}^2 \in \widetilde{\mathcal{S}}_{D_2}$.
In particular, the estimated functions $\widehat{{\bf b}}, \widehat{\sigma}^2$ can be characterized as 
\begin{equation*}
\left(\widehat{{\bf b}}, \widehat{\sigma}^2\right) \in \argmin{({\bf b}, \sigma^2) \in \mathcal{S}_{D_1}^K \times \widetilde{\mathcal{S}}_{D_2}} \widehat{R}_2\left(h_{{\bf b}, \sigma^2, \mathbf{{\w{p}}}}\right)
\end{equation*}

The resulting classifier is then defined as 
\begin{equation*}
\widehat{g} = g_{\widehat{{\bf h}}}.    
\end{equation*}

\subsection{A general consistency result}
\label{subsec:Consistency}

In this section,  we establish the consistency of the ERM classifier $\widehat{g} = g_{\widehat{\bf h}}$ under the assumptions presented in Section~\ref{subsec:MainAss}. Specifically,
We first provide an upper-bound of the average excess risk of the empirical score function $\widehat{\bf h}$. 
\begin{theo}
\label{thm:EstimErrorERM}
Assume that $\Delta_n = \mathrm{O}(1/N)$. Under Assumptions \ref{ass:RegEll} and \ref{ass:Novikov}, the following holds:
\begin{equation*}
     \mathbb{E}\left[{R}_2(\widehat{\bf h}) - R_2({\bf h}^{*})\right] \leq C\left(\log^{6}(N)\left(\dfrac{1}{D_1^2}+ \dfrac{1}{D_2^2}\right) + \frac{(D_1+D_2)\log(N)}{N} + \dfrac{1}{n}\right),
\end{equation*}
where $C,c>0$ are numerical constants. 
\end{theo}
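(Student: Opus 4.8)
The starting point is the exact bias--variance identity for the square loss. Since $h^{*k}(X)=\mathbb{E}[Z_k\mid X]$ by Proposition~\ref{prop:bayes-classif}, every score function $\mathbf{h}$ satisfies
\[
R_2(\mathbf{h})-R_2(\mathbf{h}^{*})=\sum_{k=1}^{K}\mathbb{E}\big[(h^{k}(X)-h^{*k}(X))^{2}\big],
\]
so the excess risk is literally a weighted $L_2$ distance. I would then introduce a deterministic pilot element of the class, $\bar{\mathbf{h}}=h_{\bar{\mathbf{b}},\bar{\sigma}^2,\widehat{\mathbf{p}}}\in\widehat{\mathbf{H}}_{D_1,D_2}$, where $\bar{\mathbf{b}},\bar{\sigma}^2$ are the best $B$-spline approximants of $\mathbf{b}^{*},\sigma^{*2}$ in $\mathcal{S}_{D_1}^{K}\times\widetilde{\mathcal{S}}_{D_2}$ on $[-\log N,\log N]$, and split
\[
R_2(\widehat{\mathbf{h}})-R_2(\mathbf{h}^{*})=\underbrace{\big(R_2(\widehat{\mathbf{h}})-R_2(\bar{\mathbf{h}})\big)}_{\text{stochastic}}+\underbrace{\big(R_2(\bar{\mathbf{h}})-R_2(\mathbf{h}^{*})\big)}_{\text{approximation}+\text{discretisation}}.
\]

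For the deterministic term I would further insert the \emph{continuous} score $h_{\bar{\mathbf{b}},\bar{\sigma}^2,\mathbf{p}^*}$ built from $F^{*}$ rather than from the Riemann/It\^o sum $\bar{F}$, so as to separate the discretisation gap from the pure approximation gap. Because the weighted softmax is globally Lipschitz, both gaps reduce to controlling the corresponding gaps at the level of the $F$'s. The discretisation gap $\bar{F}_{\bar{\mathbf{b}},\bar{\sigma}^2}(\bar{X})$ versus $F_{\bar{\mathbf{b}},\bar{\sigma}^2}(X)$ is a mean-square It\^o/Riemann-sum error: using Assumption~\ref{ass:RegEll} (Lipschitz $b_k/\sigma^2$, $\sigma^2$ floored at $1/\log N$), the SDE regularity $\mathbb{E}[|X_t-X_s|^{2}]\lesssim|t-s|$, and the moment bound $\mathbb{E}[\sup_t|X_t|^q]<\infty$ from \cite{ella2024nonparametric}, it is $\mathrm{O}(\Delta_n)=\mathrm{O}(1/n)$. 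The approximation gap follows from the $B$-spline property (Proposition~3 in \cite{denis2024nonparametric}), which gives $\|\bar{b}_k-b_k^{*}\|_{\infty,[-\log N,\log N]}\lesssim \log N/D_1$ and the analogue for $\sigma^2$; squaring, propagating through the integrals, and bounding the contribution of $\{\sup_t|X_t|>\log N\}$ by the moment bound (negligible) produces the $\log^{6}(N)(1/D_1^{2}+1/D_2^{2})$ term, the extra log powers coming from the interval length and the coefficient constraint. The error from $\widehat{\mathbf{p}}$ versus $\mathbf{p}^*$, an empirical mean of bounded variables, is $\mathrm{O}(1/N)$ in expectation and is absorbed.

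For the stochastic term I would use ERM optimality $\widehat{R}_2(\widehat{\mathbf{h}})\le\widehat{R}_2(\bar{\mathbf{h}})$ to write
\[
R_2(\widehat{\mathbf{h}})-R_2(\bar{\mathbf{h}})\le\big(R_2-\widehat{R}_2\big)(\widehat{\mathbf{h}})-\big(R_2-\widehat{R}_2\big)(\bar{\mathbf{h}}),
\]
and bound the right-hand side by a localised supremum of the centred empirical process indexed by $\widehat{\mathbf{H}}_{D_1,D_2}$. Two ingredients drive the estimate: (a) every score lies in $[-1,1]$ and $Z_k\in\{-1,1\}$, so the loss increments are bounded and, crucially for the square loss, their variance is controlled by the excess risk itself (a Bernstein-type relation), which is exactly what upgrades the rate from $\sqrt{(D_1+D_2)/N}$ to $(D_1+D_2)/N$; (b) the parametrisation $(\mathbf{b},\sigma^2)\mapsto h_{\mathbf{b},\sigma^2,\widehat{\mathbf{p}}}$ is Lipschitz in the spline coefficients with a constant polynomial in $\log N$ (owing to the constraint $\sum a_\ell^2\le(D+M)\log^3 N$, the floor $\sigma^2\ge 1/\log N$, and the moments of $X$), so the covering number obeys $\log\mathcal{N}(\varepsilon)\lesssim(D_1+D_2)\log(\cdot/\varepsilon)$ with $K$ absorbed in the constant. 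Combining (a) and (b) through a peeling/chaining argument with Bernstein's (or Talagrand's) inequality and integrating the resulting tail yields $\mathbb{E}[\text{stochastic}]\lesssim(D_1+D_2)\log(N)/N$.

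The most delicate step is this stochastic term: securing the fast $(D_1+D_2)\log(N)/N$ rate needs the variance--excess-risk comparison specific to the square loss together with localisation, while simultaneously keeping every Lipschitz and covering constant polylogarithmic in $N$ --- this is precisely where the built-in constraints of $\mathcal{S}_{D}$ and $\widetilde{\mathcal{S}}_{D}$ and the control of $\sup_t|X_t|$ all enter. A secondary genuine difficulty is that $\widehat{\mathbf{h}}$ and $\widehat{R}_2$ are expressed through the discrete path $\bar{X}$ whereas $\mathbf{h}^{*}$ lives on the continuous path $X$; the clean comparison above relies on the discretisation estimate of the previous paragraph holding uniformly over the class, so that the It\^o-sum error decouples from the empirical-process error. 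Assembling the three contributions and invoking $\Delta_n=\mathrm{O}(1/N)$ to place the $1/n$ term at the announced order completes the argument.
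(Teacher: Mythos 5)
Your overall architecture coincides with the paper's proof: a pilot score function $h_{\bar{\bb},\bar{\sigma}^2,\widehat{\p}}$ built from the best spline approximants, a triangle-type decomposition of the deterministic part into approximation, weight-estimation and discretisation errors, and a localised Bernstein-plus-covering-number argument for the stochastic part, with exactly the two ingredients the paper uses (variance of the loss increments controlled by the excess risk, and $\log\mathcal{N}(\varepsilon)\lesssim(D_1+D_2)\log N$). The paper implements the localisation through the classical $\mathcal{E}_{\bf h}-2\widehat{\mathcal{E}}_{\bf h}$ device and a union bound over a finite $\varepsilon$-net rather than peeling/chaining, but that is the same mechanism.

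The one step whose justification does not go through as written is your discretisation gap. You compare $\bar{F}_{\bar{\bb},\bar{\sigma}^2}(\bar{X})$ with the continuous functional $F_{\bar{\bb},\bar{\sigma}^2}(X)$, so the Riemann/It\^o-sum error involves increments $\left|(\bar{b}_k/\bar{\sigma}^2)(X_{\xi(s)})-(\bar{b}_k/\bar{\sigma}^2)(X_s)\right|$ of the \emph{spline} functions, and you invoke Assumption~\ref{ass:RegEll} for the needed Lipschitz bound; but that assumption concerns $b_k^*$ and $\sigma^{*2}$ only. Proposition~\ref{prop:Bias} controls the spline approximants in sup-norm, not their derivatives, and a worst-case element of $\mathcal{S}_{D}$ (coefficients of norm up to $\sqrt{(D+M)\log^3 N}$, knot spacing $2\log(N)/D$) has Lipschitz constant growing polynomially in $D$, which would contaminate the $1/n$ term at the dimensions $D\propto N^{1/(2\beta+1)}$ used later. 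The gap is repairable --- a Markov-type inverse inequality on each knot interval shows that a spline approximating an $L_0$-Lipschitz function at rate $\log N/D$ with knot spacing of that same order is itself Lipschitz with a constant independent of $D$ --- but this argument is missing and is not supplied by any result quoted in the paper. The paper avoids the issue entirely by routing the triangle inequality the other way: it compares the two \emph{discrete} functionals $\bar{F}_{\bar{\bb},\bar{\sigma}^2}(\bar{X})$ and $\bar{F}_{{\bb}^*,\sigma^{*2}}(\bar{X})$, for which only sup-norm differences of the functions are needed (Lemma~\ref{lem:lemTech}), and performs the discretisation step only with the true coefficients $({\bb}^*,\sigma^{*2})$, where Assumption~\ref{ass:RegEll} applies directly (Lemma~7.4 of \cite{denis2020consistent}). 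A second, minor, inaccuracy: the map $({\bf b},\sigma^2)\mapsto\widehat{R}_2(h_{{\bf b},\sigma^2,\widehat{\p}})$ is not Lipschitz with polylogarithmic constant --- the It\^o-sum term brings a factor of order $n^{1/2}D_1^{1/2}$ (Corollary~\ref{coro:CardH}) --- though this does not damage the conclusion, since $\varepsilon$ may be taken polynomially small and enters the entropy only logarithmically.
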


The proof of the above result relies on the following strategy. First, we focus on the estimation error of the classifier on the interval $[-\log(N),\log(N])$. Second, the excess risk is controlled on the event where the trajectory is outside $[-\log(N),\log(N)]$. In particular, 
the bound provided in Theorem~\ref{thm:EstimErrorERM} is divided into four terms. The two first terms are respectively the bias and estimation error due to the estimation of the functions ${\bf b}^*$, and $\sigma^{*2}$ on the interval $[-\log(N),\log(N)]$ based on the minimization of the empirical risk.
Finally, the last term is related to the discretization error.

An important consequence of Theorem~\ref{thm:EstimErrorERM} is that it implies the consistency of $\widehat{g}$. Indeed, based on Theorem~\ref{thm:EstimErrorERM}, and the Zhang lemma (Proposition~\ref{prop:RelatingExcessRisks}), we establish the following result.
\begin{coro}
\label{coro:Consistency}
Let Assumptions \ref{ass:RegEll} and \ref{ass:Novikov} be fulfilled. For
$\Delta_n = \mathrm{O}(1/N)$, and $D_1 = D_2:=D_N$ chosen such that
\begin{equation*}
\dfrac{\log^6(N)}{D_N^2} \rightarrow 0, \;\; {\rm and} \;\;\dfrac{D_N\log(N)}{N} \rightarrow 0 ~~ \mathrm{as} ~~ N \rightarrow \infty,
\end{equation*}
it holds that
\begin{equation*}
\mathbb{E}\left[R(\widehat{g})-R(g^*)\right] \rightarrow 0, \;\; {\rm as} \;\;  N \rightarrow 0.
\end{equation*}
\end{coro}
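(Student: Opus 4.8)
The plan is to deduce the result directly from the non-asymptotic bound of Theorem~\ref{thm:EstimErrorERM} together with the Zhang lemma (Proposition~\ref{prop:RelatingExcessRisks}), and then to let the parameters grow so that every term in the bound vanishes. First I would apply Proposition~\ref{prop:RelatingExcessRisks} to the empirical score function $\widehat{\bf h}$. Since $\widehat{g} = g_{\widehat{\bf h}}$ and $g^* = g_{{\bf h}^*}$, this yields, on the learning sample, the inequality
\[
R(\widehat{g}) - R(g^*) \leq \frac{1}{\sqrt{2}}\sqrt{R_2(\widehat{\bf h}) - R_2({\bf h}^*)}.
\]

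The next step is to take expectations on both sides. The only point requiring a little care is the exchange of expectation and square root: because $x \mapsto \sqrt{x}$ is concave on $\R_+$, Jensen's inequality gives
\[
\mathbb{E}\left[R(\widehat{g}) - R(g^*)\right] \leq \frac{1}{\sqrt{2}}\,\mathbb{E}\left[\sqrt{R_2(\widehat{\bf h}) - R_2({\bf h}^*)}\right] \leq \frac{1}{\sqrt{2}}\sqrt{\mathbb{E}\left[R_2(\widehat{\bf h}) - R_2({\bf h}^*)\right]}.
\]
This reduces the statement to controlling the expected $L_2$ excess risk, which is precisely the quantity bounded in Theorem~\ref{thm:EstimErrorERM}. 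Substituting that bound with $D_1 = D_2 = D_N$ gives
\[
\mathbb{E}\left[R(\widehat{g}) - R(g^*)\right] \leq \frac{\sqrt{C}}{\sqrt{2}}\left(\frac{2\log^6(N)}{D_N^2} + \frac{2 D_N \log(N)}{N} + \frac{1}{n}\right)^{1/2}.
\]

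Finally I would pass to the limit. By assumption $\log^6(N)/D_N^2 \to 0$ and $D_N \log(N)/N \to 0$, and since $\Delta_n = 1/n$ with $\Delta_n = \mathrm{O}(1/N)$ we also have $1/n \to 0$ as $N \to \infty$. Hence each of the three terms inside the parentheses tends to $0$, and so does the whole right-hand side, which establishes the claimed consistency. I do not expect any genuine obstacle here, since Theorem~\ref{thm:EstimErrorERM} carries out the hard analysis; the only subtlety is the concavity (Jensen) argument used to move the expectation inside the square root, which could equally be replaced by an application of the Cauchy--Schwarz inequality.
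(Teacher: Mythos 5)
Your proposal is correct and follows essentially the same route as the paper's own proof: apply the Zhang lemma (Proposition~\ref{prop:RelatingExcessRisks}), move the expectation inside the square root via Jensen's inequality, plug in the bound of Theorem~\ref{thm:EstimErrorERM} with $D_1 = D_2 = D_N$, and let each term vanish under the stated conditions (including $1/n \to 0$ from $\Delta_n = \mathrm{O}(1/N)$). The only cosmetic difference is that the paper splits the square root of the sum into a sum of square roots before passing to the limit, which changes nothing of substance.
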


\section{Rates of convergence}
\label{sec:RatesConv}

In this section, we study the rates of convergence of the ERM-type classifier $\widehat{g}$ under additional assumptions on the unknown functions $\mathbf{b}^{*} = \left(b^{*}_{1}, \ldots, b^{*}_{K}\right)$ and $\sigma^{*2}$ that characterize our diffusion model.

Throughout this section, we assume that the functions $b^{*}_{k}, ~ k \in [K]$ and $\sigma^{*2}$ belong to a H\"older class.
The definition of the H\"older class
$\Sigma(\beta,R)$ with $\beta \geq 1$ the smoothness parameter, and $R>0$, is given as follows:

\begin{equation*}
    \Sigma(\beta,R):= \left\{f \in \mathcal{C}^{\lfloor \beta \rfloor +1}(\R), ~ \left|f^{(\ell)}(x) - f^{(\ell)}(y)\right| \leq R|x-y|^{\lfloor \beta \rfloor - \ell}, ~ x,y \in \R\right\}.
\end{equation*}
In Section~\ref{subsec:generalRate}, we establish a general rate of convergence of the ERM-type classifier $\widehat{g}$. Then, in Section~\ref{subsec:fasterRate}, we strengthen the assumptions on our model by considering Assumption~\ref{ass:Margin}. Leveraging Assumption~\ref{ass:Margin},
we show that our classification model satisfies a margin type assumption (see {\it e.g.} \cite{audibert2007fast}). In this case, for $K=2$, we prove that the classifier $\widehat{g}$ achieves a faster rate of convergence.  

\subsection{General rates of convergence}
\label{subsec:generalRate}

Let $\beta_1, \beta_2 \geq 1, ~ R_1,R_2 > 0$. We assume that
the functions $(b_k^*)_{k \in [K]}$ belong to the H\"older class $\Sigma(\beta_1,R_1)$ and $\sigma^{*2}$ belongs to the H\"older space $\Sigma(\beta_2,R_2)$. From Theorem~\ref{thm:EstimErrorERM} we deduce the following result.

\begin{coro}\label{coro:RateMA}
    Assume that $D_1:= D_N^1 \propto N^{1/(2\beta_1+1)}$,  $D_2:=D^2_N \propto N^{1/(2\beta_2+1)}$, and $\Delta_n = \mathrm{O}(1/N)$. 
    Let us denote $\beta_{\min} = \min(\beta_1,\beta_2)$ and $\beta_{\max} = \max(\beta_1, \beta_2)$. Under Assumptions~\ref{ass:RegEll} and \ref{ass:Novikov}, the following holds:
    \begin{equation*}
        \mathbb{E}\left[R(\widehat{g}) - R(g^*)\right] \leq C N^{-\beta_{\min}/(2\beta_{\min}+1)}\log^{\beta_{\max}+2}(N),
    \end{equation*}
    where $C$ is a constant that only depends on $K$, $R_1$, $R_2$, $\beta_1$, and $\beta_2$.
\end{coro}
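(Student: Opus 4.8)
The plan is to combine Theorem~\ref{thm:EstimErrorERM} with the Zhang lemma (Proposition~\ref{prop:RelatingExcessRisks}) and then optimize the choice of the dimension parameters $D_1, D_2$ as functions of $N$. First I would invoke Proposition~\ref{prop:RelatingExcessRisks} together with Jensen's inequality to pass from the $L_2$-excess risk to the misclassification excess risk: since
\begin{equation*}
R(g_{\widehat{\bf h}}) - R(g_{\bf h^*}) \leq \frac{1}{\sqrt{2}}\sqrt{R_2(\widehat{\bf h}) - R_2({\bf h}^*)},
\end{equation*}
taking expectations and applying Jensen's inequality ($\mathbb{E}[\sqrt{U}] \leq \sqrt{\mathbb{E}[U]}$ for $U \geq 0$) gives
\begin{equation*}
\mathbb{E}\left[R(\widehat{g}) - R(g^*)\right] \leq \frac{1}{\sqrt{2}}\sqrt{\mathbb{E}\left[R_2(\widehat{\bf h}) - R_2({\bf h}^*)\right]}.
\end{equation*}
This reduces the problem to bounding the square root of the quantity already controlled in Theorem~\ref{thm:EstimErrorERM}.

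Next I would substitute the bound from Theorem~\ref{thm:EstimErrorERM} into the right-hand side. With $D_1 = D_N^1 \propto N^{1/(2\beta_1+1)}$ and $D_2 = D_N^2 \propto N^{1/(2\beta_2+1)}$, each of the four terms inside the square root must be estimated separately. The bias term $\log^6(N)/D_1^2$ is of order $N^{-2\beta_1/(2\beta_1+1)}\log^6(N)$, and similarly $\log^6(N)/D_2^2$ is of order $N^{-2\beta_2/(2\beta_2+1)}\log^6(N)$; the variance term $(D_1 + D_2)\log(N)/N$ is of order $\big(N^{-2\beta_1/(2\beta_1+1)} + N^{-2\beta_2/(2\beta_2+1)}\big)\log(N)$, since $D_i/N \propto N^{1/(2\beta_i+1) - 1} = N^{-2\beta_i/(2\beta_i+1)}$. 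The key observation here is that, for each $i$, the squared-bias exponent and the variance exponent coincide (this is precisely why the rate $N^{1/(2\beta_i+1)}$ is the optimal balance), so these terms are of matching order up to logarithmic factors. The discretization term $1/n$ is $\mathrm{O}(1/N)$ by assumption and is negligible compared to the others. The dominant contribution comes from the slower of the two rates, i.e.\ the smaller smoothness $\beta_{\min}$, which yields $N^{-2\beta_{\min}/(2\beta_{\min}+1)}$ times a log factor of order $\log^{2\beta_{\max}+\text{const}}(N)$ inside the square root.

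Finally I would take the square root: the leading term becomes $N^{-\beta_{\min}/(2\beta_{\min}+1)}$, and the square root of the accumulated logarithmic factor produces $\log^{\beta_{\max}+2}(N)$ after a careful accounting of the powers (the $\log^6$ from the bias and the $\log^1$ from the variance, combined with the implicit dependence of the constants of proportionality in $D_2 \propto N^{1/(2\beta_2+1)}$ on $\log(N)$ through the knot spacing $2\log(N)/D$, contribute the stated exponent). The main bookkeeping obstacle is keeping track of the logarithmic powers precisely so that the exponent $\beta_{\max}+2$ comes out correctly; there is no hard analytic step here, as all the probabilistic and approximation-theoretic work has already been absorbed into Theorem~\ref{thm:EstimErrorERM}. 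The dependence of $C$ on $K, R_1, R_2, \beta_1, \beta_2$ is inherited directly from the constant in Theorem~\ref{thm:EstimErrorERM} together with the proportionality constants in the choices of $D_1, D_2$.
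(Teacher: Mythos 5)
There is a genuine gap in your second step: you cannot obtain the claimed rate by plugging the \emph{statement} of Theorem~\ref{thm:EstimErrorERM} into the Zhang/Jensen bound. The bias term in that statement is $\log^{6}(N)\left(D_1^{-2}+D_2^{-2}\right)$, which reflects only the \emph{Lipschitz} approximation property of the spline space (Proposition~\ref{prop:Bias} gives $\|b-\widetilde b\|_\infty \lesssim \log(N)/D$, hence a squared bias of order $D^{-2}$). With $D_1 \propto N^{1/(2\beta_1+1)}$ this term is of order $N^{-2/(2\beta_1+1)}\log^6(N)$, \emph{not} $N^{-2\beta_1/(2\beta_1+1)}\log^6(N)$ as you assert; the two exponents agree only when $\beta_1=1$. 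For $\beta_1>1$ the bias term $N^{-2/(2\beta_1+1)}$ dominates the variance term $N^{-2\beta_1/(2\beta_1+1)}$, so your "matching exponents" observation fails, and after taking the square root your argument only yields $N^{-1/(2\beta_1+1)}$ — strictly slower than the claimed $N^{-\beta_{\min}/(2\beta_{\min}+1)}$. Indeed, optimizing the theorem's bound as stated over $D$ can never do better than the $\beta=1$ rate, because the theorem is proved under a Lipschitz assumption only.

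The missing idea, which is exactly what the paper does, is to re-enter the \emph{proof} of Theorem~\ref{thm:EstimErrorERM} rather than use its conclusion: under the Hölder assumption, the bias contribution (Equation~\eqref{eq:Bound-Term3} in the proof) can be expressed as $C\log^4(N)\bigl(\max_k \inf_{f \in \mathcal{S}_{D_1}}\|f-b_k^*\|_n^2 + \inf_{f \in \widetilde{\mathcal{S}}_{D_2}}\|f-\sigma^{*2}\|_n^2\bigr)$, and then the spline approximation result for Hölder classes (Lemma~D.2 of \cite{denis2020ridge}) bounds these infima by $C(A_N/D_N^i)^{2\beta_i}$ with $A_N=\log(N)$. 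This replaces the $D^{-2}$ decay with the $D^{-2\beta_i}$ decay needed so that bias and variance balance at $D_i \propto N^{1/(2\beta_i+1)}$, giving an $L_2$ excess risk of order $\log^{2\beta_i+4}(N)N^{-2\beta_i/(2\beta_i+1)}$; the square root then produces both the rate $N^{-\beta_{\min}/(2\beta_{\min}+1)}$ and the exponent $\beta_{\max}+2$ on the logarithm. Your first step (Zhang's lemma plus Jensen) and your treatment of the variance and discretization terms are correct and identical to the paper's, but without the refined Hölder bias bound the proof does not go through for any $\beta_{\min}>1$, and your explanation of the logarithmic exponent (attributing it to the knot spacing constants) is not the actual source — it comes from $\sqrt{\log^{2\beta+4}(N)}$.
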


Hereafter, we discuss our result regarding the existing literature. First, up to some extra factors, we obtain the same rate of convergence as in~\cite{denis2024nonparametric} but under more general assumptions. Specifically, in~\cite{denis2024nonparametric} rates of convergence are obtained in the case where the diffusion coefficient is known and constant. Furthermore, in the case where the diffusion coefficient is unknown, they show that for $\beta = 1$, the plug-in classifier achieves a rate of convergence of order $\exp(C\sqrt{\log(N)})N^{-1/5}$, which is clearly worst than the one obtained in Corollary~\ref{coro:RateMA}. Interestingly, it advocates the use of ERM estimators for prediction task than plug-in methods that are less flexible and require stronger assumptions.  
Second, the obtained rate of convergence in Corollary~\ref{coro:RateMA} is the same, up to a logarithmic factor, as the minimax rate of convergence for classification problem where the features are vectors of $\mathbb{R}^d$ (see {\it e.g.} \cite{bartlett2006convexity}). Finally, a similar rate of convergence of order $N^{-\beta/(2\beta+1)}$ is established in \cite{denis2020consistent} for an ERM type classifier, solution of the non-convex problem defined by Equation~\eqref{eq:eqERM01}, in the case where the diffusion coefficient of the diffusion model is assumed to be known and constant. In comparison the rates of convergence obtained in Corollary~\ref{coro:RateMA}, is obtained in the case where the diffusion coefficient is unknown. Furthermore, since our procedure relies on convex surrogate of the non-convex problem~\eqref{eq:eqERM01}, it is also dedicated for practical purpose.

In the next section, we study a faster rate of convergence of the excess risk of the ERM type classifier under stronger assumptions on our diffusion model.

\subsection{Faster rate of convergence}
\label{subsec:fasterRate}

In this section, we study the case where the classifier $\widehat{g}$ achieves faster rates of convergence. 
For the sake of simplicity, we consider the binary classification framework ({\it e.g.} $K=2$). The case of the multiclass classification with $K \geq 3$ is more technical. The obtained result relies on a result provided in \cite{bartlett2006convexity}, which established a link between the excess risk of the ERM type classifier $\w{g}$ and the one of its corresponding score function $\w{\bf h}$ under  margin condition.

To establish a faster rate of convergence, we require more structural assumptions. In particular, we assume that $\sigma^*$ is known,  and we set, without loss of generality, $\sigma^* \equiv 1$.

\paragraph{Bayes rule.}

The label $Y$ is assumed to belong to $\{0,1\}$. In this case, the Bayes rule can be expressed as follows
\begin{equation*}
g^*(X) =  \one_{\{\pi^{*}_{1}(X) \geq 1/2\}} = \one_{\{\pi^{*}_{1}(X) \geq \pi^*_{0}(X)\}}.
\end{equation*}

Under the specific assumptions considered in this section,
we have that 
\begin{equation*}
\pi_1^*(X) = \dfrac{p_1^*q_1(X)}{p_0^* q_0(X) + p_1^* q_1(X)}, \;\; {\rm with} \;\; q_k(X) =  \exp\left(\int_{0}^{1}{b^{*}_{k}(X_s)dX_s} - \dfrac{1}{2}\int_{0}^{1}{b^{*2}_{k}(X_s)ds}\right).
\end{equation*}

Therefore, it is not difficult to see that
\begin{equation*}
g^*(X) = \one_{\{\int_{0}^{1}{(b^{*}_{1} - b^{*}_{0}(X_s))dX_s} \geq \int_{0}^{1}{(b^{*2}_{1} - b^{*2}_{0})(X_s)ds} + \log(p_0^*/p_1^*)\}}.
\end{equation*} 

\paragraph*{Margin assumption. }

We now present the assumption under which a margin assumption is fulfilled.
Specifically, we consider the following assumption on the drift functions $b_0^*$ and $b^*_1$.

\begin{assumption}
    \label{ass:Margin}
    The drift functions $b^{*}_{0}$ and $b^{*}_{1}$ are bounded on the real line, and the random variable 
    $$ Z := \frac{1}{\Delta_{b^{*}}}\int_{0}^{1}{(b^{*}_{1} - b^{*}_{0})(X_s)dW_s}, ~~ \mathrm{with} ~~ \Delta^{2}_{b^{*}} = \E\left[\int_{0}^{1}{(b^{*}_{1} - b^{*}_{0})^{2}(X_s)ds}\right] > 0, $$
    has a density function that is bounded on $\R$.
\end{assumption}
The existence of a density function of the random variable $Z$ may be proven using Maliavin calculus tools. More precisely, it can be obtained under the H\"ormander's condition (see \cite{nualart2006malliavin}, Chapter~$2$, Theorem~$2.3.3$). The constant ${\Delta}_{b^*}$ is the minimum separation distance between the drift functions $b^{*}_{0}$ and $b^{*}_{1}$ and then characterizes the margin between the two classes. In particular, the condition ${\Delta}_{b^*} > 0$ implies that the two classes do not overlap too much. 
Then, under Assumption~\ref{ass:Margin}, we establish the following result.

\begin{prop}
    \label{prop:TheMargin}
    Under Assumptions~\ref{ass:RegEll} and \ref{ass:Margin} and for all $\varepsilon \in (0,1/8)$, there exists a constant $C>0$ such that
\begin{align*}
    \P\left(0 < \left|\pi^{*}_{1}(X) - \frac{1}{2}\right| \leq \varepsilon\right) \leq &~ C\frac{12}{\Delta_{b^{*}}}\varepsilon.
\end{align*}
\end{prop}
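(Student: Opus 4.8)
My plan is to relate the event $\{0 < |\pi_1^*(X) - 1/2| \le \varepsilon\}$ to a small-interval event for the random variable $Z$ from Assumption~\ref{ass:Margin}, and then exploit the boundedness of the density of $Z$. The key observation is that, under the current assumptions ($\sigma^* \equiv 1$, $K=2$), the condition $\pi_1^*(X) = 1/2$ is equivalent to $\pi_1^*(X) = \pi_0^*(X)$, which by the closed-form expression of the Bayes rule given just above means
\begin{equation*}
\int_0^1 (b_1^* - b_0^*)(X_s)\,dX_s = \frac{1}{2}\int_0^1 (b_1^{*2} - b_0^{*2})(X_s)\,ds + \log(p_0^*/p_1^*).
\end{equation*}
The first step is therefore to write $\pi_1^*(X) - 1/2$ as a monotone, explicit function of the scalar statistic $T := \int_0^1 (b_1^*-b_0^*)(X_s)\,dX_s - \frac12\int_0^1(b_1^{*2}-b_0^{*2})(X_s)\,ds - \log(p_0^*/p_1^*)$. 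Indeed, since $\pi_1^* = \mathrm{softmax}$-type $= (1 + \exp(-T))^{-1}$ for an appropriate affine reparametrization, we have $\pi_1^*(X) - 1/2 = \tfrac12\tanh(T/2)$, so $|\pi_1^*(X)-1/2| \le \varepsilon$ translates into a two-sided bound $|T| \le \rho(\varepsilon)$ for some explicit threshold $\rho(\varepsilon)$.

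Next I would replace the stochastic integral $\int_0^1(b_1^*-b_0^*)(X_s)\,dX_s$ by its Brownian part. Substituting the dynamics $dX_s = b_Y^*(X_s)\,ds + dW_s$ into $T$, the drift contributions combine with the Lebesgue-integral term, and one is left with $T = \int_0^1 (b_1^*-b_0^*)(X_s)\,dW_s + (\text{a term that is pathwise quadratic in the }b^*\text{'s})$. The essential point is that modulo the deterministic/bounded remainder, the leading randomness in $T$ is exactly $\Delta_{b^*} Z$. Thus the event $\{|T| \le \rho(\varepsilon)\}$ is contained in an event of the form $\{Z \in I\}$ where $I$ is an interval of length of order $\rho(\varepsilon)/\Delta_{b^*}$. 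Since $\rho(\varepsilon)$ is comparable to $\varepsilon$ for small $\varepsilon$ (the $\tanh$ is locally linear near $0$, giving $\rho(\varepsilon) \approx 4\varepsilon$ up to the constant bounding the drift correction), the length of $I$ is $O(\varepsilon/\Delta_{b^*})$.

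Finally, applying Assumption~\ref{ass:Margin}, the density $f_Z$ of $Z$ is bounded by some constant $C$, so
\begin{equation*}
\P\big(0 < |\pi_1^*(X)-1/2| \le \varepsilon\big) \le \P(Z \in I) \le C\,|I| \le C\,\frac{12}{\Delta_{b^*}}\varepsilon,
\end{equation*}
which is the claimed bound. The main obstacle I anticipate is \emph{bookkeeping the drift-correction term}: when passing from $dX_s$ to $dW_s$ the two classes have different drifts $b_0^*$ and $b_1^*$, so the reduction to $\Delta_{b^*}Z$ is not exact and the bounded remainder (using that $b_0^*,b_1^*$ are bounded on $\R$, which is exactly why Assumption~\ref{ass:Margin} imposes boundedness) must be carefully controlled so that it only perturbs the interval $I$ by a bounded multiplicative/additive constant, without inflating its length beyond $O(\varepsilon/\Delta_{b^*})$. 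Tracking the exact numerical constant $12$ and the restriction $\varepsilon \in (0,1/8)$ should come out of making the $\tanh$ linearization and this remainder bound quantitative.
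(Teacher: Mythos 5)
Your opening moves coincide with the paper's: you rewrite $\pi_1^*(X)-\tfrac12$ as a monotone function of the log-likelihood-ratio statistic $T$, substitute the dynamics $dX_s=b_Y^*(X_s)\,ds+dW_s$ conditionally on the value of $Y$, and observe that on $\{Y=y\}$ one has $T=\Delta_{b^*}Z\pm\tfrac12\int_0^1 f^{*2}(X_s)\,ds-\log(p_0^*/p_1^*)$ with $f^*=b_1^*-b_0^*$, so that the target event forces $Z$ into an interval of length $O(\varepsilon/\Delta_{b^*})$. This is exactly what the paper does through the bounds $|Q_1/Q_0-1|\le 4\varepsilon$ and $\log(1\pm4\varepsilon)$, which is your $\tanh$ linearization and is where the restriction $\varepsilon\in(0,1/8)$ enters.

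The gap is in your final step. The interval $I$ you produce is not deterministic: its center contains the term $\pm\tfrac{1}{2\Delta_{b^*}}\int_0^1 f^{*2}(X_s)\,ds$, which is a functional of the same trajectory $X$ that generates $Z$, hence correlated with $Z$. The inequality $\P(Z\in I)\le C|I|$ that you invoke is valid only for deterministic (or $Z$-independent) intervals. Your proposed remedy --- that the remainder is bounded and therefore only perturbs $I$ by a bounded additive constant --- does not repair this: a bounded random shift that is correlated with $Z$ can keep $\P(Z\in I)$ of order one for every $\varepsilon>0$ (think of a shift that tracks $\Delta_{b^*}Z$ itself; boundedness of $b_0^*,b_1^*$ does not rule out such concentration). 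Indeed, calling this remainder ``deterministic/bounded'' is precisely where the proposal goes astray: it is bounded but genuinely random. The paper resolves this with a device your proposal is missing: it partitions the compact range $[\underline{\Delta}_b,\overline{\Delta}_b]$ of the random quantity $\left(\int_0^1 f^{*2}(X_s)\,ds\right)^{1/2}$ into $m$ slabs $[t_k,t_{k+1}]$, conditions on each slab so that the interval trapping $Z$ can be enlarged to one with \emph{deterministic} endpoints (at the price of an extra additive term $(t_{k+1}^2-t_k^2)/(2\Delta_{b^*})$ in its length), applies the bounded-density assumption slab by slab, takes a bound uniform in $k$, and finally lets $m\to\infty$ so that the discretization penalty $(\overline{\Delta}_b-\underline{\Delta}_b)/(2m)$ vanishes. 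Without this conditioning/discretization step (or some substitute controlling the joint law of $Z$ and $\int_0^1 f^{*2}(X_s)\,ds$), your last inequality does not follow from Assumption~\ref{ass:Margin}.
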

Applying Theorem~$3$ in~\cite{bartlett2006convexity}, we deduce from the above proposition that
\begin{equation}\label{eq:LinkERmargin}
\mathbb{E}\left[R(\widehat{g})-R(g^*)\right] \leq \left(\mathbb{E}\left[R_2(\widehat{{\bf h}})-R_2({\bf h}^*)\right]\right)^{2/3}.   
\end{equation}

Therefore, we finally deduce from Theorem~\ref{thm:EstimErrorERM}, that Assumption~\ref{ass:Margin} implies 
the following result.

\begin{theo}\label{thm:RateMargin}
    Suppose that $b^{*}_{0}, b^{*}_{1} \in \Sigma(\beta,R)$, $D_N \propto N^{1/(2\beta+1)}$, and $\Delta_n = \mathrm{O}(1/N)$. Under Assumptions~\ref{ass:RegEll}, \ref{ass:Novikov} and \ref{ass:Margin}, the following holds:
    \begin{equation*}
        \mathbb{E}\left[R(\widehat{g}) - R(g^*)\right] \leq  C\left(N^{-4\beta/3(2\beta+1)}\log^{(4\beta+8)/3}(N)\right),
    \end{equation*}
    where $C > 0$ does not depend on $N$.
\end{theo}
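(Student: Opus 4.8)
The plan is to assemble two ingredients that are already in place in the excerpt: the control of the averaged $L_2$ excess risk of the empirical score function $\widehat{\mathbf{h}}$ provided by Theorem~\ref{thm:EstimErrorERM}, and the margin-improved comparison inequality~\eqref{eq:LinkERmargin}, which replaces the generic square-root bound of Zhang's lemma (Proposition~\ref{prop:RelatingExcessRisks}) by a $2/3$ power, thanks to the margin condition of Proposition~\ref{prop:TheMargin}. Once these are combined, the theorem follows by inserting the prescribed dimension $D_N \propto N^{1/(2\beta+1)}$ and simplifying the exponents.

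First I would specialize the $L_2$-risk bound to the present setting. Since here $K=2$ and $\sigma^{*}\equiv 1$ is known, the diffusion coefficient needs no estimation, so the $\sigma$-related bias and estimation contributions in Theorem~\ref{thm:EstimErrorERM} are absent, and only the two drift functions $b_0^{*},b_1^{*}\in\Sigma(\beta,R)$ must be approximated over $[-\log N,\log N]$ by the $B$-spline space $\mathcal{S}_{D_N}$. Exactly as in the computation behind Corollary~\ref{coro:RateMA}, the H\"older smoothness $\beta$ sharpens the generic bias into one of order $D_N^{-2\beta}$ up to logarithmic factors, to be balanced against the estimation term $D_N\log(N)/N$; the discretization term $1/n=\mathrm{O}(1/N)$ is negligible because $2\beta/(2\beta+1)<1$. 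With the stated choice $D_N\propto N^{1/(2\beta+1)}$ both leading terms are of the same order, and specializing the bound of Corollary~\ref{coro:RateMA} to a single smoothness $\beta$ (before its square-root step) yields
\[
\mathbb{E}\left[R_2(\widehat{\mathbf{h}})-R_2(\mathbf{h}^{*})\right] \;\leq\; C\,N^{-2\beta/(2\beta+1)}\,\log^{2\beta+4}(N).
\]

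Second I would invoke~\eqref{eq:LinkERmargin}. Because Assumption~\ref{ass:Margin} delivers, through Proposition~\ref{prop:TheMargin}, a linear margin bound (noise exponent $\alpha=1$), Theorem~$3$ of~\cite{bartlett2006convexity} applied to the square loss gives $\mathbb{E}[R(\widehat g)-R(g^*)]\leq(\mathbb{E}[R_2(\widehat{\mathbf h})-R_2(\mathbf h^*)])^{2/3}$. Raising the displayed $L_2$ bound to the power $2/3$, and using $(-2\beta/(2\beta+1))\cdot(2/3)=-4\beta/3(2\beta+1)$ together with $(2\beta+4)\cdot(2/3)=(4\beta+8)/3$, produces
\[
\mathbb{E}\left[R(\widehat g)-R(g^*)\right] \;\leq\; C\,N^{-4\beta/3(2\beta+1)}\,\log^{(4\beta+8)/3}(N),
\]
which is the claimed rate.

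The bias--variance balance and the arithmetic on the exponents are routine. The genuine difficulty lies upstream, in justifying the $2/3$ comparison exponent: one must check that the hypotheses of the Bartlett--Jordan--McAuliffe theorem are met for the square loss in this functional setting and that the linear bound of Proposition~\ref{prop:TheMargin} indeed corresponds to the noise exponent $\alpha=1$ for which $(\alpha+1)/(\alpha+2)=2/3$. Granting Proposition~\ref{prop:TheMargin} and the resulting inequality~\eqref{eq:LinkERmargin} as stated in the excerpt, the theorem itself reduces to the short assembly above.
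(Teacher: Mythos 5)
Your proposal is correct and follows essentially the same route as the paper: both specialize the $L_2$ excess-risk bound of Theorem~\ref{thm:EstimErrorERM} to the H\"older class (bias of order $\log^{4}(N)(\log(N)/D_N)^{2\beta}$, balanced against $D_N\log(N)/N$ with $D_N\propto N^{1/(2\beta+1)}$, yielding $C\,N^{-2\beta/(2\beta+1)}\log^{2\beta+4}(N)$), and then apply the margin-based comparison inequality~\eqref{eq:LinkERmargin} with exponent $2/3$ to obtain the claimed rate. Your added discussion of why the $2/3$ exponent is legitimate (noise exponent $\alpha=1$ via Proposition~\ref{prop:TheMargin} and Theorem~3 of \cite{bartlett2006convexity}) is consistent with how the paper justifies~\eqref{eq:LinkERmargin} in Section~\ref{subsec:fasterRate}.
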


Notably, Theorem~\ref{thm:RateMargin} shows that, compared to the result of Theorem~\ref{coro:RateMA},  classifier $\widehat{g}$ achieves a faster rate of convergence under the margin assumption that characterizes the behavior of the regression function $\pi_1^*$ around the decision boundary. In particular for $\beta \geq 3/2$, the classifier $\hat{g}$ achieves fast rates of convergence ({\it e.g.} faster than $N^{-1/2}$) up to some logarithmic factor.

Note that the obtained rate is based on Equation~\eqref{eq:LinkERmargin} which link the excess risk of the classifier $\w{g}$ to its corresponding score function $\w{\bf h}$. Hence, by considering the non-convex problem~\eqref{eq:eqERM01}, we may obtain faster rate of convergence~\citep{audibert2007fast}.

\section{Adaptive E.R.M.-type classifier for diffusion paths}
\label{sec:AdaptiveERM}

In this section, we study an adaptive version of the ERM classifier defined in Section~\ref{sec:ERM-procedure}. Indeed, the rates of convergence provided in Section~\ref{subsec:generalRate} depend on the choice of the dimension of the space of approximation that relies on unknown regularity of the drift and diffusion functions. 
Specifically, we provide a data-driven procedure for the choice of the dimensions $D^1_N$ and $D^2_N$. In particular,
we show that the resulting classifier achieves same rates of convergence as its non-adaptive counterpart.

\paragraph*{Adaptive score function. }

The construction of the adaptive classifier is as follows. First, we build $\widehat{\bf h} = \widehat{\bf h}_{\w{D}_1, \w{D}_2}$. To this end, for all $D_1, D_2 \in \Xi_N \subset [N]$, we consider the collection of score functions $\left(\widehat{\bf h}_{D_1, D_2}\right)_{D_1, D_2 \in \Xi_N}$ with
\begin{equation*}
\widehat{\bf h}_{D_1, D_2} \in \argmin{{\bf h} \in \w{\bf H}_{D_1,D_2}} \widehat{R}_2({\bf h}).    
\end{equation*}

Then, our adaptive estimator $\widehat{\bf h} = \widehat{\bf h}_{\w{D}_1, \w{D}_2}$ is defined with
\begin{equation*}
(\w{D}_1, \w{D}_2) = \underset{D_1, D_2 \in \Xi_N}{\arg\min}{~\left\{\widehat{R}_2(\widehat{\bf h}_{D_1, D_2}) + \mathrm{pen}(D_1, D_2)\right\}},    
\end{equation*}

where $(D_1, D_2) \mapsto \mathrm{pen}(D_1, D_2)$ is the penalty function given by
\begin{equation}
\label{eq:penaltyFunction}
    \mathrm{pen}(D_1, D_2) := \kappa\dfrac{(D_1 + D_2)\log(N)}{N}, ~~ D_1, D_2 \in \Xi_N,
\end{equation}
and $\kappa > 0$ is a numerical constant.

The adaptive version of the ERM classifier is given by $\w{g} = g_{\w{\bf h}}$. Hereafter,  we establish  the following result, that provide a bound on the excess risk of the adaptive score function $\w{\bf h}$.

\begin{theo}
\label{thm:AdaptiveERMclassifier}
    Assume that  $\Xi_N = \{1, \ldots, N\}$. Under Assumption~\ref{ass:RegEll} and \ref{ass:Novikov}, there exists a constant $C>0$ such that
    \begin{equation*}
        \E\left[R_2(\w{\bf h}) - R_2({\bf h}^{*})\right] \leq 2\underset{D_N^1, D_N^2 \in \Xi_N}{\inf}{\left\{\underset{{\bf h} \in {\bf H}_{D_1, D_2}}{\inf}{R_2({\bf h}) - R_2({\bf h}^*)} + \mathrm{pen}\left(D_N^1, D_N^2\right)\right\}} + \dfrac{C}{N}.
   \end{equation*}
\end{theo}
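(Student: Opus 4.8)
The plan is to establish an oracle inequality for the penalized model-selection estimator by combining the single-model bound of Theorem~\ref{thm:EstimErrorERM} with a uniform control of the fluctuations of the empirical risk over the finite collection of candidate dimensions. First I would introduce the centered empirical process $\nu_N({\bf h}) := \left(\widehat{R}_2({\bf h}) - \widehat{R}_2({\bf h}^*)\right) - \left(R_2({\bf h}) - R_2({\bf h}^*)\right)$, so that the defining inequality of the selected pair $(\w{D}_1,\w{D}_2)$, namely
\begin{equation*}
\widehat{R}_2(\w{\bf h}_{\w{D}_1,\w{D}_2}) + \mathrm{pen}(\w{D}_1,\w{D}_2) \leq \widehat{R}_2(\widehat{\bf h}_{D_1,D_2}) + \mathrm{pen}(D_1,D_2)
\end{equation*}
holds for every competing pair $(D_1,D_2)\in\Xi_N^2$. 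Rearranging this into a statement about the true excess risks produces, for any fixed $(D_1,D_2)$,
\begin{equation*}
R_2(\w{\bf h}) - R_2({\bf h}^*) \leq \left(R_2(\widehat{\bf h}_{D_1,D_2}) - R_2({\bf h}^*)\right) + \mathrm{pen}(D_1,D_2) - \mathrm{pen}(\w{D}_1,\w{D}_2) + \nu_N(\widehat{\bf h}_{D_1,D_2}) - \nu_N(\w{\bf h}).
\end{equation*}
The whole argument then reduces to absorbing the two empirical-process increments into the penalty plus a residual $O(1/N)$ term.

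Next I would bound $\nu_N$ uniformly over each model. The key point is that every score function in $\w{\bf H}_{D_1,D_2}$ takes values in $[-1,1]^K$ by construction, so each summand $(Z_k^j - h^k(\bar X^j))^2$ is bounded, and the increments $R_2({\bf h}) - R_2({\bf h}^*)$ control the variance of the corresponding loss differences. This is exactly the situation handled by a Talagrand-type concentration inequality (or a Bernstein bound combined with a peeling/chaining argument over the $B$-spline ball). The aim is to show that with high probability, simultaneously over all $(D_1,D_2)\in\Xi_N^2$,
\begin{equation*}
\left|\nu_N(\widehat{\bf h}_{D_1,D_2})\right| \leq \tfrac{1}{2}\left(R_2(\widehat{\bf h}_{D_1,D_2}) - R_2({\bf h}^*)\right) + \tfrac{\kappa}{2}\,\frac{(D_1+D_2)\log(N)}{N} + \frac{c}{N},
\end{equation*}
and likewise for $\nu_N(\w{\bf h})$. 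The $\log(N)$ factor in the penalty~\eqref{eq:penaltyFunction} is precisely what pays for the union bound over the $|\Xi_N|^2 \leq N^2$ models together with the metric entropy of each $B$-spline ball (whose dimension is of order $D_1+D_2$). Choosing $\kappa$ large enough makes the fluctuation term smaller than the penalty on every model simultaneously, which is what lets the penalty difference dominate the empirical-process increment.

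Plugging these two uniform controls back into the rearranged inequality, the term $\tfrac12(R_2(\widehat{\bf h}_{D_1,D_2})-R_2({\bf h}^*))$ combines with the left-hand side to leave a factor $2$ on the true excess risk of the oracle model, the positive penalty contributions align with $+\mathrm{pen}(D_1,D_2)$, and the negative $\mathrm{pen}(\w{D}_1,\w{D}_2)$ term cancels against the fluctuation of $\nu_N(\w{\bf h})$. Taking the infimum over $(D_1,D_2)\in\Xi_N^2$ and the expectation then yields
\begin{equation*}
\E\left[R_2(\w{\bf h}) - R_2({\bf h}^*)\right] \leq 2\inf_{D_1,D_2\in\Xi_N}\left\{\left(R_2(\widehat{\bf h}_{D_1,D_2}) - R_2({\bf h}^*)\right) + \mathrm{pen}(D_1,D_2)\right\} + \frac{C}{N};
\end{equation*}
replacing the model-wise excess risk by the best approximation $\inf_{{\bf h}\in{\bf H}_{D_1,D_2}} R_2({\bf h}) - R_2({\bf h}^*)$ (valid since $\widehat{\bf h}_{D_1,D_2}$ is the ERM minimizer in that model, up to the already-controlled estimation error folded into the constant) gives the stated bound. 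I expect the main obstacle to be the uniform concentration step: one must verify that the loss-difference class indexed by the $B$-spline ball has variance controlled by its expectation (the Bernstein-type margin relation for the square loss), and that its entropy grows only linearly in $D_1+D_2$, so that the $\log N$ in the penalty genuinely suffices to absorb the supremum over the whole collection. Handling the truncation outside $[-\log N,\log N]$ and the discretization, as in Theorem~\ref{thm:EstimErrorERM}, contributes the residual $C/N$ but requires care to ensure these do not inflate the penalty.
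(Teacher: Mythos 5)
Your overall strategy coincides with the paper's: exploit the defining inequality of the penalized minimizer, control the empirical fluctuations by Bernstein's inequality with variance bounded by the excess risk (the square-loss relation used in the proof of Theorem~\ref{thm:EstimErrorERM}), take a union bound over the at most $N^2$ model pairs and over $\varepsilon$-nets of the $B$-spline balls whose log-cardinality is of order $(D_1+D_2)\log N$, let the penalty absorb this entropy for $\kappa$ large enough, and sum the exponential tails into a $C/N$ residual. The paper organizes this as a two-term decomposition ($\w{T}_{1,N}$, a one-sided fluctuation term bounded by integrating Bernstein tails over the nets, and $\w{T}_{2,N}$, handled purely in expectation), but that organizational difference is cosmetic.

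The genuine problem is your final step. You compare the selected estimator with the within-model ERM $\widehat{\bf h}_{D_1,D_2}$ and then claim that replacing $R_2(\widehat{\bf h}_{D_1,D_2})-R_2({\bf h}^*)$ by $\inf_{{\bf h}\in{\bf H}_{D_1,D_2}}\{R_2({\bf h})-R_2({\bf h}^*)\}$ is valid ``up to the already-controlled estimation error folded into the constant''. That estimation error, $R_2(\widehat{\bf h}_{D_1,D_2})-\inf_{{\bf h}\in{\bf H}_{D_1,D_2}}R_2({\bf h})$, is not $O(1/N)$: by Theorem~\ref{thm:EstimErrorERM} it is of order $(D_1+D_2)\log(N)/N$, i.e.\ exactly penalty order, so it cannot be absorbed into the $C/N$ term. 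As written, your chain of inequalities yields the oracle inequality only with inflated constants (roughly a factor $6$ in front of the approximation term rather than the stated $2$, and a larger constant in front of the penalty). The clean fix --- and this is precisely what the paper's treatment of $\w{T}_{2,N}$ accomplishes --- is never to pass through the \emph{true} risk of the within-model ERM: since $\widehat{R}_2(\widehat{\bf h}_{D_1,D_2})\le \widehat{R}_2({\bf h})$ for every ${\bf h}\in\w{\bf H}_{D_1,D_2}$, and since $\widehat{R}_2$ is an unbiased estimate of $R_2$ at any fixed score function conditionally on the auxiliary sample $\mathcal{Y}_N$ defining $\widehat{\bf p}$ (which is independent of $\mathcal{D}_N$), one gets $\E\bigl[\inf_{{\bf h}}\{\widehat{R}_2({\bf h})-\widehat{R}_2({\bf h}^*)\}\bigr]\le\inf_{{\bf h}}\{R_2({\bf h})-R_2({\bf h}^*)\}$ directly; only the fluctuation at the selected $\w{\bf h}$ then needs the uniform Bernstein-plus-net control, and the constant $2$ comes out. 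Note also that this conditioning on $\mathcal{Y}_N$ is not optional in your argument either: the classes $\w{\bf H}_{D_1,D_2}$ are themselves random through $\widehat{\bf p}$, so every concentration statement must be made conditionally on $\mathcal{Y}_N$.
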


As expected, Theorem~\ref{thm:AdaptiveERMclassifier} shows that the adaptive estimator achieves the same rates of convergence as the non adaptive one. Indeed, from Theorem~\ref{thm:EstimErrorERM}, up to some logarithmic factor we have that
for each $D_1, D_2 \in \Xi_N$
\begin{equation*}
\underset{{\bf h} \in {\bf H}_{D_1, D_2}}{\inf}{R_2({\bf h}) - R_2({\bf h}^*)} \leq C\left(\dfrac{1}{D^2_1} + \dfrac{1}{D^2_2}\right).
\end{equation*}

In view of the penalty function defined in Equation~\eqref{eq:penaltyFunction}, from the above inequality, we deduce the following.
If for each $k \in [K]$, $b^{*}_{k} \in \Sigma(\beta_1, R_1)$, and 
$\sigma^{*2} \in \Sigma(\beta_2, R_2)$, since
for each $\beta \geq 1$, $N^{1/(2\beta+1)} \in \Xi_N$, we have up to some logarithmic factors


\begin{equation*}
    \E\left[R_2(\w{\bf h}) - R({\bf h}^{*})\right] \leq C N^{-2\beta_{\min}/(2\beta_{\min}+1)}.
\end{equation*}
Therefore, we obtain that the excess risk of the adaptive estimator $\widehat{g} = g_{\widehat{\bf h}}$ is of order $N^{-\beta_{\min}/(2\beta_{\min}+1)}$, up to some logarithmic factor, more precisely:
\begin{equation*}
    \E\left[R(\w{g}) - R(g^{*})\right] \leq C N^{-\beta_{\min}/(2\beta_{\min}+1)}.
\end{equation*}
Hence, the adaptive estimator $\widehat{g}$ achieves same rates of convergence as the non adaptive one.

\section{Numerical study}
\label{sec:NumericStudy}

This section is devoted to the evaluation of the numerical performance of the proposed ERM procedure on synthetic data. The simulation scheme is described in Section~\ref{subsec:models-simulations}. Then we provide details on the implementation of the procedure in Section~\ref{subsec:numerical-implementation}.
Finally, classification performance over the simulated data of the ERM predictor as well as a benchmark of competitors is presented in Section~\ref{subsec:numerical-results}.

\subsection{Experimental setup}
\label{subsec:models-simulations}
We study the numerical performance of the ERM classifier from simulated data.
We consider three different models with $K=3$ or $K=6$ classes for which Assumptions~\ref{ass:RegEll} and \ref{ass:Novikov} are satisfied. The horizon time is fixed to $T=1$ and the starting point to $X_0 = 0$. The description of the models is presented in Table~\ref{tab:DiffModels}. The data are simulated using the \texttt{Python} library \texttt{sdepy}. A visual description of the considered models
is displayed in Figure~\ref{fig:models}.

\begin{table}[!ht]
\small
\begin{center}
\renewcommand{\arraystretch}{1.25}
\begin{tabular}{l|c|c}
    \hline 
     \multirow{4}{*}{Model 1} & $b^{*}_{1}(x)$ & $-(x-1)$ \\ 
    
     & $b^{*}_{2}(x)$ & $-(x+1)$ \\ 
    
     & $b^{*}_{3}(x)$ & $-x$ \\ 
    
     & $\sigma^{*}(x)$ & $1$ \\
    \hline
    \multirow{4}{*}{Model 2} & $b^{*}_{1}(x)$ & $1/4+(3/4)\cos x$ \\ 
    
     & $b^{*}_{2}(x)$ & $-2\exp(-x^2)+\sin x$ \\ 
    
     & $b^{*}_{3}(x)$ & $4/\pi(x^2+1)$ \\ 
    
     & $\sigma^{*}(x)$ & $1$ \\
     \hline
     \multirow{4}{*}{Model 3} & $b^{*}_{1}(x)$ & $-1/2-(3/2)\cos^2(x)$\\ 
     & $b^{*}_{2}(x)$ & $1/4 + (3/4)\cos^2(x)$ \\ 
     & $b^{*}_{3}(x)$ & $1/2 + (3/2)\cos^2(x)$ \\ 
     & $b^{*}_{4}(x)$ & $(x-1)$\\
     & $b_5^*(x)$ & $-(x-1)$\\
     & $b_6^{*}(x)$ & $-(x-4)$ \\
     & $\sigma^{*}(x)$ & $1/10 + 9/(10\sqrt{1+x^2})$ \\
     \hline
\end{tabular}
\caption{{\small \textit{Description of the considered models. For each model, we provide the expression of the drift functions associated to each class, and the expression of the diffusion coefficient.  }}}
\label{tab:DiffModels}
\end{center}
\end{table}

\begin{figure}[!ht]
\begin{tabular}{c c c}
\includegraphics[scale=0.3]{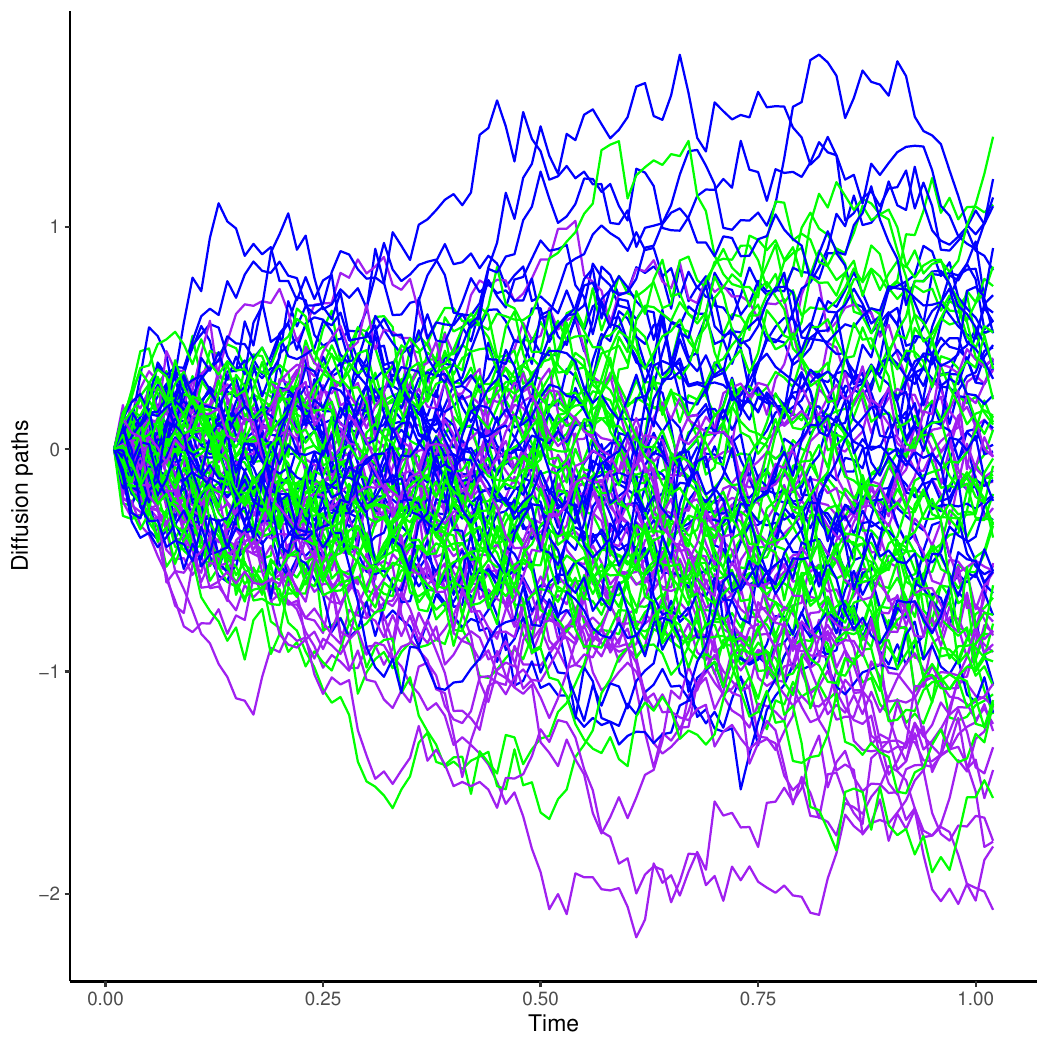} &
\includegraphics[scale=0.3]{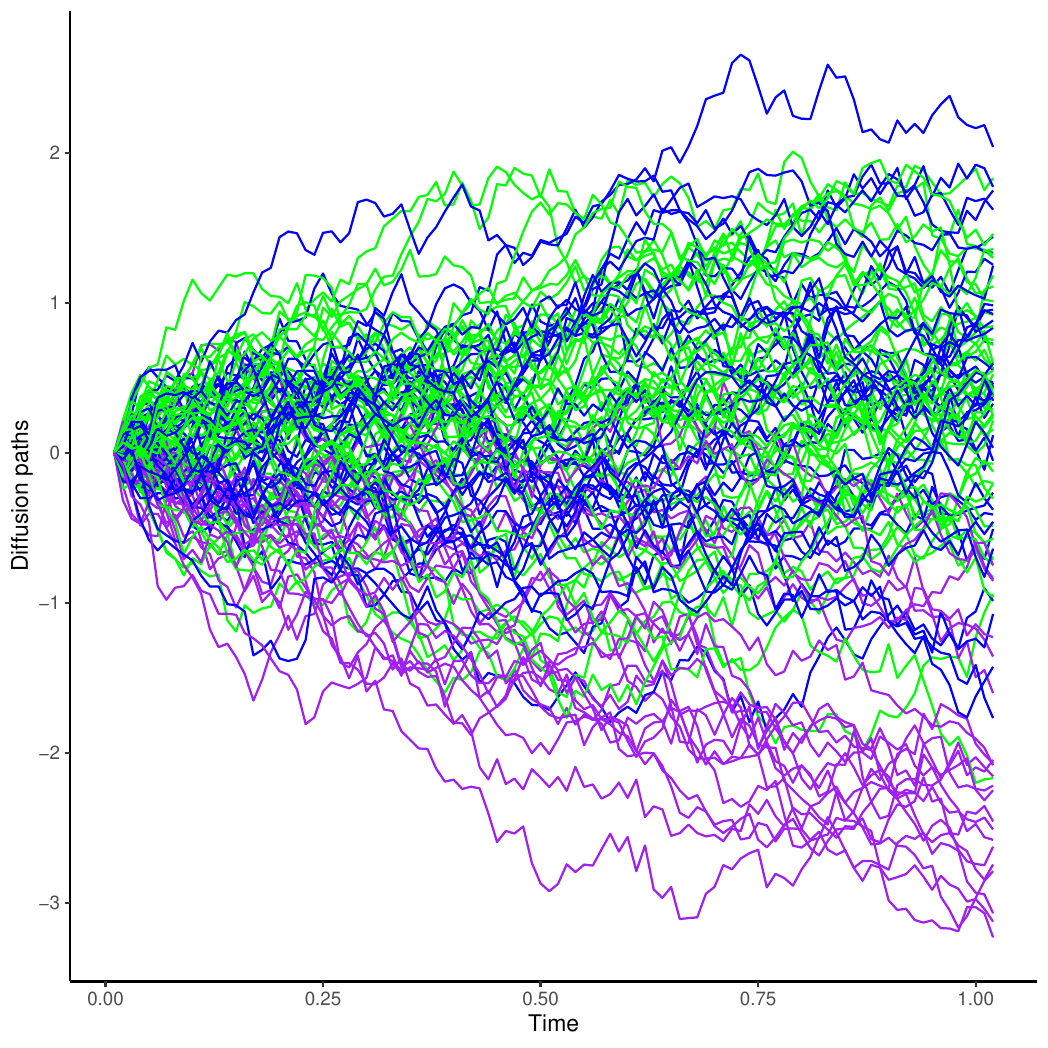} &
\includegraphics[scale=0.3]{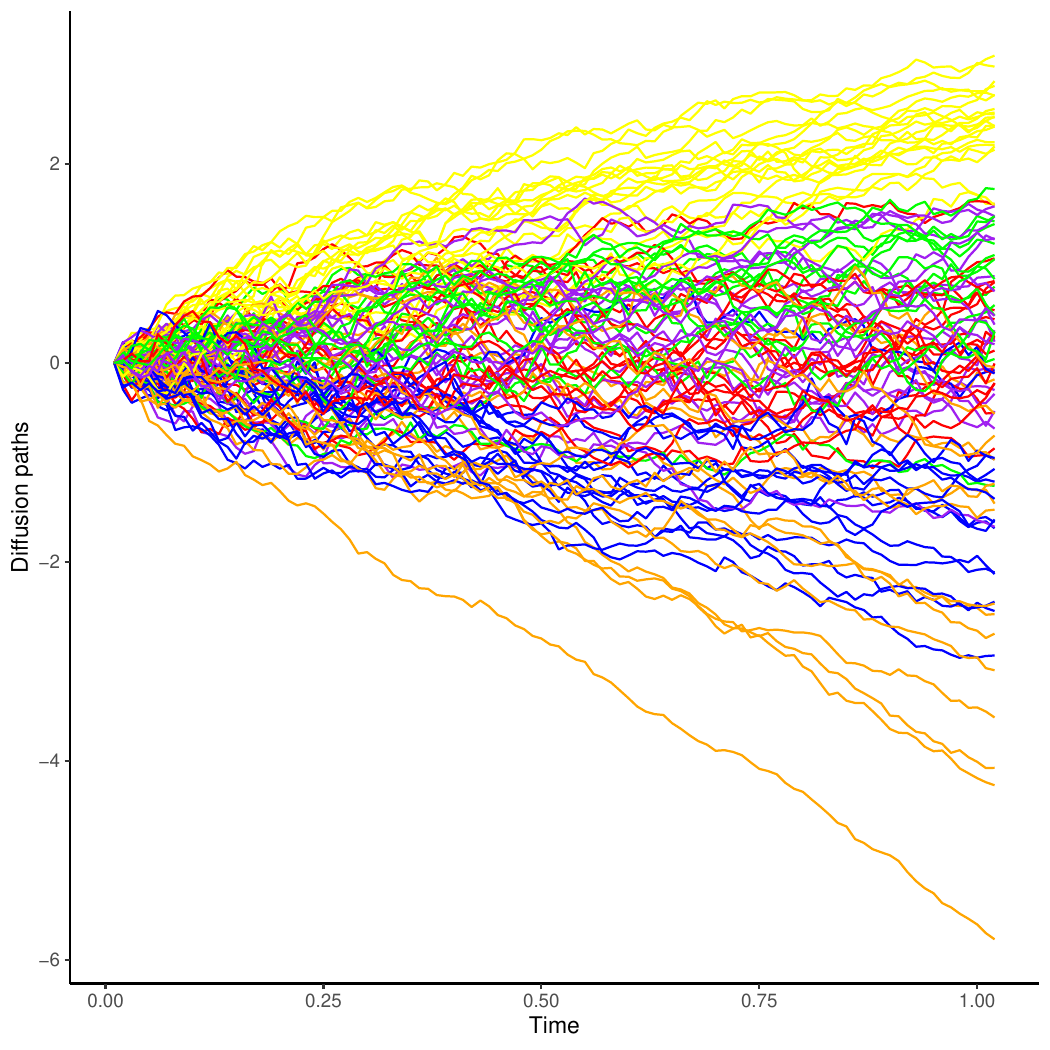}
\end{tabular}
\caption{Visual description of the considered models. Model~1 on the left, Model~2 in the middle, and Model~3 on the right.}
\label{fig:models}
\end{figure}

For Model~1 and Model~2, we consider a constant diffusion coefficient.
Model~1 is a mixture of Ornstein-Uhlenbeck processes. In particular, the drift functions associated to the classes are linear. Model~2 is composed of non linear functions and the estimation task may be more challenging compared to Model~1.
Finally, in Model~3, the diffusion coefficient is non constant and for this model, drift functions are both linear and non linear. The number of classes is $K=6$. Hence, with Model~3, we could observe the impact of the number of classes on the performance of the proposed procedure. Note that the diffusion coefficient is non constant and takes its values in the compact interval $[1/10, 1]$, which implies a less significant impact on the dispersion of the diffusion paths compared to the previous models.

To evaluate the performance of our ERM procedure, we compare its classification performance with the Bayes classifier and a benchmark of classification algorithms.

\paragraph*{Performance of the Bayes classifier. }

We evaluate the performance of the Bayes classifier according to the following scheme. We repeat $100$ times the following steps

\begin{enumerate}
    \item Simulate a learning sample $\mathcal{D}_{N}$ of size $N = 4000$ with $n = 500$
    \item For each diffusion path $\bar{X}$ of the learning sample $\mathcal{D}_{N}$,
    \begin{enumerate}
        \item compute for each class $k \in [K]$ the conditional probability $\bar{\pi}^{k}_{\mathbf{b}^*, \sigma^{*2}, \mathbf{p}^*}(\bar{X})$ as given in Section~\ref{subsec:score-functions} 
        \item Choose the class for which the conditional probability is the highest
    \end{enumerate}
    \item From the vector of predicted labels and the vector of true labels, compute the error rate of the classifier 
\end{enumerate}

The error rate is then deduced considering the average misclassification risk from the $100$ repetitions. 

\begin{table}[hbtp]
\centering
\renewcommand{\arraystretch}{1.5}
\begin{tabular}{c|c|c|c}
\hline
 & Model $1$ & Model $2$ & Model $3$ \\
\hline
 $\w{\mathcal{R}}(g^{*})$ & 0.41 \ (0.01) & 0.37 \ (0.01) & 0.31 \ (0.01)   \\
\hline
\end{tabular}
\caption{{\small \textit{Evaluation of the error rates of $g^{*}$.}}}
\label{tab:Risk-Bayes-ERM}
\end{table}

From Table~\ref{tab:Risk-Bayes-ERM}, we can see that regarding the classification task, Model~1 appears as the more challenging while the classification problem induced by Model~3 is easier. Although, the structure of the drift functions are more simple, the classes much overlap for Model~1 than for the others.

\paragraph*{Benchmark. }

 We compare the performance of the proposed ERM classifier with the plug-in classifier proposed in~\cite{denis2024nonparametric} which is a natural competitor. Indeed, the plug-in procedure provided in~\cite{denis2024nonparametric} is also consistent  for our classification problem.
 In our benchmark, we also consider the Depth-based classifier proposed for example in~\cite{lopez2006depth} or~\cite{mosler2017fast}. In particular, this algorithm is dedicated to the classification of functional data.
 Finally, we consider $K$-NN classifier, and  Random Forests. Not that this two algorithms are not tailored to handle functional data. However, there are known to achieve good performance for classification problems.




\subsection{Implementation of the ERM-type classifier}
\label{subsec:numerical-implementation}

In this section we provide a short description of the ERM classifier.
First, for the computation of the $B$-spline functions, we use the Python library \texttt{scikit-fda}~\cite{ramos-carreno++_2024_scikit-fda}.

For dimension parameters $D_1, D_2$, the sets $\mathcal{S}_{D_1}$, and $\widetilde{\mathcal{S}}_{D_2}$ are parameterized by the coefficient of the $B$-spline functions. Hence, the minimization problem
\begin{equation*}
\left(\widehat{{\bf b}}, \widehat{\sigma}^2\right) \in \argmin{{\bf b} \in \mathcal{S}_{D_1}^K, \sigma^2 \in \tilde{\mathcal{S}_{D_2}}} \widehat{R}_2\left(h_{{\bf b}, \sigma^2, \mathbf{{\hat{p}}}}\right)
\end{equation*}

can be solved by using a gradient descent algorithm. To this end, we use the function \texttt{minimize} with method \texttt{L-BFGS-B}. 

Then, we compute the ERM classifier 
$\widehat{g}  =g_{\widehat{\mathbf{h}}_{\widehat{D}_1, \widehat{D}_2}}$ 
defined as
\begin{equation*}
\widehat{D}_1, \widehat{D}_2 = \underset{(D_1,D_2) \in \Xi}{\arg\min}{~\left\{\widehat{R}_2(\widehat{h}_{D_1,D_2})) + \kappa\dfrac{(D1+D_2+M)\log(N)}{N}\right\}}.    
\end{equation*}

We choose $\Xi = \{2, 4, 8\}$ and $\kappa =1$.

\subsection{Numerical results over simulated data}
\label{subsec:numerical-results}

The performance of the ERM predictor and the classifiers chosen as a benchmark
is assessed as follows. We chose $N \in \{100,1000\}$ and $n = 100$.
For each model presented in Section~\ref{subsec:models-simulations}, each $N \in \{100,1000\}$, and $M = 1000$, we repeat $30$ times the following steps.

\begin{enumerate}
    \item[(i)]Simulate a learning sample $\mathcal{D}_N$ and a test sample $\mathcal{D}_M$;
    \item[(ii)] Based on $D_N$, we compute all the considered predictors
    \item[(iii)] Based on $D_M$ we compute the error rates of the predictors.
\end{enumerate}
Then we compute the average error rates, and the standard deviation over the 
$30$ repetitions. The obtained results are presented in
Table~\ref{tab:ERM_100_100} and Table~\ref{tab:ERM_1000_100}.
Several comments can be made from this results.
First, the ERM classifier achieves the best performance for each model. In particular, it is slightly better than the plug-in classifier. Second, the error rates of the ERM classifier is closed the one of the Bayes classifier (see Table~\ref{tab:Risk-Bayes-ERM}, especially when $N=1000$. Third, we observe that the convergence of each classification rule appears to be faster for Model~1 and Model~3 than for Model~2. Finally, for Model~1, we can see that the ERM classifier achieves the same performance as the Bayes classifier for $N=1000$. It illustrates that the Model~1 is easier in term of estimation. 

\begin{table}[hbtp]
\centering
\renewcommand{\arraystretch}{1.5}
\begin{tabular}{c|c|c|c}
\hline
 Classifiers & Model $1$ & Model $2$ & Model $3$  \\
\hline
 ERM-type classifier & $\mathbf{0.43 ~ (0.02)}$ & $\mathbf{0.38 ~ (0.01)}$ & $\mathbf{0.34 ~ (0.02)}$ \\
 Plug-in type classifier & 0.45 \ (0.03) & 0.39 \ (0.01) & 0.37 \ (0.03) \\
 KNN based classifier & 0.48 \ (0.02) & 0.40 \ (0.01) & 0.39 \ (0.02) \\
 Random Forest based classifier & 0.45 \ (0.01) & 0.39 \ (0.01) & 0.37 \ (0.02) \\
 Depth-based classifier & 0.44 \ (0.01) & 0.40 \ (0.01) & 0.43 \ (0.02) \\
\hline
\end{tabular}
\caption{{\small \textit{Classification errors of the ERM-type classifier  from learning samples of size $N=100$ with $n=100$, and comparison with other classification methods}}}
\label{tab:ERM_100_100}
\end{table}

\begin{table}[hbtp]
\centering
\renewcommand{\arraystretch}{1.5}
\begin{tabular}{c|c|c|c}
\hline
 Classifiers & Model $1$ & Model $2$ & Model $3$  \\
\hline
 ERM-type classifier & $\mathbf{0.41 ~ (0.01)}$ & $\mathbf{0.38 ~ (0.01)}$ & $\mathbf{0.32 ~ (0.01)}$ \\
 Plug-in type classifier & 0.42 \ (0.01) & 0.38 \ (0.01) & 0.33 \ (0.01) \\
 KNN based classifier & 0.47 \ (0.01) & 0.40 \ (0.01) & 0.37 \ (0.01) \\
 Random Forest based classifier & 0.44 \ (0.01) & 0.39 \ (0.01) & 0.34 \ (0.01) \\
 Depth-based classifier & 0.43 \ (0.01) & 0.40 \ (0.01) & 0.41 \ (0.01) \\
\hline
\end{tabular}
\caption{{\small \textit{Classification errors of the ERM-type classifier  from learning samples of size $N=1000$ with $n=100$, and comparison with other classification methods}}}
\label{tab:ERM_1000_100}
\end{table}

\section{Discussion}
\label{sec:Conclusion}

In the present work, we propose a classification algorithm based on the minimization of the empirical $L_2$ risk. We consider the case where both drift functions $b^{*}_{k}, ~ k \in [K]$ and the diffusion coefficient $\sigma^{*}$ are unknown. Our procedure relies on the modeling of both drift and diffusion functions by $B$-spline functions.
We study the case where the drift functions belongs to a H\"older space with smoothness parameter $\beta_1 \geq 1$ and the function $\sigma^*$ belongs also to a H\"older space with smoothness parameter $\beta_2 \geq 1$
We derive, under mild assumptions, a rate of convergence of the resulting E.R.M. classifier of order $N^{-\beta_{\min}/(2\beta_{\min}+1)}$ with $\beta_{\min} = \min(\beta_1,\beta_2)$. In addition, in the case where $\sigma^*=1$, leveraging a margin assumption, we show that fast rates can be achieved. 
Finally, we propose an adaptive version of our procedure and establish its theoretical properties. A simulation study illustrates the performance of the proposed algorithm.

The results provided in the paper may be extended in several directions. A natural question for further research is to study the case of time in-homogeneous diffusion processes which covers a broader range of applications.
Finally, in several applications, the data may be modeled as multivariate diffusion processes. The extension of our procedure to the high-dimensional setting is an important challenge.

\newpage

\begin{center}

\Large{\bf Appendix}
\end{center}
\vspace*{0.25cm}

\appendix


In this section, we prove the main results of Sections~\ref{sec:ERM-procedure}, \ref{sec:RatesConv} and \ref{sec:AdaptiveERM}. To simplify our notations, we denote the time-step $\Delta_n = 1/n$ by $\Delta$, and the constants by $C>0$ and $c>0$, which can vary from a line to another. We can also use the notation $C_{\alpha} >0$ in case we want to precise the dependence of the constant $C>0$ on a parameter $\alpha$. Moreover, we denote the Bayes classifier $g_{h^{*}}$ by $g^{*}$, and any classifier $g_{h}$ by $g$, where $h \in {\bf H}$ is a score function.

\section{Technical results}

In this section, we gather some technical results that are used for the proof
of our results.

\begin{prop}[\cite{denis2024nonparametric}]
\label{prop:Bias}
Let $b$ be a $L$-Lipschitz function. there exists $\widetilde{b} \in \mathcal{S}_D$ such that for each $x \in [-\log(N),\log(N)]$
\begin{equation*}
\left|b(x)- \widetilde{b}(x)\right| \leq C \dfrac{\log(N)}{D}, \;\; \left|\widetilde{b}(x)\right|\leq C \log(N).    
\end{equation*}

Furthermore, if there exists $0 < b_{\min}, b_{\max}$ such that 
\begin{equation*}
 b_{\min} \leq b(x) \leq b_{\max},
\end{equation*}

we also have that $b_{\min} \leq \widetilde{b}(x) \leq b_{\max}$ for $x \in [-\log(N), \log(N)]$.
\end{prop}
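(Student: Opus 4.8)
The plan is to construct $\widetilde b$ explicitly via the Schoenberg variation-diminishing spline operator, which is the natural device for obtaining \emph{simultaneously} a first-order approximation rate, a uniform bound, and the preservation of pointwise bounds. Write $h = 2\log(N)/D$ for the constant knot spacing on $[-\log(N),\log(N)]$, and let $(\xi_\ell)_{-M \le \ell \le D-1}$ be the Greville abscissae associated with the basis $(B_\ell)$, so that each $\xi_\ell$ lies in the support of $B_\ell$, an interval of length at most $M h$. I would set $a_\ell = b(\xi_\ell)$, that is,
\begin{equation*}
\widetilde b(x) = \sum_{\ell=-M}^{D-1} b(\xi_\ell) B_\ell(x).
\end{equation*}
The two structural facts I would exploit throughout are that the B-splines are nonnegative and form a partition of unity, $\sum_\ell B_\ell(x) = 1$ on $[-\log(N),\log(N)]$.

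For the approximation rate, writing $b(x) = \sum_\ell b(x) B_\ell(x)$ by the partition of unity and using the Lipschitz property,
\begin{equation*}
\left|b(x) - \widetilde b(x)\right| \le \sum_{\ell} \left|b(x) - b(\xi_\ell)\right| B_\ell(x) \le L \sum_{\ell} |x - \xi_\ell| B_\ell(x).
\end{equation*}
Since $B_\ell(x)\neq 0$ forces $x$ into the support of $B_\ell$, hence $|x - \xi_\ell| \le M h$ for the nonvanishing terms, the right-hand side is at most $L M h = 2 L M \log(N)/D$, giving the first bound $|b(x)-\widetilde b(x)| \le C \log(N)/D$ with $C$ depending only on $L$ and $M$.

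For the uniform bound, $b$ being Lipschitz and fixed gives $|b(\xi_\ell)| \le |b(0)| + L|\xi_\ell| \le |b(0)| + L\log(N) \le C\log(N)$, so by the partition of unity $|\widetilde b(x)| \le \max_\ell |b(\xi_\ell)| \le C\log(N)$. The bound-preservation claim follows from the same two properties: if $b_{\min} \le b \le b_{\max}$ then, on the interval, $b_{\min} = b_{\min}\sum_\ell B_\ell(x) \le \sum_\ell b(\xi_\ell) B_\ell(x) \le b_{\max}\sum_\ell B_\ell(x) = b_{\max}$. Finally, to certify $\widetilde b \in \mathcal{S}_D$ I would verify the coefficient constraint: from $|a_\ell| \le C\log(N)$ one gets $\sum_\ell a_\ell^2 \le (D+M) C^2 \log^2(N)$, which is below the budget $(D+M)\log^3(N)$ as soon as $\log(N) \ge C^2$.

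The routine ingredients (the Lipschitz estimate and the partition of unity) are immediate; the step requiring genuine care is membership in the \emph{constrained} space $\mathcal{S}_D$. One must confirm that the natural choice $a_\ell = b(\xi_\ell)$ respects the quadratic budget $(D+M)\log^3(N)$, which is precisely why the extra logarithmic factor appears in the definition of $\mathcal{S}_D$ and why the statement is to be read for $N$ large enough. I expect this bookkeeping, together with pinning down the exact support width $M h$ of the basis functions, to be the only delicate points.
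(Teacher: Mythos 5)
Your proof is correct. Note, however, that this paper contains no proof of Proposition~\ref{prop:Bias} to compare against: the result is imported verbatim from the cited reference (it corresponds to the approximation property of $\mathcal{S}_D$ invoked in Section~\ref{subsec:score-functions}), and the appendix simply restates it as a technical tool. Your argument via the Schoenberg variation-diminishing operator $\widetilde b = \sum_{\ell} b(\xi_\ell) B_\ell$ is the standard route to exactly this trio of conclusions, and it is essentially self-contained given the two structural facts you invoke (nonnegativity and partition of unity of the clamped B-spline basis on $[-\log(N),\log(N)]$): the Lipschitz bound yields the rate $\log(N)/D$ because the support of each $B_\ell$ has length of order $M\log(N)/D$; the convex-combination structure gives both the uniform bound and the preservation of $[b_{\min},b_{\max}]$ (the Greville abscissae lie inside the interval because the knot vector is clamped, so $b(\xi_\ell)$ is evaluated where the hypothesis applies); and your verification of the coefficient budget $\sum_\ell a_\ell^2 \leq (D+M)\log^3(N)$ is the one step that genuinely ties the construction to the \emph{constrained} space $\mathcal{S}_D$ rather than the unconstrained spline space. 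Two caveats you correctly flag, and which should be kept in mind when quoting the result: the conclusion holds for $N$ large enough (so that $\log^2(N)$ fits under the $\log^3(N)$ budget), and the constant $C$ depends on $|b(0)|$ as well as $L$ and $M$, since a Lipschitz function on a growing interval is only linearly bounded; both are consistent with how the proposition is used in the paper.
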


\begin{lemme}[\cite{denis2024nonparametric}]
\label{lem:lemTech}
Let $b,\widetilde{b}, \sigma^2$, $\widetilde{\sigma}^2$ be four real valued functions, such that
for each $x \in \mathbb{R}$,
\begin{equation*}
\sigma^2(x) \geq \sigma_0^2 , \;\; {\rm and} \;\; \widetilde{\sigma}^2(x) \geq \sigma_0^2.    
\end{equation*}
The following holds.
\begin{equation*}
\left|\dfrac{b(x)}{\sigma^2(x)} -  \dfrac{\widetilde{b}(x)}{\widetilde{\sigma}^2(x)}\right| 
\leq \sigma_0^{-4}\left(\left|b(x) -\widetilde{b}(x)\right|\left|\widetilde{\sigma}^2(x)\right|+ \left|\sigma^2(x)-\widetilde{\sigma}^2(x)\right|\left|\widetilde{b}(x)\right|\right)
\end{equation*}
\end{lemme}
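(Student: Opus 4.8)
The plan is to prove this purely algebraic, pointwise inequality by reducing the difference of the two ratios to a single fraction and then bounding the resulting denominator from below using the two hypotheses $\sigma^2(x)\geq\sigma_0^2$ and $\widetilde{\sigma}^2(x)\geq\sigma_0^2$. Throughout I would fix $x\in\mathbb{R}$ and suppress it from the notation, since the claim is to be established for each $x$ separately.

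First I would place the two fractions over a common denominator,
\[
\frac{b}{\sigma^2} - \frac{\widetilde{b}}{\widetilde{\sigma}^2} = \frac{b\,\widetilde{\sigma}^2 - \widetilde{b}\,\sigma^2}{\sigma^2\,\widetilde{\sigma}^2}.
\]
The key step is then to symmetrize the numerator by adding and subtracting the cross term $\widetilde{b}\,\widetilde{\sigma}^2$, which gives
\[
b\,\widetilde{\sigma}^2 - \widetilde{b}\,\sigma^2 = (b - \widetilde{b})\,\widetilde{\sigma}^2 + \widetilde{b}\,(\widetilde{\sigma}^2 - \sigma^2).
\]
Taking absolute values and applying the triangle inequality yields
\[
\left|\frac{b}{\sigma^2} - \frac{\widetilde{b}}{\widetilde{\sigma}^2}\right| \leq \frac{|b - \widetilde{b}|\,|\widetilde{\sigma}^2| + |\sigma^2 - \widetilde{\sigma}^2|\,|\widetilde{b}|}{\sigma^2\,\widetilde{\sigma}^2}.
\]

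To conclude I would control the denominator from below: since both $\sigma^2\geq\sigma_0^2>0$ and $\widetilde{\sigma}^2\geq\sigma_0^2>0$, the product obeys $\sigma^2\,\widetilde{\sigma}^2\geq\sigma_0^4$, hence $1/(\sigma^2\,\widetilde{\sigma}^2)\leq\sigma_0^{-4}$, and substituting this into the previous display gives exactly the stated bound. The only genuine decision in the argument — and the single point worth pausing on — is which cross term to insert in the numerator: adding and subtracting $\widetilde{b}\,\widetilde{\sigma}^2$ produces the factors $|\widetilde{\sigma}^2|$ and $|\widetilde{b}|$ that appear in the statement, whereas adding and subtracting $b\,\sigma^2$ would instead produce $|\sigma^2|$ and $|b|$ and thus a different, equally valid estimate. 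There is no real obstacle beyond this choice; once the denominator is bounded below, the inequality is a one-line consequence of the triangle inequality, which is why the result is quoted from \cite{denis2024nonparametric} as an elementary building block for the main proofs.
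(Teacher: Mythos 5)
Your proof is correct: the common-denominator decomposition, insertion of the cross term $\widetilde{b}\,\widetilde{\sigma}^2$, the triangle inequality, and the lower bound $\sigma^2\widetilde{\sigma}^2 \geq \sigma_0^4$ give exactly the stated inequality. The paper itself states this lemma without proof, citing \cite{denis2024nonparametric}, and your argument is the standard elementary derivation — indeed the same add-and-subtract manipulation the authors carry out inline elsewhere in the appendix (e.g.\ in the proof of Corollary~\ref{coro:CardH}), so there is nothing to add.
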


\begin{prop}[\cite{denis2024nonparametric}]
\label{prop:exitProba}
Under Assumption~\ref{ass:RegEll}, and~\ref{ass:Novikov},
there exist constants $C_1 > 0$ and $C_2 > 0$ such that
\begin{equation*}
\sup_{t \in [0,1]} \mathbb{P}\left(\left|X_t\right| \geq A\right) \leq \dfrac{C_1}{A} \exp\left(-C_2 A^2\right) 
\end{equation*}
\end{prop}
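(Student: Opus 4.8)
The plan is to establish a genuine sub-Gaussian bound for the whole path, namely $\P\bigl(\sup_{t\in[0,1]}|X_t|\ge A\bigr)\le 2e^{-cA^2}$ for some $c>0$, and then to deduce the stated inequality. Indeed the prefactor $1/A$ is only a cosmetic weakening: taking $C_2=c/2$, the quantity $2A\,e^{-cA^2/2}$ is bounded over $A>0$, so one can choose $C_1$ large enough that $2e^{-cA^2}\le \tfrac{C_1}{A}e^{-C_2A^2}$ for all $A>0$ (enlarging $C_1$ further to absorb the regime of small $A$, where the probability is trivially at most one). Moreover $\sup_{t\in[0,1]}\P(|X_t|\ge A)\le \P\bigl(\sup_{t\in[0,1]}|X_t|\ge A\bigr)$, so it suffices to control the running maximum of the trajectory.

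First I would split the path into drift and martingale parts, $X_t=V_t+M_t$ with $V_t=\int_0^t b^*_Y(X_s)\,ds$ and $M_t=\int_0^t\sigma^*(X_s)\,dW_s$. Since $W$ is independent of $Y$, it remains a Brownian motion for the filtration generated by $(Y,W)$, hence $M$ is a continuous martingale, and by Assumption~\ref{ass:RegEll} its bracket satisfies $\langle M\rangle_t=\int_0^t\sigma^{*2}(X_s)\,ds\le c_1^2$ for $t\le 1$. The Lipschitz property yields the linear-growth bound $|b^*_Y(x)|\le B_0+L_0|x|$ with $B_0=\max_k|b^*_k(0)|$, valid uniformly in the random label, so that $\sup_{s\le t}|X_s|\le M^*+B_0+L_0\int_0^t\sup_{r\le s}|X_r|\,ds$ with $M^*:=\sup_{s\le 1}|M_s|$. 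Grönwall's lemma then gives the pathwise comparison $\sup_{t\le 1}|X_t|\le e^{L_0}(B_0+M^*)$.

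It remains to bound the tail of $M^*$, which I would do via the exponential supermartingale $\exp\bigl(\lambda M_t-\tfrac{\lambda^2}{2}\langle M\rangle_t\bigr)$. Combining Doob's maximal inequality for nonnegative supermartingales with $\langle M\rangle\le c_1^2$ and optimizing in $\lambda$ gives $\P(\sup_t M_t\ge u)\le \exp(-u^2/(2c_1^2))$, and applying this to $\pm M$ yields $\P(M^*\ge u)\le 2e^{-u^2/(2c_1^2)}$. Plugging into the Grönwall bound, for $A\ge 2B_0e^{L_0}$ one has $Ae^{-L_0}-B_0\ge\tfrac12 Ae^{-L_0}$, whence $\P\bigl(\sup_t|X_t|\ge A\bigr)\le \P\bigl(M^*\ge Ae^{-L_0}-B_0\bigr)\le 2\exp(-cA^2)$ with $c=e^{-2L_0}/(8c_1^2)$, which is exactly the required sub-Gaussian estimate.

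The main obstacle is the unboundedness of the drift: the $b^*_k$ are merely Lipschitz, so $V_t$ cannot be bounded by a deterministic constant and grows linearly in $\sup|X|$. The Grönwall step is precisely what turns this self-referential bound into the clean comparison $\sup_t|X_t|\le e^{L_0}(B_0+M^*)$, transferring the whole tail question onto the martingale part, whose bracket is uniformly bounded thanks to $\sigma^*\le c_1$. The only delicate bookkeeping is checking the martingale property under the enlarged filtration containing $Y$ and lining up the constants; the exponential-martingale estimate and the passage from $2e^{-cA^2}$ to $\tfrac{C_1}{A}e^{-C_2A^2}$ are then standard.
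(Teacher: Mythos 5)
Your proof is correct, but it takes a genuinely different route from the one underlying the paper's statement. The paper does not prove Proposition~\ref{prop:exitProba} itself: it imports it from \cite{denis2024nonparametric}, where the bound is obtained analytically, by integrating an Aronson-type Gaussian upper bound on the transition density $f_X(t,\cdot)$ (whose existence is guaranteed under Assumption~\ref{ass:RegEll}), of the form $f_X(t,x)\le C t^{-1/2}\exp\left(-c x^2/t\right)$; the prefactor $C_1/A$ in the statement arises precisely from the Gaussian tail integration $\int_A^{\infty}e^{-cx^2/t}\,dx \le \frac{t}{2cA}e^{-cA^2/t}$. You instead give a self-contained pathwise argument: split $X$ into drift plus martingale, bound the martingale's running maximum via the exponential supermartingale and Doob's inequality (using $\langle M\rangle_1\le c_1^2$), and absorb the unbounded Lipschitz drift through Gr\"onwall. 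The steps are sound: the enlarged-filtration point (so that $M$ is a true martingale given that $W$ is independent of $Y$) is addressed; the linear-growth constant $B_0=\max_{k}|b_k^*(0)|$ is finite since $K$ is finite; Gr\"onwall applies pathwise to the continuous finite function $t\mapsto\sup_{s\le t}|X_s|$; and the conversion from $2e^{-cA^2}$ to $\frac{C_1}{A}e^{-C_2A^2}$ is legitimate because $\sup_{A>0}2Ae^{-cA^2/2}<\infty$ and the regime of small $A$ is absorbed by enlarging $C_1$. Your approach buys a strictly stronger conclusion---a sub-Gaussian tail for $\sup_{t\in[0,1]}|X_t|$ rather than merely $\sup_{t\in[0,1]}\P\left(|X_t|\ge A\right)$---and avoids any reliance on transition-density estimates (note that you also never use Assumption~\ref{ass:Novikov}, which is listed but not needed for this tail bound). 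What the density approach buys in exchange is pointwise-in-time information on the law of $X_t$, with the $1/A$ prefactor appearing naturally rather than being inserted a posteriori.
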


\begin{prop}[\cite{massart2007concentration}, Chap 2, Prop 2.8, Eq.(2.16), p23-24]
\label{prop:bernstein}
    Let $N \in \mathbb{N}\setminus\{0\}$ and $X_1, \ldots, X_N$ be $N$ independent copies of a square integrable random variable $X$ such that $|X| \leq L$ where $L \in (0, +\infty)$. For all $t > 0$, the following holds:
    \begin{equation*}
        \mathbb{P}\left(\mathbb{E}[X] - \dfrac{1}{N}\sum_{i=1}^{N}{X_i} > t\right) \leq \exp\left(-\dfrac{\dfrac{1}{2}N^2t^2}{\sum_{i=1}^{N}{\mathbb{E}(X_i^2)} + \dfrac{1}{3}LNt}\right).
    \end{equation*}
\end{prop}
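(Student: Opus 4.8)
The final statement is the classical Bernstein inequality for the left tail of a sum of bounded independent random variables, and since it is attributed to \cite{massart2007concentration}, the cleanest route is to invoke that reference directly. Nevertheless, a self-contained proof proceeds through the exponential Chernoff method combined with a careful control of the moment generating function using the boundedness assumption $|X| \leq L$. The plan is to apply Markov's inequality to the exponentiated deviation, then bound the log-moment-generating function of each centered summand by a quantity involving only its variance and the bound $L$, and finally optimize over the free parameter in the Chernoff bound.

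\textbf{Key steps.} First, I would set $S_N = \sum_{i=1}^N X_i$ and fix $\lambda > 0$. By Markov's inequality applied to the random variable $\exp\!\left(\lambda(N\E[X] - S_N)\right)$, together with independence, one obtains
\begin{equation*}
\P\!\left(\E[X] - \tfrac{1}{N} S_N > t\right) \leq \exp(-\lambda N t)\,\prod_{i=1}^N \E\!\left[\exp\!\big(\lambda(\E[X_i] - X_i)\big)\right].
\end{equation*}
Second, writing $Y_i = \E[X_i] - X_i$, which satisfies $\E[Y_i] = 0$ and $|Y_i| \leq 2L$ (or, with a sharper bookkeeping, one works with the one-sided bound giving the stated constant $L/3$), I would expand the exponential and bound each moment $\E[Y_i^m]$ for $m \geq 2$ by $\E[Y_i^2]\,(2L)^{m-2}$. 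Summing the resulting series yields the standard bound
\begin{equation*}
\E\!\left[\exp(\lambda Y_i)\right] \leq \exp\!\left(\frac{\lambda^2 \E[Y_i^2]}{2(1 - \lambda L/3)}\right), \qquad 0 < \lambda < \tfrac{3}{L},
\end{equation*}
where the constant $L/3$ in the denominator is exactly what produces the $\tfrac{1}{3}LNt$ term in the final bound. Third, I would collect the product over $i$, writing $v = \sum_{i=1}^N \E[Y_i^2] = \sum_{i=1}^N \E[X_i^2] - N(\E[X])^2 \leq \sum_{i=1}^N \E[X_i^2]$, so the Chernoff exponent becomes $-\lambda N t + \tfrac{\lambda^2 v}{2(1-\lambda L/3)}$.

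\textbf{Optimization and the main obstacle.} The final step is to minimize the exponent over $\lambda \in (0, 3/L)$. Choosing the classical value $\lambda = Nt/\big(v + \tfrac{1}{3}LNt\big)$ and substituting gives precisely
\begin{equation*}
\P\!\left(\E[X] - \tfrac{1}{N} S_N > t\right) \leq \exp\!\left(-\frac{\tfrac{1}{2}N^2 t^2}{\sum_{i=1}^N \E[X_i^2] + \tfrac{1}{3} L N t}\right),
\end{equation*}
which is the asserted inequality. The step I expect to require the most care is the bound on the moment generating function: getting the sharp constant $1/3$ (rather than a cruder $1/2$ or $1$) demands the precise geometric-series estimate $\E[Y_i^m] \leq \tfrac{m!}{2}\,\E[Y_i^2]\,(L/3)^{m-2}$ type inequality and the ensuing summation, and one must verify that replacing the exact variance $v$ by the uncentered upper bound $\sum_i \E[X_i^2]$ only weakens the inequality in the correct direction. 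Since everything here is standard, the most honest treatment is simply to cite \cite{massart2007concentration}, Proposition 2.8, and note that the displayed inequality is Equation (2.16) therein.
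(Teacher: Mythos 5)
The paper itself contains no proof of this proposition: it is quoted, with a precise pointer, from Massart's book, so your closing recommendation to simply cite \cite{massart2007concentration} coincides exactly with what the paper does. The problem is your self-contained sketch, which has a genuine gap at the moment-generating-function step. For $Y_i = \mathbb{E}[X_i]-X_i$, the bound
\begin{equation*}
\mathbb{E}\bigl[\exp(\lambda Y_i)\bigr] \leq \exp\left(\frac{\lambda^2\,\mathbb{E}[Y_i^2]}{2(1-\lambda L/3)}\right)
\end{equation*}
is false in general, and so is the moment inequality $\mathbb{E}[|Y_i|^m] \leq \tfrac{m!}{2}\,\mathbb{E}[Y_i^2]\,(L/3)^{m-2}$ from which you propose to derive it. Take $X = L$ with probability $1-p$ and $X = -L$ with probability $p$: for small $p$ one gets $\mathbb{E}[|Y|^3] \approx 8pL^3$ while $\tfrac{3!}{2}\,\mathbb{E}[Y^2]\,(L/3) \approx 4pL^3$, so the claimed estimate already fails at $m=3$ (and one checks directly that the displayed MGF bound fails, e.g.\ at $\lambda L = 1$). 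What the two-sided bound $|Y_i|\leq 2L$ honestly yields is the constant $2L/3$ together with the variances $\sum_i \operatorname{Var}(X_i)$, and the resulting tail bound $\exp\bigl(-\tfrac{1}{2}N^2t^2/(\sum_i\operatorname{Var}(X_i)+\tfrac{2}{3}LNt)\bigr)$ does \emph{not} imply the stated one: the stated bound trades a smaller linear term against the larger uncentered moments $\sum_i\mathbb{E}[X_i^2]$, and neither denominator dominates the other for all $t$. So the "weakening in the correct direction" argument cannot rescue this route.

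The missing idea --- and the reason the uncentered moments $\mathbb{E}[X_i^2]$ appear in the statement at all --- is Bennett's one-sided argument, which is the mechanism behind the cited inequality. Only the bound $-X_i \leq L$ is used: with $\phi(u)=e^u-u-1$, the monotonicity of $u\mapsto \phi(u)/u^2$ gives $\phi(-\lambda X_i)\leq X_i^2\,\phi(\lambda L)/L^2$ pointwise, hence
\begin{equation*}
\log \mathbb{E}\bigl[\exp\bigl(\lambda(\mathbb{E}[X_i]-X_i)\bigr)\bigr] \leq \lambda\mathbb{E}[X_i] + \log\left(1-\lambda\mathbb{E}[X_i]+\mathbb{E}[X_i^2]\,\frac{\phi(\lambda L)}{L^2}\right) \leq \mathbb{E}[X_i^2]\,\frac{\phi(\lambda L)}{L^2},
\end{equation*}
where the last step is just $\log(1+x)\leq x$. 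Combining this with $\phi(u)\leq \frac{u^2/2}{1-u/3}$ for $0<u<3$ recovers precisely the MGF bound you wanted, but with $\mathbb{E}[X_i^2]$ in place of $\mathbb{E}[Y_i^2]$; after that, your Chernoff step and your choice $\lambda = Nt/\bigl(\sum_i\mathbb{E}[X_i^2]+\tfrac{1}{3}LNt\bigr)$ (which indeed satisfies $\lambda < 3/L$) go through verbatim and give the stated inequality.
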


\begin{prop}
\label{prop:EpsNet}
Let $\varepsilon > 0$ and $D_{\max} = \max(D_1,D_2)$. We have
\begin{equation*}
\log\left(\mathcal{N}\left(\varepsilon, \mathcal{S}_{D_1}^K \times \widetilde{\mathcal{S}}_{D_2}, \|\cdot\|_{\infty}\right) \right)\leq C_K D_{\max}\log\left(\dfrac{\sqrt{D_{\max}+M} \log^3(N)}{\varepsilon}\right).   
\end{equation*}
\end{prop}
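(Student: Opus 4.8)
The plan is to reduce the $\|\cdot\|_\infty$ covering-number bound for the function space $\mathcal{S}_{D_1}^K \times \widetilde{\mathcal{S}}_{D_2}$ to an elementary volumetric covering-number bound for a Euclidean ball in the $B$-spline coefficient space. The starting point is that every element of $\mathcal{S}_D$ is determined by its coefficient vector $a = (a_{-M}, \ldots, a_{D-1}) \in \mathbb{R}^{D+M}$, and by the definition of $\mathcal{S}_D$ this vector ranges over the Euclidean ball $B(0,r)$ of radius $r = \sqrt{(D+M)\log^3(N)}$.

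The key step I would establish first is that the reconstruction map $a \mapsto f_a = \sum_\ell a_\ell B_\ell$ is $1$-Lipschitz from $(\mathbb{R}^{D+M}, \|\cdot\|_2)$ into $(\mathcal{S}_D, \|\cdot\|_\infty)$. This relies on the structural properties of the $B$-spline basis of order $M$: the functions $B_\ell$ are nonnegative, form a partition of unity on $[-\log(N),\log(N)]$, and at most $M$ of them are nonzero at any given point. For two coefficient vectors $a,a'$ this yields
\[
|f_a(x) - f_{a'}(x)| = \Big|\sum_\ell (a_\ell - a'_\ell)B_\ell(x)\Big| \le \max_\ell |a_\ell - a'_\ell| \sum_\ell B_\ell(x) = \|a-a'\|_\infty \le \|a-a'\|_2,
\]
valid on all of $\mathbb{R}$ since the $B_\ell$ have compact support in $[-\log(N),\log(N)]$. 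Hence $\|f_a - f_{a'}\|_\infty \le \|a-a'\|_2$, so the image of any $\varepsilon$-net of $B(0,r)$ for $\|\cdot\|_2$ is an $\varepsilon$-net of $\mathcal{S}_D$ for $\|\cdot\|_\infty$, giving $\mathcal{N}(\varepsilon, \mathcal{S}_D, \|\cdot\|_\infty) \le \mathcal{N}(\varepsilon, B(0,r), \|\cdot\|_2)$.

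Next I would apply the classical volumetric estimate $\mathcal{N}(\varepsilon, B(0,r), \|\cdot\|_2) \le (1 + 2r/\varepsilon)^{D+M} \le (3r/\varepsilon)^{D+M}$, obtaining
\[
\log \mathcal{N}(\varepsilon, \mathcal{S}_D, \|\cdot\|_\infty) \le (D+M)\log\!\left(\frac{3\sqrt{(D+M)\log^3(N)}}{\varepsilon}\right).
\]
For the diffusion space $\widetilde{\mathcal{S}}_{D_2}$, I would note that each of its elements is the image of some $\widetilde{\sigma}^2 \in \mathcal{S}_{D_2}$ under the clipping map $u \mapsto \max(u, 1/\log(N))$, which is $1$-Lipschitz for $\|\cdot\|_\infty$; therefore $\mathcal{N}(\varepsilon, \widetilde{\mathcal{S}}_{D_2}, \|\cdot\|_\infty) \le \mathcal{N}(\varepsilon, \mathcal{S}_{D_2}, \|\cdot\|_\infty)$ and the same bound applies. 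Finally, an $\varepsilon$-net of the product space (with the sup-of-coordinates metric) is obtained by taking products of $\varepsilon$-nets of the factors, so the log covering numbers add:
\[
\log \mathcal{N}(\varepsilon, \mathcal{S}_{D_1}^K \times \widetilde{\mathcal{S}}_{D_2}, \|\cdot\|_\infty) \le K \log \mathcal{N}(\varepsilon, \mathcal{S}_{D_1}, \|\cdot\|_\infty) + \log \mathcal{N}(\varepsilon, \mathcal{S}_{D_2}, \|\cdot\|_\infty).
\]
Bounding $D_1, D_2 \le D_{\max}$, using $\log^{3/2}(N) \le \log^3(N)$ and $D_{\max}+M \le C_M D_{\max}$, and absorbing the numerical constant, the fixed spline order $M$, and the factor $K+1$ into a single constant $C_K$ then yields the claimed estimate.

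The main obstacle is the first step: securing the Lipschitz property of the reconstruction map with a constant independent of both $D$ and $N$. This is precisely where nonnegativity, the partition-of-unity identity, and the $M$-local support of the $B$-spline basis are indispensable; the crude alternative $\|f_a\|_\infty \le \sum_\ell |a_\ell|\,\|B_\ell\|_\infty$ would introduce a spurious dimension factor and spoil the exponent. The remaining steps are standard.
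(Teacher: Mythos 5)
Your proposal is correct and follows essentially the same route as the paper's proof: reduce to a Euclidean covering of the coefficient ball of radius $\sqrt{(D+M)\log^3(N)}$ via the $1$-Lipschitz property of the $B$-spline reconstruction map, apply the volumetric bound $(3r/\varepsilon)^{D+M}$, transfer nets to $\widetilde{\mathcal{S}}_{D_2}$ through the $1$-Lipschitz clipping map, and add log covering numbers over the product. The only (immaterial) difference is that you obtain the Lipschitz step from nonnegativity and the partition of unity via an $\ell_\infty$--$\ell_1$ bound, whereas the paper uses Cauchy--Schwarz together with $\sum_{\ell} B_\ell^2(x) \leq 1$.
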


\begin{proof}
First, for $D > 0$, we observe that for functions $b, \widetilde{b} \in \mathcal{S}_D$, 
we have for all $x \in \mathbb{R}$, 
\begin{equation*}
\left|b(x)- \widetilde{b}(x)\right| \leq \sum_{l=-M}^{D-1} |a_l-\tilde{a}_l|B_l(x).
\end{equation*}
Then from the Cauchy-Schwarz inequality, since $\sum_{l=-M}^{D-1}B_l^2(x) \leq 1$,
we deduce that
\begin{equation*}
\|b- \widetilde{b}\|_{\infty} \leq \left\|{\bf a}- {\bf \tilde{a}}\right\|_2.   
\end{equation*}
Therefore, since from \cite{lorentz1996constructive}, {\it Chap 15, Prop 1.3, p487}, we have
\begin{equation*}
\mathcal{N}\left(\varepsilon,\bar{B}\left(0,R\right), \|\cdot\|_2\right)
\leq \left(\dfrac{3R}{\varepsilon}\right)^D,    
\end{equation*}

where $\mathcal{N}\left(\varepsilon,\bar{B}\left(0,R\right), \|\cdot\|_2\right)$ is the minimum number of open balls of radious $\varepsilon$ that completely cover the closed ball $\bar{B}(0,R) \subset \mathbb{R}^D$, 
we deduce that for each $D > 0$, and from the definition of $\mathcal{S}_D$ that
\begin{equation}
\label{eq:eqEpsNet}
\log\left(\mathcal{N}\left(\varepsilon, \mathcal{S}_D, \|\cdot\|_{\infty}\right)\right) \leq  CD\log\left(\dfrac{\sqrt{D+M} \log^3(N)}{\varepsilon}\right).   
\end{equation}

Now let $\mathcal{S}_{D,\varepsilon}$ be an $\varepsilon$-net of $\mathcal{S}_D$.
We observe that 
\begin{equation*}
\left\{x \mapsto \sigma^2(x)  = \widetilde{\sigma}^2(x) \one_{\{\widetilde{\sigma}^2(x)\geq 1/\log(N)\}}   + \frac{1}{\log(N)} \one_{\{\widetilde{\sigma}^2(x)\leq 1/\log(N)\}}, \;\;\widetilde{\sigma}^2 \in \mathcal{S}_{D,\varepsilon}\right\},
\end{equation*}
is an $\varepsilon$-net of $\widetilde{S}_D$. Therefore with Equation~\eqref{eq:eqEpsNet}, we get the desired result.
\end{proof}

\begin{coro}\label{coro:CardH}
    Let $\varepsilon > 0$, and denote by $\w{\bf H}_{D_1, D_2}^{\varepsilon}$ the $\varepsilon-$net of the space $\w{\bf H}_{D_1, D_2}$ with respect to the supremum norm given by
    \begin{equation*}
        \w{\bf H}_{D_1, D_2}^{\varepsilon} := \left\{{\bf h}_{\bb_{\varepsilon}, \sigma_{\varepsilon}^2, \w{\bf p}}, ~~ \bb_{\varepsilon} \in \mathcal{S}_{D_1}^{K, \varepsilon}, ~~ \sigma_{\varepsilon}^2 \in \widetilde{\mathcal{S}}_{D_2}^{\varepsilon}\right\},
    \end{equation*}
    where $\mathcal{S}_{D_1}^{K, \varepsilon}$ and $\widetilde{\mathcal{S}}_{D_2}^{\varepsilon}$ are the respective $\varepsilon-$nets of the spaces $\mathcal{S}_{D_1}^K$ and $\widetilde{\mathcal{S}}_{D_2}$. The following hold:
    \begin{itemize}
        \item ${\rm Card}\left(\w{\bf H}_{D_1, D_2}^{\varepsilon}\right) = \mathcal{N}\left(\varepsilon, \mathcal{S}_{D_1}^K \times \widetilde{\mathcal{S}}_{D_2}, \|.\|_{\infty}\right)$
        \item For all $\h \in \w{\bf H}_{D_1, D_2}$ and for all $\h_{\varepsilon} \in \w{\bf H}_{D_1, D_2}^{\varepsilon}$, there exists a constant $C>0$ such that 
        \begin{equation*}
            \E\left[R_2(\h) - R_2(\h_{\varepsilon})\right] \leq CD_1\log^3(N)\varepsilon^2.
        \end{equation*}
        \item For all $\h \in \w{\bf H}_{D_1, D_2}$ and for all $\h_{\varepsilon} \in \w{\bf H}_{D_1, D_2}^{\varepsilon}$, there exists a constant $C>0$ such that
        \begin{equation*}
            \E\left[\w{R}_2(\h) - \w{R}_2(\h_{\varepsilon})\right] \leq C\left(D_1^{1/2}n^{1/2}(\log(N))^{3/2} + D_1\log^3(N)\right)\varepsilon.
        \end{equation*}
    \end{itemize}
\end{coro}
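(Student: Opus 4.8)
The plan is to treat the three assertions in turn: the first is combinatorial, and the last two rest on a single pointwise comparison of score functions.

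For the first bullet I would argue that the parametrisation $\Phi : ({\bf b},\sigma^2)\mapsto {\bf h}_{{\bf b},\sigma^2,\w{\bf p}}$ is a bijection from $\mathcal{S}_{D_1}^K\times\widetilde{\mathcal{S}}_{D_2}$ onto $\w{\bf H}_{D_1,D_2}$, the weights $\w{\bf p}$ being fixed. Injectivity is the only non-trivial point: from the explicit form of $\bar{F}^k_{{\bf b},\sigma^2}$ one can recover both ${\bf b}/\sigma^2$ and ${\bf b}^2/\sigma^2$, hence ${\bf b}$ and $\sigma^2$, so distinct parameters yield distinct scores. Since $\w{\bf H}_{D_1,D_2}^{\varepsilon}$ is by construction the image under $\Phi$ of the product net $\mathcal{S}_{D_1}^{K,\varepsilon}\times\widetilde{\mathcal{S}}_{D_2}^{\varepsilon}$, whose cardinality is exactly $\mathcal{N}(\varepsilon,\mathcal{S}_{D_1}^K\times\widetilde{\mathcal{S}}_{D_2},\|\cdot\|_{\infty})$ as used in Proposition~\ref{prop:EpsNet}, injectivity of $\Phi$ gives the claimed equality of cardinalities.

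For the last two bullets I would first reduce everything to a pointwise estimate on ${\bf h}-{\bf h}_{\varepsilon}$. Expanding the squares,
\begin{equation*}
R_2({\bf h}) - R_2({\bf h}_{\varepsilon}) = \sum_{k=1}^K \E\left[\left(h^k(\bar{X}) - h_{\varepsilon}^k(\bar{X})\right)\left(h^k(\bar{X}) + h_{\varepsilon}^k(\bar{X}) - 2Z_k\right)\right],
\end{equation*}
and similarly for $\w{R}_2$ with the empirical average over $\mathcal{D}_N$. Every coordinate of a score function lies in $[-1,1]$, so the factor $h^k+h_{\varepsilon}^k-2Z_k$ is bounded by $4$; thus the signed difference is controlled by $\E[\,|h^k-h_{\varepsilon}^k|\,]$, while squaring first controls it by $\E[(h^k-h_{\varepsilon}^k)^2]$. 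Because the weighted softmax is Lipschitz, $|h^k-h_{\varepsilon}^k|\le C\,\|\bar{\bf F}_{{\bf b},\sigma^2}(\bar{X})-\bar{\bf F}_{{\bf b}_{\varepsilon},\sigma_{\varepsilon}^2}(\bar{X})\|_2$, which reduces matters to the increments $\bar{F}^l_{{\bf b},\sigma^2}-\bar{F}^l_{{\bf b}_{\varepsilon},\sigma_{\varepsilon}^2}$. Each such increment is a discrete stochastic-integral term plus a finite-variation term, whose integrands are differences $b_l/\sigma^2-b_{l,\varepsilon}/\sigma_{\varepsilon}^2$ and $b_l^2/\sigma^2-b_{l,\varepsilon}^2/\sigma_{\varepsilon}^2$. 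Applying Lemma~\ref{lem:lemTech} with $\sigma_0^2=1/\log(N)$, together with the sup-norm bounds $\|b\|_{\infty},\|\sigma^2\|_{\infty}\lesssim \sqrt{D}\,\log^{3/2}(N)$ that follow from $\sum_l B_l^2\le 1$ and the coefficient constraint defining $\mathcal{S}_D$, these integrands are pointwise of order $\varepsilon$ times a polynomial factor in $\log(N)$ and $\sqrt{D_1}$.

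From this common reduction the two estimates diverge only in the treatment of the discrete sums. For the second bullet I would square first and exploit the martingale structure of the stochastic-integral term (an It\^o-type isometry at the discrete level), so that $\E[(\bar{F}^l-\bar{F}^l_{\varepsilon})^2]$ is bounded by $\E[\int_0^1 (b_l/\sigma^2-b_{l,\varepsilon}/\sigma_{\varepsilon}^2)^2\sigma^{*2}\,dt]$ plus the finite-variation contribution; combined with the integrand bound and the moment bounds on $\sup_t|X_t|$ granted by Assumption~\ref{ass:RegEll}, this yields the $\varepsilon^2$ control with the announced factor $D_1\log^3(N)$ (morally $\|b\|_{\infty}^2$). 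For the third bullet, dealing with the empirical quantity, I would instead bound the stochastic sum pathwise by $\sum_j |(b_l/\sigma^2-b_{l,\varepsilon}/\sigma_{\varepsilon}^2)(X_{j\Delta})|\,|X_{(j+1)\Delta}-X_{j\Delta}|$ and take expectations through Cauchy--Schwarz, using $\E[(X_{(j+1)\Delta}-X_{j\Delta})^2]\lesssim\Delta$; summing $n$ terms each of order $\varepsilon\sqrt{\Delta}$ produces the factor $n\sqrt{\Delta}=\sqrt{n}$, hence the $\sqrt{D_1 n}\,\log^{3/2}(N)\,\varepsilon$ term, while the finite-variation part gives the $D_1\log^3(N)\,\varepsilon$ term.

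The main obstacle is the control of the discrete stochastic-integral term and, above all, the bookkeeping of the competing $\log(N)$ and $D$ powers: the flooring $\sigma^2\ge 1/\log(N)$ forces the factor $\sigma_0^{-4}=\log^2(N)$ in Lemma~\ref{lem:lemTech}, which must be balanced against the $B$-spline sup-norm bounds to recover exactly the stated exponents. One should also make precise the slight abuse in the second bullet: the quantity genuinely of order $\varepsilon^2$ is the $L_2$-distance $\sum_k\E[(h^k-h_{\varepsilon}^k)^2]$, which is what enters the covering-number argument, whereas the signed risk difference is only $O(\varepsilon)$ in general.
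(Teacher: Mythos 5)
Your proposal follows essentially the same route as the paper's own proof: the softmax--Lipschitz bound reduces both risk differences to the increments $\bar{F}^{k}_{\bb,\sigma^2}-\bar{F}^{k}_{\bb_{\varepsilon},\sigma^{2}_{\varepsilon}}$, these are controlled via Lemma~\ref{lem:lemTech} with $\sigma_0^2=1/\log(N)$ and the sup-norm bound $\|b\|_{\infty}\leq\sqrt{(D+M)\log^{3}(N)}$ from the coefficient constraint, the population bound uses the It\^o isometry on the martingale part, and the empirical bound uses a pathwise Cauchy--Schwarz inequality together with $\E\bigl[\sum_{i}|X_{(i+1)\Delta}-X_{i\Delta}|^{2}\bigr]\leq C$; this is exactly how the paper produces the factors $D_1\log^3(N)\varepsilon^2$ and $\bigl(D_1^{1/2}n^{1/2}\log^{3/2}(N)+D_1\log^3(N)\bigr)\varepsilon$. (Both you and the paper implicitly read the statement with $\h_{\varepsilon}$ being the net element whose parameters are $\varepsilon$-close to those of $\h$, not an arbitrary one.)

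The caveat you append at the end is the substantive point, and you should promote it from a ``slight abuse'' to a correction, because the paper itself commits it. The paper's proof of the second bullet opens with
\begin{equation*}
R_2(\h)-R_2(\h_{\varepsilon})\leq 4\sum_{k=1}^{K}\E\left[\left|\bar{\pi}^{k}_{\bb,\sigma^2,\w{\p}}(\bar{X})-\bar{\pi}^{k}_{\bb_{\varepsilon},\sigma^{2}_{\varepsilon},\w{\p}}(\bar{X})\right|^{2}\right],
\end{equation*}
which is not justified: writing $a^2-b^2=(a-b)(a+b)$ and conditioning on $X$ gives
\begin{equation*}
R_2(\h)-R_2(\h_{\varepsilon})=\sum_{k=1}^{K}\E\left[\left(h^{k}_{\varepsilon}(\bar{X})-h^{k}(\bar{X})\right)\left(2h^{*k}(X)-h^{k}(\bar{X})-h^{k}_{\varepsilon}(\bar{X})\right)\right],
\end{equation*}
whose right-hand side is generically of order $\E\bigl[|h^{k}-h^{k}_{\varepsilon}|\bigr]$, i.e.\ of order $\varepsilon$, not $\varepsilon^{2}$; only the squared $L_2$-distance $\sum_{k}\E[(h^{k}-h^{k}_{\varepsilon})^{2}]$ is of order $D_1\log^{3}(N)\varepsilon^{2}$, and that is what your It\^o-isometry computation actually bounds. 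So the second bullet is provable only with $\varepsilon$ in place of $\varepsilon^{2}$, or with the $L_2$-distance on the left-hand side. This is harmless downstream: in the proofs of Theorem~\ref{thm:EstimErrorERM} and Theorem~\ref{thm:AdaptiveERMclassifier} the bound is invoked with $\varepsilon$ of order $N^{-3}$ or $N^{-4}$, where a first-order bound amply suffices.

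Conversely, your argument for the first bullet has a genuine gap: the weighted softmax is invariant under adding a common quantity to all its arguments, so from the score function $\h$ one can only recover the differences $\bar{F}^{k}-\bar{F}^{j}$, never the $\bar{F}^{k}$ themselves; your step ``recover $\bb/\sigma^2$ and $\bb^2/\sigma^2$, hence $\bb$ and $\sigma^2$'' therefore does not go through, and exact equality of cardinalities remains unproved (the paper does not prove it either). This does not matter: only the upper bound ${\rm Card}\bigl(\w{\bf H}^{\varepsilon}_{D_1,D_2}\bigr)\leq\mathcal{N}\bigl(\varepsilon,\mathcal{S}^{K}_{D_1}\times\widetilde{\mathcal{S}}_{D_2},\|\cdot\|_{\infty}\bigr)$ enters the union bounds, and it is immediate from the definition of $\w{\bf H}^{\varepsilon}_{D_1,D_2}$ as the image of the product net, which is all that the paper's one-line appeal to Proposition~\ref{prop:EpsNet} amounts to.
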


\begin{proof}
    The first result of Corollary~\ref{coro:CardH} is a direct consequence of Proposition~\ref{prop:EpsNet}. 
    
    \paragraph{Second result.}
    
    The score functions $\h \in \w{\bf H}_{D_1, D_2}$ and $\h_{\varepsilon} \in \w{\bf H}_{D_1, D_2}^{\varepsilon}$ are given for all $k \in [K]$ by
    \begin{equation*}
        h^k(\bar{X}) = 2\bar{\pi}_{\bb, \sigma^2, \w{\bf p}}^k(\bar{X}) - 1, ~~ {\rm and} ~~ h_{\varepsilon}^k(\bar{X}) = 2\bar{\pi}_{\bb_{\varepsilon}, \sigma_{\varepsilon}^2, \w{\bf p}}^k(\bar{X}) - 1,
    \end{equation*}
    
    where for each $k \in [K]$,
    \begin{equation*}
        \bar{\pi}_{\bb, \sigma^2, \w{\bf p}}^k(\bar{X}) = {\rm softmax}_k^{\w{\bf p}}(\bar{X}), ~~ {\rm and} ~~ \left\|\partial_{x_j} {\rm softmax}_k^{\w{\bf p}}\right\|_{\infty} \leq 1, ~~ j \in [K].
    \end{equation*}
    
    Then we have
\begin{multline*}
    R_2({\bf h})- R_2({\bf h}_{\varepsilon}) \leq  4\sum_{k=1}^{K}{\mathbb{E}\left[\left|\bar{\pi}^k_{{\bf b}, \sigma^2, \widehat{\bf p}}(\bar{X}) - \bar{\pi}^{k}_{{\bf b}_{\varepsilon}, \sigma_{\varepsilon}^2, \widehat{\bf p}}(\bar{X})\right|^2\right]}\\
    \leq C\sum_{k=1}^{K}{\mathbb{E}\left|\bar{F}^{k}_{{\bf b}, \sigma^2}(\bar{X}) - \bar{F}^{k}_{{\bf b}_{\varepsilon}, \sigma_{\varepsilon}^2}(\bar{X})\right|^2}\\
    \leq C\left[\sum_{k=1}^{K}{\left\|b_k - b_{k,\varepsilon}\right\|^{2}_{n}}+\sum_{k=1}^{K}{\left\|b_{k}\left(\sigma^{2}-\sigma^{2}_{\varepsilon}\right)\right\|^{2}_{n}}\right].
\end{multline*}

So we obtain 
\begin{equation*}
    \mathbb{E}\left[ R_2({\bf h})- R_2({\bf h}_{\varepsilon})\right] \leq C\left[\sum_{k=1}^{K}{\left\|b_k - b_{k, \varepsilon}\right\|_{\infty}^2} + D_1\log^3(N)\left\|\sigma^2 - \sigma_{\varepsilon}^2\right\|_{\infty}^2\right] \leq CD_1\log^{3}(N)\varepsilon^{2}.
\end{equation*}

\paragraph{Third result.}

For all $\h \in \w{\bf H}_{D_1, D_2}$ and $\h_{\varepsilon} \in \w{\bf H}_{D_1, D_2}^{\varepsilon}$, we have
\begin{equation}  
\label{eq:ExcessEmpRisk}
\widehat{R}_2({\bf h}_{\varepsilon}) - \widehat{R}_2({\bf h}) = \frac{1}{N}\sum_{j=1}^{N}{\sum_{k=1}^{K}{\left\{\left(1 - Z^{j}_{k}h^{k}_{\varepsilon}(\bar{X}^{j})\right)^2 - \left(1 - Z^{j}_{k}h^{k}(\bar{X}^{j})\right)^2\right\}}},
\end{equation}
where for all $j \in [N]$ and $k \in [K]$, one has $Z^{j}_{k} \in \{-1,1\}$ and,
\begin{multline*}    
    \left|\left(1 - Z^{j}_{k}h^{k}_{\varepsilon}(\bar{X}^{j})\right)^2 - \left(1 - Z^{j}_{k}h^{k}(\bar{X}^{j})\right)^2\right| = \left|h^{k} - h^{k}_{\varepsilon}\right|(\bar{X}^{j}) \times \left|2-Z^{j}_{k}(h^{k}(\bar{X}^{(j)}) + h^{k}_{\varepsilon}(\bar{X}^{j}))\right|\\
    \leq 4\left|h^{k} - h^{k}_{\varepsilon}\right|(\bar{X}^{j}).
\end{multline*}
Then, we obtain:
\begin{multline*}
    \left|\widehat{R}_2({\bf h}_{\varepsilon}) - \widehat{R}_2({\bf h})\right| \leq \frac{1}{N}\sum_{j=1}^{N}{\sum_{k=1}^{K}{\left|\left(1 - Z^{j}_{k}h^{k}_{\varepsilon}(\bar{X}^{j})\right)^2 - \left(1 - Z^{j}_{k}h^{k}(\bar{X}^{j})\right)^2\right|}}\\
    \leq \frac{8}{N}\sum_{j=1}^{N}{\sum_{k=1}^{K}{\left|\bar{\pi}^k_{{\bf b}, \sigma^2, \widehat{\bf p}}(\bar{X}^{j}) - \bar{\pi}^{k}_{{\bf b}_{\varepsilon}, \sigma_{\varepsilon}^2, \widehat{\bf p}}(\bar{X}^{j})\right|}}
\end{multline*}
and,
\begin{multline}
\label{eq:excess-phi-emp-risk}
    \mathbb{E}\left[\left|\widehat{R}_2({\bf h}_{\varepsilon}) - \widehat{R}_2({\bf h})\right|\right] \leq \frac{8}{N}\sum_{j=1}^{N}{\sum_{k=1}^{K}{\mathbb{E}\left[\left|\bar{\pi}^k_{{\bf b}, \sigma^2, \widehat{\bf p}}(\bar{X}^{j}) - \bar{\pi}^{k}_{{\bf b}_{\varepsilon}, \sigma_{\varepsilon}^2, \widehat{\bf p}}(\bar{X}^{j})\right|\right]}}\\
    \leq \dfrac{C}{N}\sum_{j=1}^{N}{\sum_{k=1}^{K}{\mathbb{E}\left[\left|\bar{F}^{k}_{{\bf b}, \sigma^2}(\bar{X}^j)-\bar{F}^{k}_{\widehat{\bf b}_{\varepsilon}, \sigma_{\varepsilon}^2}(\bar{X}^j)\right|\right]}}.
\end{multline}

For all $j \in [N]$ and $k \in [K]$, one has:
\begin{align*}
\bar{F}^{k}_{{\bf b}, \sigma^2}(\bar{X}^{j}) := \sum_{i=0}^{n-1}{\left(\frac{b_k}{\sigma^{2}}(X^{j}_{i\Delta})(X^{j}_{(i+1)\Delta} - X^{j}_{i\Delta})-\frac{\Delta}{2}\frac{b^{2}_{k}}{\sigma^{2}}(X^{j}_{i\Delta})\right)}
\end{align*}

and, using the Cauchy Schwarz inequality, we have
\begin{multline*}
    \left|\bar{F}^{k}_{{\bf b}, \sigma^2}(\bar{X}^{j})  - \bar{F}^{k, \varepsilon}_{{\bf b}, \sigma^2}(\bar{X}^{j}) \right| \leq \left|\sum_{i=0}^{n-1}{\left(\frac{b_k}{\sigma^{2}} - \frac{b_{k,\varepsilon}}{\sigma^{2}_{\varepsilon}}\right)(X^{j}_{i\Delta})(X^{j}_{(i+1)\Delta}-X^{j}_{i\Delta})}\right|\\
    + \frac{\Delta}{2}\left|\sum_{i=0}^{n-1}{\left(\frac{b^{2}_{k}}{\sigma^{2}}-\frac{b^{2}_{k,\varepsilon}}{\sigma^{2}_{\varepsilon}}\right)}(X^{j}_{i\Delta})\right|\\
    \leq \left(\sum_{i=0}^{n-1}{\left|\frac{b_k}{\sigma^{2}}-\frac{b_{k,\varepsilon}}{\sigma^{2}_{\varepsilon}}\right|^2(X^{j}_{i\Delta})}\right)^{1/2} \left(\sum_{i=0}^{n-1}{\left|X^{j}_{(i+1)\Delta}-X^{j}_{i\Delta}\right|^2}\right)^{1/2}\\
    +\frac{\Delta}{2}\sum_{i=0}^{n-1}{\left|\frac{b^{2}_{k}}{\sigma^{2}}-\frac{b^{2}_{k,\varepsilon}}{\sigma^{2}_{\varepsilon}}\right|(X^{j}_{i\Delta})}.
\end{multline*}
For all $k \in [K]$, we have:
\begin{multline*}
\left|\dfrac{b_k}{\sigma^{2}} - \dfrac{b_{k,\varepsilon}}{\sigma^{2}_{\varepsilon}}\right| \leq \dfrac{\left|b_k - b_{k,\varepsilon}\right|}{\sigma^{2}} + \dfrac{\left|b_{k,\varepsilon}\right|\times\left|\sigma^{2} - \sigma^{2}_{\varepsilon}\right|}{\sigma^{2}\sigma^{2}_{\varepsilon}}\\
\leq \sigma_0^{-2}\left\|b_k - b_{k,\varepsilon}\right\|_{\infty} + \sigma_0^{-4}(D_1+M)^{1/2}\log^{3/2}(N)\left\|\sigma^{2} - \sigma^{2}_{\varepsilon}\right\|_{\infty}\\
\leq CD_1^{1/2}\log^{3/2}(N)\varepsilon,
\end{multline*}
where $C>0$ is a constant, and
\begin{multline*}
    \left|\dfrac{b^{2}_k}{\sigma^{2}} - \dfrac{b^{2}_{k,\varepsilon}}{\sigma^{2}_{\varepsilon}}\right| \leq 2\sigma_0^{-2}(D_1+M)^{1/2}\log^{3/2}(N)\left|b_k - b_{k,\varepsilon}\right| + \sigma_0^{-4}(K_N+M)\log^{3}(N)\left|\sigma^{2} - \sigma^{2}_{\varepsilon}\right|\\
    \leq 4\sigma_0^{-2}D_1^{1/2}\log^{3/2}(N)\left\|b_k - b_{k,\varepsilon}\right\|_{\infty} + 2\sigma_0^{-4}D_1\log^{3}(N)\left\|\sigma^{2} - \sigma^{2}_{\varepsilon}\right\|_{\infty} \\
    \leq C\log^{3}(N)D_1\varepsilon.
\end{multline*}
It results that for all $k \in [K]$, we have:
\begin{equation}\label{eq:excess-phi-emp-risk2}
    \left|\bar{F}^{k}_{{\bf b}, \sigma^2}(\bar{X}^{j})  - \bar{F}^{k}_{{\bf b}_{\varepsilon}, \sigma_{\varepsilon}^2}(\bar{X}^{j}) \right| \leq  CD_1^{1/2}\log^{3/2}(N)\varepsilon\sqrt{n} \left(\sum_{i=0}^{n-1}{\left|X^{j}_{(i+1)\Delta}-X^{j}_{i\Delta}\right|^2}\right)^{1/2} + C\log^{3}(N)D_1\varepsilon,
\end{equation}

where $C > 0$ is a new constant. Besides, for all $k\in[\![0,n-1]\!]$,
\begin{multline*}
    \left|X^{1}_{(k+1)\Delta}-X^{1}_{k\Delta}\right|^2 \leq 2\left|\int_{k\Delta}^{(k+1)\Delta}{b^{*}_{Y}(X_s)ds}\right|^2 + \left|\int_{k\Delta}^{(k+1)\Delta}{\sigma^{*}(X_s)dW_s}\right|^2\\
    \leq C\Delta^2 \left(1 + \underset{t \in [0,1]}{\sup}{|X_t|}\right)^2 + 2\left|\int_{k\Delta}^{(k+1)\Delta}{\sigma^{*}(X_s)dW_s}\right|^2.
\end{multline*}

Then,
\begin{multline}\label{eq:Interm-result}
    \mathbb{E}\left[\sum_{k=0}^{n-1}{\left|X^{(1)}_{(k+1)\Delta}-X^{(1)}_{k\Delta}\right|^2}\right] \leq C\Delta\left(1 + \mathbb{E}\left[\underset{t \in [0,1]}{\sup}{|X_t|}\right]\right)+ 2\sum_{k=0}^{n-1}{\mathbb{E}\left(\int_{k\Delta}^{(k+1)\Delta}{\sigma^{*2}(X_s)ds}\right)}\\
    \leq C\left(1 + \mathbb{E}\left[\underset{t \in [0,1]}{\sup}{|X_t|}\right]\right) + 2c_1^2.
\end{multline}

We deduce from Equations~\eqref{eq:Interm-result}, \eqref{eq:excess-phi-emp-risk2} and \eqref{eq:excess-phi-emp-risk} that there exists a constant $C>0$ such that
\begin{equation*}
    \mathbb{E}\left[\widehat{R}_2({\bf h}_{\varepsilon}) - \widehat{R}_2({\bf h})\right] \leq C\left(D_1^{1/2}n^{1/2}(\log(N))^{3/2} + \log^{3}(N)D_1\right)\varepsilon.
\end{equation*}

\end{proof}

\section{Proofs of Section~\ref{sec:ERM-procedure}}

\subsection{Proof of Theorem~\ref{thm:EstimErrorERM}}

\begin{proof}

The proof of the theorem goes in several steps.
We recall that for functions ${\bf b}, \sigma^2$, and vector of probability distribution ${\bf p} = \left(p_1, \ldots,p_K\right)$,
the score function ${\bf h}_{{\bf b}, \sigma^2, {\bf p}}$ is defined for each $k \in [K]$ as 
\begin{equation*}
h_{{\bf b}, \sigma^2, {\bf p}}^k(\bar{X}) = h_{{\bf b}, \sigma^2, {\bf p}}^k(\bar{X}) = 2 \bar{\pi}^k_{{\bf b}, \sigma^2, {{\bf p}}}(\bar{X}) -1
\end{equation*}

We also introduce the score function $\widetilde{{\bf h}}$
defined as 
\begin{equation*}
\widetilde{{\bf h}} \in \argmin{{\bf h} \in \w{\bf H}_{D_1,D_2}} R_2({\bf h}).
\end{equation*}

Hence $\widetilde{{\bf h}}$ can be viewed as the oracle counterpart of $\widehat{{\bf h}}$.
Thanks to Proposition~\ref{prop:Bias}, we also introduce the functions $\bar{\bb} = \left(\bar{b}_1, \ldots\bar{b}_K\right) \in \mathcal{S}_{D_1}^K$, and  $\bar{\sigma}^2 \in \widetilde{\mathcal{S}}_{D_2}$ such that for $x \in [-\log(N),\log(N)]$, and $k \in [K]$
\begin{equation*}
\left|b^*_k(x)- \bar{b}_k(x)\right| \leq C \dfrac{\log(N)}{D_1}, \;\;   \left|\sigma^{*2}(x)- \bar{\sigma}^2(x)\right| \leq C \dfrac{\log(N)}{D_2}.  
\end{equation*}

Set
\begin{equation*}
    \mathcal{E}_{\bf h} = R_2({\bf h}) - R_2({\bf h}^{*}), ~ \w{\mathcal{E}}_{\bf h} = \w{R}_2({\bf h}) - \w{R}_2({\bf h}^{*}).
\end{equation*}

We have
\begin{equation*}
    R_2(\w{\bf h}) - R_2({\bf h}^*) = 2\w{\mathcal{E}}_{\w{\bf h}} + \mathcal{E}_{\w{\bf h}} - 2\w{\mathcal{E}}_{\w{\bf h}} \leq 2\underset{{\bf h} \in \w{\bf H}_{D_N^1, D_N^2}}{\inf}{\left\{\w{R}_2({\bf h}) - \w{R}_2({\bf h}^*)\right\}} + \mathcal{E}_{\w{\bf h}} - 2\w{\mathcal{E}}_{\w{\bf h}}.
\end{equation*}

Then we obtain,
\begin{multline*}
    \E\left[R_2(\w{\bf h}) - R_2({\bf h}^*) \biggm\vert \mathcal{Y}_N\right] \leq 2\E\left[\underset{{\bf h} \in \w{\bf H}_{D_N^1, D_N^2}}{\inf}{\left\{\w{R}_2({\bf h}) - \w{R}_2({\bf h}^*)\right\}} \biggm\vert \mathcal{Y}_N\right] + \E\left[\mathcal{E}_{\w{\bf h}} - 2\w{\mathcal{E}}_{\w{\bf h}} \biggm\vert \mathcal{Y}_N\right] \\
    \leq 2\underset{{\bf h} \in \w{\bf H}_{D_N^1, D_N^2}}{\inf}{\left\{\E\left[\w{R}_2({\bf h}) - \w{R}_2({\bf h}^*) \biggm\vert \mathcal{Y}_N\right]\right\}} + \E\left[\mathcal{E}_{\w{\bf h}} - 2\w{\mathcal{E}}_{\w{\bf h}} \biggm\vert \mathcal{Y}_N\right]\\
    = 2\underset{{\bf h} \in \w{\bf H}_{D_N^1, D_N^2}}{\inf}{\left\{R_2({\bf h}) - R_2({\bf h}^*) \right\}} + \E\left[\mathcal{E}_{\w{\bf h}} - 2\w{\mathcal{E}}_{\w{\bf h}} \biggm\vert \mathcal{Y}_N\right].
\end{multline*}

We deduce that
\begin{equation*}
    \E\left[R_2(\w{\bf h}) - R_2({\bf h}^*)\right] \leq 2\E\left[R_2(\widetilde{\bf h}) - R_2({\bf h}^*)\right] + \E\left[\mathcal{E}_{\w{\bf h}} - 2\w{\mathcal{E}}_{\w{\bf h}}\right].
\end{equation*}

We start with the following decomposition
\begin{equation*}
R_2(\widetilde{\bf h}) - R_2({\bf h}^*) = \left(R_2(\widetilde{{\bf h}}) - R_2(\h_{\bar{\bb}, \bar{\sigma}, \widehat{\p}})\right) + \left(R_2(\h_{\bar{\bb}, \bar{\sigma}, \widehat{\p}}) - R_2({\bf h}^*)\right).
\end{equation*}

From the definition of $\widetilde{\h}$, since $\h_{\bar{\bb}, \bar{\sigma}^2, \widehat{\p}} \in \w{\bf H}_{D_1,D_2}$, we deduce that
\begin{multline}\label{eq:eqDecomp1}
    \E\left[R_2(\w{\bf h}) - R_2({\bf h}^*)\right] \leq 2\E\left[R_2(\widetilde{\bf h}) - R_2({\bf h}^*)\right] + \E\left[\mathcal{E}_{\w{\bf h}} - 2\w{\mathcal{E}}_{\w{\bf h}}\right]\\
         \leq 2 \left(R_2(\h_{\bar{\bb}, \bar{\sigma}, \widehat{\p}}) - R_2({\bf h}^*)\right) + \E\left[\mathcal{E}_{\w{\bf h}} - 2\w{\mathcal{E}}_{\w{\bf h}}\right].
\end{multline}

Now, since for each $k \in [K]$, 
$h_k^*(X) = \mathbb{E}\left[Z_k|X\right]$, we have
\begin{equation*}
R_2(\h_{\bar{\bb}, \bar{\sigma}^2, \widehat{\p}}) - R_2({\bf h}^*) = \sum_{k=1}^K \mathbb{E}\left[\left(\h_{\bar{\bb}, \bar{\sigma}^2, \widehat{\p}}(X)- \h^*(X)\right)^2\right]  = 4 \sum_{k=1}^K \mathbb{E}\left[\left(\bar{\pi}^k_{\bar{\bb}, \bar{\sigma}^2, \widehat{\p}}(\bar{X}) -  \pi^*_k(X)\right)^2\right].
\end{equation*}

Therefore, from the above equation, we obtain that
\begin{multline}
\label{eq:eqDecomp2}
R_2(\h_{\bar{\bb}, \bar{\sigma}^2, \widehat{\p}}) - R_2({\bf h}^*) \leq 12\sum_{k=1}^K \mathbb{E}\left[\left(\bar{\pi}^k_{\bar{\bb}, \bar{\sigma}^2, \widehat{\p}}(\bar{X}) -  \bar{\pi}^k_{{\bb}^{*}, {\sigma}^{*2}, \widehat{\p}}(\bar{X})\right)^2\right] \\+ 12\sum_{k=1}^K \mathbb{E}\left[\left(\bar{\pi}^k_{{\bb}^*, {\sigma}^{*2}, \widehat{\p}}(\bar{X}) -  \bar{\pi}^k_{{\bb}^{*}, {\sigma}^{*2}, {\p}^*}(\bar{X})\right)^2\right] \\+12 \sum_{k=1}^K \mathbb{E}\left[\left(\bar{\pi}^k_{{\bb}^*, {\sigma}^{*2}, \p^*}(\bar{X}) - \pi^*_k(X) \right)^2\right].
\end{multline}

In the above decomposition, there are three terms. The first one is related to the bias term, the second one is related to estimation of the weights ${\bf p}^*$, and the last term is related to the discretization error. Finally, in view of Equation~\eqref{eq:eqDecomp1}, a last term is the variance term which is given
by the excess risk $E_{\widehat{\h}}$ where for each $\h \in \w{\bf H}_{D_1,D_2}$ 
\begin{equation*}
E_{\h} = R_2(\h)-R_2(\widetilde{\h}).
\end{equation*}

In the following steps, we provide a bound for each of them.

\paragraph*{Discretization term. }

From Lemma $7.4$ in~\cite{denis2020consistent}, we have that for each $k \in [K]$
\begin{equation}\label{eq:discretization-term}
\mathbb{E}\left[\left(\bar{\pi}^k_{{\bf b}^{*}, \sigma^{*2}, {\bf p}^*}(\bar{X}) -  \pi^*_k(X)\right)^2\right] \leq C_K \Delta,    
\end{equation}

\paragraph*{Estimation of the weights. } In this step, we consider 
the  term
\begin{equation*}
\sum_{k=1}^K \mathbb{E}\left[\left(\bar{\pi}^k_{{\bb}^*, {\sigma}^{*2}, \widehat{\p}}(\bar{X}) -  \bar{\pi}^k_{{\bb}^{*}, {\sigma}^{*2}, {\p}^*}(\bar{X})\right)^2\right],   
\end{equation*}

which is related to the estimation error of the weights ${\bf p}^*$.

We recall that for each $k \in [K]$, 
\begin{equation*}
\widehat{p}_k = \dfrac{1}{N} \sum_{i=1}^N\tilde{Y}_i,
\end{equation*}
where the sample $\mathcal{Y}_N = \left\{\tilde{Y}_1, \ldots, \tilde{Y}_N\right\}$ is independent of $\mathcal{D}_N$.
We introduce the event
\begin{equation*}
\mathcal{A} = \left\{\min(\widehat{\bf p}) \geq \dfrac{{\bf p}^*_{\min}}{2}\right\}.
\end{equation*}

We have
\begin{multline}
\label{eq:eqDecomp3}
\sum_{k=1}^K \mathbb{E}\left[\left(\bar{\pi}^k_{{\bb}^*, {\sigma}^{*2}, \widehat{\p}}(\bar{X}) -  \bar{\pi}^k_{{\bb}^{*}, {\sigma}^{*2}, {\p}^*}(\bar{X})\right)^2\right]\\
= \sum_{k=1}^K \mathbb{E}\left[\left({\rm softmax}_k^{\widehat{{\bf p}}}(\bar{F}^k_{{\bf b}^{*2}, \sigma^{*2}}(\bar{X}))   - {\rm softmax}_k^{{\bf p}^*}(\bar{F}^k_{{\bf b}^*, \sigma^{*2}}(\bar{X})\right)^2\right].
\end{multline}

Now we observe that on the event $\mathcal{A}$, we have that
$\p \mapsto {\rm softmax}_k^{{{\bf p}}}(\bar{F}^k_{{\bf b}^*, \sigma^{*2}}(\bar{X}))$ satisfies
for each $j \in [K]$
\begin{equation*}
\left|\partial_j  {\rm softmax}_k^{{{\bf p}}}(\bar{F}^k_{{\bf b}^*, \sigma^{*2}}(\bar{X}))\right| \leq \dfrac{2}{\p_{\min}^*}.
\end{equation*}

Therefore, from the mean value theorem, we get
\begin{equation*}
\left|{\rm softmax}_k^{\widehat{{\bf p}}}(\bar{F}^k_{{\bf b}^*, \sigma^{*2}}(\bar{X}))   - {\rm softmax}_k^{{\bf p}^*}(\bar{F}^k_{{\bf b}^*, \sigma^{*2}}(\bar{X})\right| \leq C_K \left\|\widehat{\p}- \p^*\right\|_2.
\end{equation*}

Hence from Equation~\ref{eq:eqDecomp3}, since for each tuple $(\bb,\sigma^2, \p)$,  the functions $\bar{\pi}_{\bb,\sigma^2,\p}$ are bounded by $1$, we deduce that
\begin{equation*}
\sum_{k=1}^K \mathbb{E}\left[\left(\bar{\pi}^k_{{\bb}^*, {\sigma}^{*2}, \widehat{\p}}(\bar{X}) -  \bar{\pi}^k_{{\bb}^{*}, {\sigma}^{*2}, {\p}^*}(\bar{X})\right)^2\right] \leq C_K\left(\left\|\widehat{\p}- \p^*\right\|^2_2+ \mathbb{P}\left(\mathcal{A}^c\right)\right).   
\end{equation*}

Now from Hoeffding Inequality, we have that $\mathbb{P}\left(\mathcal{A}^c\right) \leq C\exp\left(-CN\p_{\min}^2\right) \leq \frac{1}{N}$ for $N$ large enough.
Furthermore, we have that  $\mathbb{E}\left[\left\|\widehat{\p}- \p^*\right\|^2_2\right] \leq \dfrac{1}{N} $. Hence, the above equation yields
\begin{equation}\label{eq:weights-bound}
\sum_{k=1}^K \mathbb{E}\left[\left(\bar{\pi}^k_{{\bb}^*, {\sigma}^{*2}, \widehat{\p}}(\bar{X}) -  \bar{\pi}^k_{{\bb}^{*}, {\sigma}^{*2}, {\p}^*}(\bar{X})\right)^2\right] \leq C_K \dfrac{1}{N}.
\end{equation}

\paragraph*{Bias term. }

In this step, we consider the term
\begin{equation*}
\sum_{k=1}^K \mathbb{E}\left[\left(\bar{\pi}^k_{\bar{\bb}, \bar{\sigma}^2, \widehat{\p}}(\bar{X}) -  \bar{\pi}^k_{{\bb}^{*}, {\sigma}^{*2}, \widehat{\p}}(\bar{X})\right)^2\right].    
\end{equation*}

Since $x \mapsto {\rm softmax}^{\widehat{\p}}(x)$ is $1$ Lipschitz, we deduce that
\begin{equation}\label{eq:Bias}
\sum_{k=1}^K \mathbb{E}\left[\left(\bar{\pi}^k_{\bar{\bb}, \bar{\sigma}^2, \widehat{\p}}(\bar{X}) -  \bar{\pi}^k_{{\bb}^{*}, {\sigma}^{*2}, \widehat{\p}}(\bar{X})\right)^2\right] \leq 4K^2\sum_{k=1}^K
\mathbb{E}\left[\left(\bar{F}_{\bar{\bb}, \bar{\sigma}^2}^k\left(\bar{X}\right) - \bar{F}_{{\bb}^{*}, {\sigma}^{*2}}^k(\bar{X})\right)^2\right].
\end{equation}

Then, for each $k \in [K]$, we have
\begin{multline*}
\mathbb{E}\left[\left(\bar{F}_{\bar{\bb}, \bar{\sigma}^2}^k\left(\bar{X}\right) - \bar{F}_{{\bb}^{*}, {\sigma}^{*2}}^k(\bar{X})\right)^2\right] = \\
\mathbb{E}\left[\left(
\sum_{j=1}^{n-1}\left(\dfrac{\bar{b}_{k}}{\bar{\sigma}^{2}}(X_{j\Delta}) -  
\dfrac{b^*_{k}}{\sigma^{*2}}(X_{j\Delta})\right)(X_{(j+1)\Delta}-X_{j\Delta}) - 
\dfrac{\Delta}{2}\sum_{j=1}^{n-1}\left(\dfrac{\bar{b}^2_k}{\bar{\sigma}^2}(X_{j\Delta})- \dfrac{b^{*2}_k}{\sigma^{*2}}(X_{j\Delta})\right)
\right)^2
\right].
\end{multline*}

Let us introduce $\xi(s)=j\Delta \ \mathrm{if} \ s\in[j\Delta,(j+1)\Delta[$. Then, the {\it r.h.s.} in 
the above equation can be expressed as 
\begin{equation*}
\mathbb{E}\left[\left(\int_{0}^1 \left(\dfrac{\bar{b}_{k}}{\bar{\sigma}^{2}}(X_{\xi(s)}) -  
\dfrac{b^*_{k}}{\sigma^{*2}}(X_{\xi(s)})\right){\rm d}X_{s} - \dfrac{1}{2}\int_{0}^1 \left(\dfrac{\bar{b}^2_k}{\bar{\sigma}^2}(X_{\xi(s)})- \dfrac{b^{*2}_k}{\sigma^{*2}}(X_{\xi(s)})\right){\rm d}s\right)^2\right].
\end{equation*}

Therefore, we deduce that
\begin{multline}
\label{eq:eqDecomp4}
\mathbb{E}\left[\left(\bar{F}_{\bar{\bb}, \bar{\sigma}^2}^k\left(\bar{X}\right) - \bar{F}_{{\bb}^{*}, {\sigma}^{*2}}^k(\bar{X})\right)^2\right]  \leq  3 \mathbb{E}\left[\left(  \int_{0}^1 \left(\dfrac{\bar{b}_{k}}{\bar{\sigma}^{2}}(X_{\xi(s)}) -  
\dfrac{b^*_{k}}{\sigma^{*2}}(X_{\xi(s)})\right) b^*_Y(X_s) {\rm d}{s} \right)^2\right] \\
                              + 3 \mathbb{E}\left[\left(  \int_{0}^1 \left(\dfrac{\bar{b}_{k}}{\bar{\sigma}^{2}}(X_{\xi(s)}) -  
\dfrac{b^*_{k}}{\sigma^{*2}}(X_{\xi(s)}) \right) {\rm \sigma^*(X_s)d}W_{s} \right)^2\right] \\
                             + \dfrac{3}{4} \mathbb{E}\left[\left(   \int_{0}^1 \left(\dfrac{\bar{b}^2_k}{\bar{\sigma}^2}(X_{\xi(s)})- \dfrac{b^{*2}_k}{\sigma^{*2}}(X_{\xi(s)})\right){\rm d}s \right)^2\right].
\end{multline}

Now we bound each of the three terms in the {\it r.h.s.} of the above equation.
First, using the Cauchy-Schwarz inequality, we have that
\begin{multline}
\label{eq:eqDecomp5}
\mathbb{E}\left[\left(  \int_{0}^1 \left(\dfrac{\bar{b}_{k}}{\bar{\sigma}^{2}}(X_{\xi(s)}) -  
\dfrac{b^*_{k}}{\sigma^{*2}}(X_{\xi(s)}) \right) b^*_Y(X_s) {\rm d}{s} \right)^2\right]\\
\leq 
\mathbb{E}\left[\int_{0}^1 \left(\dfrac{\bar{b}_{k}}{\bar{\sigma}^{2}}(X_{\xi(s)}) -  
\dfrac{b^*_{k}}{\sigma^{*2}}(X_{\xi(s)})\right)^2 b^{*2}_Y(X_s){\rm d}s\right].
\end{multline}

Then from Lemma~\ref{lem:lemTech}, we deduce that for each $s \in [0,1]$
\begin{multline}
\label{eq:eqUtile1}
\left(\dfrac{\bar{b}_{k}}{\bar{\sigma}^{2}}(X_{\xi(s)}) -  
\dfrac{b^*_{k}}{\sigma^{*2}}(X_{\xi(s)})\right)^2  \leq \\ \sigma_0^{-8}\left( \left|\bar{b}_k(X_{\xi(s)}) -b_k^*(X_{\xi(s)})\right|\left|\sigma^{*2}(X_{\xi(s)})\right|+ \left|\sigma^{*2}(X_{\xi(s)})-\bar{\sigma}^2(X_{\xi(s)})\right|\left|\bar{b}_k(X_{\xi(s)})\right|\right)^2
\end{multline}

Note that since $\bar{b}_k \in \mathcal{S}_{D_1}$, thanks to Proposition~\ref{prop:Bias}, we have that for $x \in [-\log(N), \log(N)]$,  $\left|\bar{b}_k(x)\right| \leq \log(N)$.
Besides for $x \notin [-\log(N),\log(N)]$, $\bar{b}_k(x) = 0$.
Therefore, since $\sigma^{*2}(x) \leq \sigma_1^2$, we deduce that for $X_{\xi(s)} \in [-\log(N),\log(N)]$,
we have from Proposition~\ref{prop:Bias}
\begin{equation}
\label{eq:eqUtile2}
\left(\dfrac{\bar{b}_{k}}{\bar{\sigma}^{2}}(X_{\xi(s)}) -  
\dfrac{b^*_{k}}{\sigma^{*2}}(X_{\xi(s)})\right)^2  \leq C \log^4(N) \left(\dfrac{1}{D^2_1}+\dfrac{1}{D^2_2}\right). 
\end{equation}

Furthermore, for $X_{\xi(s)} \notin [-\log(N), \log(N)]$, we observe that since $b_k^*$ is Lipschitz,
\begin{equation}
\label{eq:eqUtile3}
\left(\dfrac{\bar{b}_{k}}{\bar{\sigma}^{2}}(X_{\xi(s)}) -  
\dfrac{b^*_{k}}{\sigma^{*2}}(X_{\xi(s)})\right)^2  \leq   \sigma_0^{-8} \sigma_1^2 \left| b_k^*(X_{\xi(s)})\right|^2 \leq C \sup_{t \in [0,1]} \left|X_t\right|^2.
\end{equation}

Since $b_Y^*$ is also Lipschitz, we have for each $s \in [0,1]$,
\begin{equation*}
 \left|b^{*2}_Y(X_s)\right| \leq \sup_{t \in [0,1]} \left|X_t\right|.
\end{equation*}

Therefore, in view of Equations~\ref{eq:eqDecomp5},~\ref{eq:eqUtile1}, ~\ref{eq:eqUtile2},~\ref{eq:eqUtile3},
we get 
\begin{multline*}
\mathbb{E}\left[\left(\int_{0}^1 \left(\dfrac{\bar{b}_{k}}{\bar{\sigma}^{2}}(X_{\xi(s)}) -  \dfrac{b^*_{k}}{\sigma^{*2}}(X_{\xi(s)}) \right) b^*_Y(X_s) {\rm d}s \right)^2\right] \leq \\ C\log^4(N)\left(\dfrac{1}{D_1^2}+\dfrac{1}{D_2^2}\right)\mathbb{E}\left[\sup_{t \in [0,1]} \left|X_t\right|^2\int_{0}^1 \one_{X_{\xi(s)} \in [-\log(N),\log(N)]} {\rm d}s\right] \\ + 
C \mathbb{E}\left[\sup_{t \in [0,1]} \left|X_t\right|^2\int_{0}^1 \one_{X_{\xi(s)} \notin [-\log(N),\log(N)]} {\rm d}s\right].
\end{multline*}

Since $\mathbb{E}\left[\sup_{t \in [0,1]} \left|X_t\right|^p\right] < C$ for $p \geq 1$,  Proposition~\ref{prop:exitProba} and
the above equation leads to
\begin{multline}
\label{eq:eqDecomp6}
 \mathbb{E}\left[\left(  \int_{0}^1 \left(\dfrac{\bar{b}_{k}}{\bar{\sigma}^{2}}(X_{\xi(s)}) -  \dfrac{b^*_{k}}{\sigma^{*2}}(X_{\xi(s)}) b^*_Y(X_s) {\rm d}{s}\right)  \right)^2\right] \leq \\   C\left[\log^4(N)\left(\dfrac{1}{D_1}+\dfrac{1}{D_2}\right) + \sum_{j=0}^{n-1} \mathbb{P}\left(\left|X_{j\Delta}\right| \geq \log(N)\right)\right] \\
 \leq C\left[\log^4(N)\left(\dfrac{1}{D_1^2}+\dfrac{1}{D_2^2}\right) + \dfrac{n}{\log(N)} \exp\left(-c\log^2(N)\right)\right].
\end{multline}

Now, we study 
\begin{equation*}
\mathbb{E}\left[\left(  \int_{0}^1 \left(\dfrac{\bar{b}_{k}}{\bar{\sigma}^{2}}(X_{\xi(s)}) -  
\dfrac{b^*_{k}}{\sigma^{*2}}(X_{\xi(s)}) \right) \sigma^*(X_s){\rm d}W_{s}  \right)^2\right].
\end{equation*}

We observe that
\begin{equation*}
\mathbb{E}\left[\left(\int_{0}^1 \left(\dfrac{\bar{b}_{k}}{\bar{\sigma}^{2}}(X_{\xi(s)}) -  
\dfrac{b^*_{k}}{\sigma^{*2}}(X_{\xi(s)})\right) \sigma^*(X_s){\rm d}W_{s} \right)^2\right] =\mathbb{E}\left[\int_{0}^1 \left(\dfrac{\bar{b}_{k}}{\bar{\sigma}^{2}}(X_{\xi(s)}) -  
\dfrac{b^*_{k}}{\sigma^{*2}}(X_{\xi(s)})\right)^2 \sigma^{*2}(X_s){\rm d} {s}\right]  .
\end{equation*}

Using similar arguments as for the bound of Equation~\eqref{eq:eqDecomp6},
we also obtain that
\begin{multline}\label{eq:Bound-Term2}
\mathbb{E}\left[\left(\int_{0}^1 \left(\dfrac{\bar{b}_{k}}{\bar{\sigma}^{2}}(X_{\xi(s)}) -  
\dfrac{b^*_{k}}{\sigma^{*2}}(X_{\xi(s)})\right) \sigma^*(X_s){\rm d}W_{s} \right)^2\right]\\
\leq C\left[\log^4(N)\left(\dfrac{1}{D_1^2}+\dfrac{1}{D_2^2}\right) + \dfrac{n}{\log(N)} \exp\left(-c\log^2(N)\right)\right].
\end{multline}

Finally, it remains to bound the term
\begin{equation*}
    \mathbb{E}\left[\left(\int_{0}^1 \left(\dfrac{\bar{b}^2_k}{\bar{\sigma}^2}(X_{\xi(s)})- \dfrac{b^{*2}_k}{\sigma^{*2}}(X_{\xi(s)})\right){\rm d}s \right)^2\right].
\end{equation*}

We have,
\begin{multline*}
\mathbb{E}\left[\left(\int_{0}^1 \left(\dfrac{\bar{b}^2_k}{\bar{\sigma}^2}(X_{\xi(s)})- \dfrac{b^{*2}_k}{\sigma^{*2}}(X_{\xi(s)})\right){\rm d}s \right)^2\right] = \mathbb{E}\left[\left(\int_{0}^{1}{\dfrac{\bar{b}_k^2\sigma^{*2} - b_k^{*2}\bar{\sigma}^2}{\bar{\sigma}^2\sigma^{*2}}(X_{\xi(s)})ds}\right)^2\right]\\
= \mathbb{E}\left[\left(\int_{0}^{1}{\dfrac{\left(\bar{b}_k^2 - b_k^{*2}\right)\sigma^{*2} + b_k^{*2}\left(\sigma^{*2} - \bar{\sigma}^2\right)}{\bar{\sigma}^2\sigma^{*2}}(X_{\xi(s)})ds}\right)^2\right].
\end{multline*}

We then obtain from Proposition~\ref{prop:Bias} that
\begin{multline}\label{eq:Bound-Term3}
    \mathbb{E}\left[\left(\int_{0}^1 \left(\dfrac{\bar{b}^2_k}{\bar{\sigma}^2}(X_{\xi(s)})- \dfrac{b^{*2}_k}{\sigma^{*2}}(X_{\xi(s)})\right){\rm d}s \right)^2\right] \leq C\left(\log^{2}(N)\mathbb{E}\left[\left\|\bar{b}_k - b_k^*\right\|_{n,k}^2\right] + \log^{4}(N)\mathbb{E}\left[\left\|\bar{\sigma}^2 - \sigma^{*2}\right\|_n^2\right]\right)\\
    \leq C\left(\dfrac{\log^{4}(N)}{D_1^2} + \dfrac{\log^{6}(N)}{D_2^2}\right).
\end{multline}

We finally obtain from Equations~\eqref{eq:Bias}, \eqref{eq:eqDecomp4}, \eqref{eq:eqDecomp6}, \eqref{eq:Bound-Term2} and \eqref{eq:Bound-Term3} that
\begin{equation}\label{eq:Bound-Bias}
\sum_{k=1}^K \mathbb{E}\left[\left(\bar{\pi}^k_{\bar{\bb}, \bar{\sigma}, \widehat{\p}}(\bar{X}) -  \bar{\pi}^k_{{\bb}^{*}, {\sigma}^*, \widehat{\p}}(\bar{X})\right)^2\right] \leq C\left[\dfrac{\log^4(N)}{D_1^2} + \dfrac{\log^6(N)}{D_2^2} + \dfrac{n}{\log(N)}\exp\left(-c\log^2(N)\right)\right],  
\end{equation}

and from Equations~\eqref{eq:Bound-Bias}, \eqref{eq:weights-bound}, \eqref{eq:discretization-term} and \eqref{eq:eqDecomp2}, we obtain
\begin{multline}\label{eq:bayes-step-p}
    \E\left[R_2(\widetilde{\bf h}) - R_2({\bf h}^*)\right] \leq \E\left[R_2(\h_{\bar{\bb}, \bar{\sigma}, \widehat{\p}}) - R_2({\bf h}^*)\right] \\
    \leq C\left[\dfrac{\log^4(N)}{D_1^2} + \dfrac{\log^6(N)}{D_2^2} + \dfrac{n}{\log(N)}\exp\left(-c\log^2(N)\right) + \dfrac{1}{N} + \Delta\right]. 
\end{multline}

Going back to Equation~\eqref{eq:eqDecomp1}, we have
\begin{multline}
\label{eq:eqDecomp}
    R_2(\widehat{{\bf h}}) - R_2({\bf h}^*) \leq \E\left[\mathcal{E}_{\w{\bf h}} - 2\w{\mathcal{E}}_{\w{\bf h}}\right]\\
     + C\left[\dfrac{\log^4(N)}{D_1^2} + \dfrac{\log^6(N)}{D_2^2} + \dfrac{n}{\log(N)}\exp\left(-c\log^2(N)\right) + \dfrac{1}{N} + \Delta\right].
\end{multline}

We now focus on the estimation term which is on the $r.h.s$ of the above equation.

\paragraph*{Estimation term. }

Recall that $\widetilde{\bf h}$, which is viewed as the oracle counterpart of $\widehat{\bf h}$, is defined as follows:
\begin{equation*}
    \widetilde{{\bf h}} \in \argmin{{\bf h} \in \w{\bf H}_{D_1,D_2}} R_2({\bf h}).
\end{equation*}

For any score function ${\bf h} \in \w{\bf H}_{D_1,D_2}$, set
\begin{equation*}
    E_{\bf h}:= R_2({\bf h}) - R_2(\widetilde{\bf h}) ~~ \mathrm{and} ~~ \widehat{E}_{\bf h} := \widehat{R}_2({\bf h}) - \widehat{R}_2(\widetilde{\bf h}).
\end{equation*}

We have,
\begin{multline*}
    \mathcal{E}_{\w{\bf h}} - 2\w{\mathcal{E}}_{\w{\bf h}} = R_2(\w{\bf h}) - R_2({\bf h}^*) - 2\left(\w{R}_2(\w{\bf h}) - \w{R}_2({\bf h}^*)\right)\\
    = E_{\w{\bf h}} - 2\w{E}_{\w{\bf h}} + R_2(\widetilde{\bf h}) - R_2({\bf h}^*) - 2\left(\w{R}_2(\widetilde{\bf h}) - \w{R}_2({\bf h}^*)\right)\\
    \leq E_{\widehat{\bf h}} - E_{\widehat{\bf h}_{\varepsilon}} + 2\left(\widehat{E}_{\widehat{\bf h}_{\varepsilon}} -\widehat{E}_{\widehat{\bf h}}\right) + E_{\widehat{\bf h}_{\varepsilon}} - 2\widehat{E}_{\widehat{\bf h}_{\varepsilon}} + R_2(\widetilde{\bf h}) - R_2({\bf h}^*) - 2\left(\w{R}_2(\widetilde{\bf h}) - \w{R}_2({\bf h}^*)\right).
\end{multline*}

It results that from the above equation and Equation~\eqref{eq:eqDecomp1}, \eqref{eq:bayes-step-p} and \eqref{eq:eqDecomp} that
\begin{multline}\label{eq:First-D-term}
\mathbb{E}\left[R_2(\widehat{{\bf h}}) - R_2({\bf h}^*)\right] \leq \mathbb{E}\left[E_{\widehat{\bf h}} - E_{\widehat{\bf h}_{\varepsilon}}\right] + 2\mathbb{E}\left[\widehat{E}_{\widehat{\bf h}_{\varepsilon}}-\widehat{E}_{\widehat{\bf h}}\right] + \mathbb{E}\left[E_{\widehat{\bf h}_{\varepsilon}} - 2\widehat{E}_{\widehat{\bf h}_{\varepsilon}}\right]\\
+ C\left[\dfrac{\log^4(N)}{D_1^2} + \dfrac{\log^6(N)}{D_2^2} + \dfrac{n}{\log(N)}\exp\left(-c\log^2(N)\right) + \dfrac{1}{N} + \Delta\right].
\end{multline}

We first focus on the upper bound of 
$$\mathbb{E}\left[E_{\widehat{\bf h}} - E_{\widehat{\bf h}_{\varepsilon}}\right].$$
    
From the result of Corollary~\ref{coro:CardH}, we have 
\begin{equation}
    \label{eq:phi-bound1}
    \mathbb{E}\left[E_{\widehat{\bf h}} - E_{\widehat{\bf h}_{\varepsilon}}\right] = \E\left[R_2(\widehat{\bf h})- R_2(\widehat{\bf h}_{\varepsilon})\right] \leq CD_1\log^{3}(N)\varepsilon^{2}.
\end{equation}

We now study the upper bound of 
$$2\mathbb{E}\left[\widehat{E}_{\widehat{\bf h}_{\varepsilon}} - \widehat{E}_{\widehat{\bf h}}\right].$$

We have $\widehat{E}_{\widehat{\bf h}_{\varepsilon}} - \widehat{E}_{\widehat{\bf h}} = \widehat{R}_2(\widehat{\bf h}_{\varepsilon})-\widehat{R}_2(\widehat{\bf h})$, and from Corollary~\ref{coro:CardH}, we obtain
\begin{equation}\label{eq:phi-bound2}
    \E\left[\widehat{E}_{\widehat{\bf h}_{\varepsilon}} - \widehat{E}_{\widehat{\bf h}}\right] \leq C\left(D_1^{1/2}n^{1/2}(\log(N))^{3/2} + D_1\log^3(N)\right)\varepsilon.
\end{equation}

We conclude from Equations~\eqref{eq:First-D-term}, \eqref{eq:phi-bound1} and \eqref{eq:phi-bound2} that
\begin{multline*}
    \mathbb{E}\left[R_2(\widehat{{\bf h}}) - R_2({\bf h}^*)\right]   \leq  CD_1\log^{3}(N)\varepsilon^{2} + CD_1\log^{3}(N)\varepsilon + \mathbb{E}\left[E_{\widehat{\bf h}_{\varepsilon}} - 2\widehat{E}_{\widehat{\bf h}_{\varepsilon}}\right]\\
     \leq CD_1\log^{3}(N)\varepsilon^{2} + CD_1\log^{3}(N)\varepsilon + \mathbb{E}\left[\underset{{\bf h} \in \w{\bf H}^{\varepsilon}_{D_1, D_2}}{\sup}{\left\{E_{\bf h} - 2\widehat{E}_{\bf h}\right\}}\right].
\end{multline*}
Thus, we have:
\begin{equation} \label{eq:upper-bound-T1}
     \mathbb{E}\left[R_2(\widehat{{\bf h}}) - R_2({\bf h}^*)\right] \leq CD_1\log^{3}(N)\varepsilon^{2} + C\left(D_1^{1/2}\log^{3/2}(N)\varepsilon\sqrt{n} + \log^{3}(N)D_1\varepsilon\right) + T_1,
\end{equation}
where $T_1 = \mathbb{E}\left[\underset{{\bf h} \in \w{\bf H}^{\varepsilon}_{D_1, D_2}}{\sup}{\left\{E_{\bf h} - 2\widehat{E}_{\bf h}\right\}}\right]$. For all $u>0$,
\begin{multline*}
    T_1 = \int_{0}^{+\infty}{\mathbb{P}\left(\underset{{\bf h} \in \w{\bf H}^{\varepsilon}_{D_1, D_2}}{\sup}{E_{{\bf h}} - 2\widehat{E}_{{\bf h}}} \geq  t\right)dt}\\
    \leq u+\int_{u}^{+\infty}{\mathbb{P}\left(\underset{{\bf h} \in \w{\bf H}^{\varepsilon}_{D_1, D_2}}{\sup}{E_{{\bf h}} - 2\widehat{E}_{{\bf h}}} \geq  t\right)dt} = u + \int_{u}^{+\infty}{L_tdt}
\end{multline*}

where $L_t=\mathbb{P}\left(\underset{{\bf h} \in \w{\bf H}^{\varepsilon}_{D_1, D_2}}{\sup}{E_{{\bf h}} - 2\widehat{E}_{{\bf h}}} \geq  t\right)$, and for all $t\geq u$,
\begin{align*}
L_t = \mathbb{P}\left(\underset{{\bf h} \in \w{\bf H}_{D_1, D_2}^{\varepsilon}}{\bigcup}{\left\{E_{{\bf h}} - 2\widehat{E}_{{\bf h}} \geq  t\right\}}\right) \leq \sum_{{\bf h} \in \w{\bf H}_{D_1, D_2}^{\varepsilon}}{\mathbb{P}\left(E_{{\bf h}} - 2\widehat{E}_{{\bf h}} \geq  t\right)}.
\end{align*}

For all ${\bf h} \in \w{\bf H}^{\varepsilon}_{D_1, D_2}$, one has:
\begin{align*} 
\widehat{E}_{{\bf h}} = \widehat{R}_2\left({\bf h}\right) - \widehat{R}_2(\widetilde{\bf h}) = \dfrac{1}{N}\sum_{j=1}^{N}{\sum_{k=1}^{K}{\left\{\left(1 - Z^{j}_{k}h^{k}(\bar{X}^{j})\right)^2 - \left(1 - Z^{j}_{k}\widetilde{h}^{k}(\bar{X}^{j})\right)^2\right\}}} = \dfrac{1}{N}\sum_{j=1}^{N}{S_j},
\end{align*}

with
$$
S_j = \sum_{k=1}^{K}{\left\{\left(1 - Z^{j}_{k}h^{k}(\bar{X}^{j})\right)^2 - \left(1 - Z^{j}_{k}\widetilde{h}^{k}(\bar{X}^{j})\right)^2\right\}} = \ell_{\bf h}(Z,X) - \ell_{\widetilde{\bf h}}(Z,X)
$$

where,
$$ \ell_{\bf h}(Z,X) = \sum_{k=1}^{K}{(1 - Z^{j}_{i}h^{k}(\bar{X}^{j}))^2}. $$
We deduce for all $j \in [N]$ that
\begin{multline*}
    \E\left(S^{2}_{j}\right) \leq \E\left[\left(\ell_{\bf h}(Z,X) - \ell_{\widetilde{\bf h}}(Z,X)\right)^2\right] \\
    \leq K\sum_{k=1}^{K}{\E\left[\left(h^{k} - \widetilde{h}^{k}\right)^2(\bar{X}^{j})\left(2 - Z^{j}_{k}\left(h^{k} - \widetilde{h}^{k}\right)(\bar{X}^{j})\right)^2\right]} \\
    \leq 16K\sum_{k=1}^{K}{\E\left[\left(h^{k}- \widetilde{h}^{i}\right)^2(\bar{X}^{j})\right]},
\end{multline*}

since $|Z^{j}_{k}| = 1$ and $|h^{k}|, ~ |\widetilde{h}^{k}| \leq 1$. Then, for all $j \in [N]$ and for all $t \geq \E\left[R_2(\widetilde{\bf h}) - R_2({\bf h}^*)\right]$,
\begin{multline}
\label{eq:Bound-Sj-square}
    \E\left(S^{2}_{j}\right) \leq 16K\sum_{i=1}^{K}{\E\left[\left(h^{k} - \widetilde{h}^{k}\right)^2(\bar{X}^{j})\right]}\\
    \leq 32K\sum_{i=1}^{K}{\E\left[\left(h^{k} - h^{*k}\right)^2(\bar{X}^{j})\right]} + 32K\sum_{i=1}^{K}{\E\left[ \left(\widetilde{h}^{k} - h^{*k}\right)^2(\bar{X}^{j})\right]}\\
    \leq 32K\left[R_2({\bf h}) - R_2({\bf h}^*)\right] + 32K\E\left[R_2(\widetilde{\bf h}) - R_2({\bf h}^*)\right]\\
    \leq 32K(E_{\bf h} + t).
\end{multline}

From Equation~\eqref{eq:Bound-Sj-square}, there exists a constant $C_K>0$ depending on $K$ such that
\begin{align*}
    \forall j\in[\![1,N]\!], \ \ \mathbb{E}\left(S^{2}_{j}\right) \leq &~  C_K(E_{\bf h} + t) = v, ~~~ \forall ~ t \geq \E\left[R_2(\widetilde{\bf h}) - R_2({\bf h}^*)\right].
\end{align*}
Furthermore, there exists a constant $c_K>0$ depending on $K$ such that for all $j \in \{1, \ldots, N\}, ~ |S_j| \leq c_K$. Thus, we obtain from Proposition~\ref{prop:bernstein} that for all ${\bf h} \in \w{\bf H}^{\varepsilon}_{D_1, D_2}$,
\begin{multline*}  
\mathbb{P}\left(E_{\bf h} - 2 \w{E}_{\bf h} \geq t\right) = \mathbb{P}\left(E_{\bf h} - \w{E}_{\bf h} \geq \dfrac{E_{\bf h} + t}{2}\right)\\
\leq \exp\left(-\frac{N\left(E_{\bf h} + t\right)^2/8}{CK\left(E_{\bf h} + t\right)+c_K\left(E_{\bf h} + t\right)}\right)\\
= \exp\left(-\frac{N\left(E_{\bf h} + t\right)}{8(CK+c_K)}\right).
\end{multline*}

Finally, since $\exp\left(-\frac{NE_{\bf h}}{8(CK + c_K)}\right) \leq  1$, we obtain that for all ${\bf h} \in \w{\bf H}^{\varepsilon}_{D_1, D_2}$,
\begin{align*} 
\forall ~ t \geq \E\left[R_2(\widetilde{\bf h}) - R_2({\bf h}^*)\right], ~~ \mathbb{P}\left(E_{\bf h} - 2 \w{E}_{\bf h} \geq t\right)  \leq \exp\left(-\frac{Nt}{8(CK + c_K)}\right).
\end{align*}
It results that for all $t \geq \E\left[R_2(\widetilde{\bf h}) - R_2({\bf h}^*)\right]$, 
\begin{equation*}
    L_t \leq {\rm Card}\left(\w{\bf H}_{D_1, D_2}\right) \exp\left(-\frac{Nt}{8(CK+c_K)}\right) = \mathcal{N}\left(\varepsilon, \mathcal{S}_{D_1}^K \times \widetilde{\mathcal{S}}_{D_2}, \|\cdot\|_{\infty}\right)\exp\left(-\frac{Nt}{8(CK+c_K)}\right),
\end{equation*} 

and for all $u \geq \E\left[R_2(\widetilde{\bf h}) - R_2({\bf h}^*)\right]$,
\begin{multline*}
    T_1 \leq u+\int_{u}^{+\infty}{L_tdt} = u + \mathcal{N}\left(\varepsilon, \mathcal{S}_{D_1}^K \times \widetilde{\mathcal{S}}_{D_2}, \|\cdot\|_{\infty}\right)\int_{u}^{+\infty}{\exp\left(-\frac{Nt}{8(CK+c_K)}\right)dt}\\
    \leq u + \frac{8(CK+c_K)\mathcal{N}\left(\varepsilon, \mathcal{S}_{D_1}^K \times \widetilde{\mathcal{S}}_{D_2}, \|\cdot\|_{\infty}\right)}{N}\exp\left(-\frac{Nu}{8(CK+c_K)}\right).
\end{multline*}

From Equation~\eqref{eq:bayes-step-p}, we set 
\begin{multline*}
    u = \E\left[R_2(\widetilde{\bf h}) - R_2({\bf h}^*)\right] + \dfrac{8(CK+c)\log\left[\mathcal{N}\left(\varepsilon, \mathcal{S}_{D_1}^K \times \widetilde{\mathcal{S}}_{D_2}, \|\cdot\|_{\infty}\right)\right]}{N} \\
    \leq C\left[\dfrac{\log^4(N)}{D_1^2} + \dfrac{\log^6(N)}{D_2^2} + \dfrac{n\exp\left(-c\log^2(N)\right)}{\log(N)} + \dfrac{1}{N} + \Delta + \dfrac{\log\left[\mathcal{N}\left(\varepsilon, \mathcal{S}_{D_1}^K \times \widetilde{\mathcal{S}}_{D_2}, \|\cdot\|_{\infty}\right)\right]}{N} \right], 
\end{multline*} 

and we deduce that 
\begin{align*} 
T_1 \leq C\left(\dfrac{\log^4(N)}{D_1^2} + \dfrac{\log^6(N)}{D_2^2} + \dfrac{n}{\log(N)}\exp\left(-c\log^2(N)\right) + \dfrac{1}{N} + \Delta + \dfrac{\log\left[\mathcal{N}\left(\varepsilon, \mathcal{S}_{D_1}^K \times \widetilde{\mathcal{S}}_{D_2}, \|\cdot\|_{\infty}\right)\right]}{N}\right).
\end{align*}

From Proposition~\ref{prop:EpsNet} and for $\varepsilon = \dfrac{\sqrt{D_{\max} + M} \log^3(N)}{N^3}$ and $D_{\max} \leq N$, there exists a constant $C > 0$ depending on $K$ such that
\begin{equation}\label{eq:Upper-Bound-T1-2}
    T_1 \leq C\left(\dfrac{\log^4(N)}{D_1^2} + \dfrac{\log^6(N)}{D_2^2} + \dfrac{D_{1}\log(N)}{N} + \dfrac{D_{2}\log(N)}{N} + \dfrac{n\exp\left(-c\log^2(N)\right)}{\log(N)} + \dfrac{1}{N} + \Delta\right).
\end{equation}

Finally, for $\Delta = \mathrm{O}(1/N)$, it follows from Equations~\eqref{eq:upper-bound-T1} and \eqref{eq:Upper-Bound-T1-2} that,
\begin{multline*}
\mathbb{E}\left[R_2(\widehat{{\bf h}}) - R_2({\bf h}^*)\right] \leq C\left(\dfrac{\log^4(N)}{D_1^2} + \dfrac{\log^6(N)}{D_2^2} + \dfrac{D_{1}\log(N)}{N} + \dfrac{D_{2}\log(N)}{N}\right)\\
+ C\left(\dfrac{n}{\log(N)}\exp\left(-c\log^2(N)\right) + \dfrac{1}{N} + \Delta\right)\\
\leq C\left(\dfrac{\log^4(N)}{D_1^2} + \dfrac{\log^6(N)}{D_2^2} + \dfrac{D_{1}\log(N)}{N} + \dfrac{D_{2}\log(N)}{N} + \Delta\right).
\end{multline*}
\end{proof}

\subsection{Proof of Corollary~\ref{coro:Consistency}}

\begin{proof}
    We recall the following relation between the excess classification risk of $\w{g} = g_{\w{\bf h}}$ and the excess risk of $\w{\bf h}$ provided by Proposition~\ref{prop:RelatingExcessRisks}:
    \begin{equation*}
        R(\w{g}) - R(g^*) = R(g_{\w{\bf h}}) - R(g_{\bf h^{*}})\leq\frac{1}{\sqrt{2}}\left({R_2}(\w{\bf h})-{R_2}(\bf h^{*})\right)^{1/2}.
    \end{equation*}
    
    We deduce that
    \begin{equation}\label{eq:ERisk1}
        \mathbb{E}\left[R(\w{g}) - R(g^*)\right] \leq \left(\mathbb{E}\left[{R_2}(\w{\bf h})-{R_2}(\bf h^{*})\right]\right)^{1/2}.
    \end{equation}
    
    Then, from Theorem~\ref{thm:EstimErrorERM} and Equation~\eqref{eq:ERisk1}, there exists a constant $C>0$ such that
    \begin{equation*}
        \mathbb{E}\left[R(\w{g}) - R(g^*)\right] \leq C\left(\log^{3}(N)\left[\dfrac{1}{D_1} + \dfrac{1}{D_2}\right] + \left(\dfrac{D_1\log(N)}{N}\right)^{1/2} + \left(\dfrac{D_2\log(N)}{N}\right)^{1/2} + \sqrt{\Delta}\right).
    \end{equation*}
    
    For $D_1 = D_2 = D_N$ and $n = \mathrm{O}(1/N)$, there exists a new constant $C>0$ such that
    \begin{equation*}
        \mathbb{E}\left[R(\w{g}) - R(g^*)\right] \leq C\left(\left(\dfrac{\log^{6}(N)}{D_N^2}\right)^{1/2} + \left(\dfrac{D_N\log(N)}{N}\right)^{1/2} + N^{-1/2}\right).
    \end{equation*}
    
    We finally obtain the consistency of the ERM classifier $\widehat{g} = g_{\w{\bf h}}$ under the assumptions
    \begin{equation*}
        \dfrac{\log^{6}(N)}{D_N^2} \longrightarrow 0, ~~ \dfrac{D_N\log(N)}{N} \longrightarrow 0 ~~ \mathrm{as} ~~ N \rightarrow \infty.
    \end{equation*}
\end{proof}

\section{Proofs of Section~\ref{sec:RatesConv}}

\subsection{Proof of Corollary~\ref{coro:RateMA}}

\begin{proof}
    The result of Theorem only focus on the set of Lipschitz functions. If we restrict the set of Lipschitz functions to the H\"older spaces $\Sigma(\beta_1, R)^K$ and $\Sigma(\beta_2, R)$ respectively for the functions ${\bf b}^* = \left(b_1^*, \ldots, b_K^*\right)$ and $\sigma^{*2}$, then, for $\Delta = \mathrm{O}(1/N)$, we deduce from the proof of Theorem~\ref{thm:EstimErrorERM} (see Equation~\eqref{eq:Bound-Term3}) that
    \begin{multline}\label{eq:ExcessRiskHolder}
        \mathbb{E}\left[{R_2}(\w{\bf h})-{R_2}(\bf h^{*})\right]\\
        \leq C \log^4(N)\left( \underset{k \in [K]}{\max}{\underset{f \in \mathcal{S}_{D_1}}{\inf}{\left\|f - b_k^*\right\|_{n}^2}} + \underset{f \in \widetilde{\mathcal{S}}_{D_2}}{\inf}{\left\|f - \sigma^{*2}\right\|_{n}^2}\right) + C\left(\dfrac{D_1\log(N)}{N} + \dfrac{D_2\log(N)}{N} + \Delta\right).
    \end{multline}
    
    For $D_1 = D_{N}^1$ and $D_2 = D_{N}^2$ and under Assumptions~\ref{ass:RegEll} and \ref{ass:Novikov}, we obtain from the result of Proposition~\ref{prop:RelatingExcessRisks} together with Equation~\eqref{eq:ExcessRiskHolder} that
    \begin{multline*}
        \mathbb{E}\left[R(\w{g}) - R(g^*)\right] = \mathbb{E}\left[R(g_{\w{\bf h}}) - R(g_{\bf h^{*}})\right] \leq \dfrac{1}{\sqrt{2}}\left({R_2}(\w{\bf h})-{R_2}(\bf h^{*})\right)^{1/2}\\
        \leq C\log^2(N) \left[ \left(\underset{k \in [K]}{\max}{\underset{f \in \mathcal{S}_{D_1}}{\inf}{\left\|f - b_k^*\right\|_{n}^2}}\right)^{1/2} + \left(\underset{f \in \widetilde{\mathcal{S}}_{D_2}}{\inf}{\left\|f - \sigma^{*2}\right\|_{n}^2}\right)^{1/2}\right]\\
        + C\left[\left(\dfrac{D_{N}^1\log(N)}{N}\right)^{1/2} + \left(\dfrac{D_{N}^2\log(N)}{N}\right)^{1/2} + \sqrt{\Delta}\right].
    \end{multline*}
    
    From \cite{denis2020ridge}, \textit{Lemma D.2}, there exist constants $C_1, C_2 > 0$ such that
    \begin{align*}
        \forall ~ k \in [K], ~~ &\underset{f \in \mathcal{S}_{D_1}}{\inf}{\left\|f - b_k^*\right\|_{n}^2} \leq C_1\left(\dfrac{A_{N}}{D_{N}^1}\right)^{2\beta_1}\\
        & \underset{f \in \widetilde{\mathcal{S}}_{D_2}}{\inf}{\left\|f - \sigma^{*2}\right\|_{n}^2} \leq C_2\left(\dfrac{A_N}{D_{N}^2}\right)^{2\beta_2}.
    \end{align*}
    
    Then, for $n = \mathrm{O}(1/N), ~ A_N = \log(N)$, $D_{N}^1 = N^{1/(2\beta_1+1)}$ and $D_{N}^2 = N^{1/(2\beta_2+1)}$, we deduce that
    \begin{equation*}
        R(\w{g}) - R(g^*) \leq C\left(\log^{\beta_1+2}(N)N^{-\beta_1/(2\beta_1+1)} + \log^{\beta_2+2}(N)N^{-\beta_2/(2\beta_2+1)}\right).
    \end{equation*}
\end{proof}

\subsection{Proof of Proposition~\ref{prop:TheMargin}}

\begin{proof}
Define for all function $h \in \mathbb{L}^{2}(\R)$ and from the diffusion process $X = (X_t)_{t \in [0,1]}$ the following pseudo-norm:
\begin{equation*}
    \|h\|^{2}_{X} := \int_{0}^{1}{h^{2}(X_s)ds}.
\end{equation*}
Then, from Assumption~\ref{ass:Margin}, we obtain that
$$ \underline{\Delta}_{b} \leq \left\|b^{*}_{1} - b^{*}_{0}\right\|_{X} \leq \overline{\Delta}_{b} $$
where $\underline{\Delta}_{b}$ and $\overline{\Delta}_{b}$ are positive real number such that $\overline{\Delta}_{b} > \underline{\Delta}_{b}$. Let $m \geq 1$ be an integer. Consider the subdivision 
$$\Gamma_{m} := \left\{t_{k} = \underline{\Delta}_{b} + k\frac{\overline{\Delta}_{b} - \underline{\Delta}_{b}}{m}, ~~ k = 0, \ldots, m\right\}$$

of the compact interval $[\underline{\Delta}_{b}, \overline{\Delta}_{b}]$. Set $f^{*} = b^{*}_{1} - b^{*}_{0}$, which implies that $\Delta^{2}_{b^{*}} = \E\left[\int_{0}^{1}{f^{*2}(X_s)ds}\right]$. 
Then, from the total probability formula, we have
\begin{align*}
    \P\left(0 < \left|\pi_1^*(X) - \frac{1}{2}\right| \leq \varepsilon\right) = &~ \sum_{k=0}^{m-1}{\P\left(0 < \left|\pi_1^*(X) - \frac{1}{2}\right| \leq \varepsilon \biggm\vert \|f^{*}\|_{X} \in [t_k,t_{k+1}]\right)\P\left(\|f^{*}\|_{X} \in [t_k,t_{k+1}]\right)}.
\end{align*}

For all $k \in [\![0,m-1]\!]$,
\begin{align*}
     \P\left(0 < \left|\pi_1^*(X) - \frac{1}{2}\right| \leq \varepsilon \biggm\vert \|f^{*}\|_{X} \in [t_k,t_{k+1}]\right) = &~ \P\left(0 < \frac{|Q_1(X) - Q_0(X)|}{2(Q_0(X) + Q_1(X))} \leq \varepsilon \biggm\vert \|f^{*}\|_{X} \in [t_k,t_{k+1}]\right) \\
     = &~ T^{k}_{1} + T^{k}_{2}
\end{align*}
 where,
\begin{align*}
    T^{k}_1 = &~ \P\left(\left\{Q_1(X) < Q_0(X)\right\} \cap \left\{\frac{|Q_1(X) - Q_0(X)|}{2(Q_0(X) + Q_1(X))} \leq \varepsilon\right\} \biggm\vert \|f^{*}\|_{X} \in [t_k,t_{k+1}]\right) \\
    T^{k}_2 = &~ \P\left(\left\{Q_1(X) > Q_0(X)\right\} \cap \left\{\frac{|Q_1(X) - Q_0(X)|}{2(Q_0(X) + Q_1(X))} \leq \varepsilon\right\} \biggm\vert \|f^{*}\|_{X} \in [t_k,t_{k+1}]\right).
\end{align*}

\paragraph{Upper-bound of $T^{k}_1$.}
The first term $T_1$ satisfies
\begin{align*}
    T^{k}_1 \leq &~ \P\left(\left\{Q_1(X) < Q_0(X)\right\} \cap \left\{\left|\frac{Q_1(X)}{Q_0(X)} - 1\right| \leq 4\varepsilon \right\} \biggm\vert \|f^{*}\|_{X} \in [t_k,t_{k+1}]\right) \\
    = &~ \frac{1}{2}T^{k}_{1,1} + \frac{1}{2}T^{k}_{1,2}
\end{align*}

where,
\begin{align*}
    T^{k}_{1,1} = &~ \P\left(\left\{Q_1(X) < Q_0(X)\right\} \cap \left|\frac{Q_1(X)}{Q_0(X)} - 1\right| \leq 4\varepsilon \biggm\vert \|f^*\|_{X} \in [t_k,t_{k+1}], Y = 0 \right), \\
    T^{k}_{1,2} = &~ \P\left(\left\{Q_1(X) < Q_0(X)\right\} \cap \left|\frac{Q_1(X)}{Q_0(X)} - 1\right| \leq 4\varepsilon \biggm\vert \|f^*\|_{X} \in [t_k,t_{k+1}], Y = 1 \right).
\end{align*}

On the event $\{Y = 0\}$, we have
\begin{align*}
    \frac{Q_1}{Q_0}(X) = &~ \exp\left(\int_{0}^{1}{(b^{*}_{1} - b^{*}_{0})(X_s)b^{*}_{0}(X_s)ds} - \frac{1}{2}\int_{0}^{1}{(b^{*}_{1} - b^{*}_{0})(b^{*}_{0} + b^{*}_{1})(X_s)ds} + \int_{0}^{1}{(b^{*}_{1} - b^{*}_{0})(X_s)dW_s}\right) \\
    = &~ \exp\left(-\frac{1}{2}\int_{0}^{1}{f^{*2}(X_s)ds} + \int_{0}^{1}{f^{*}(X_s)dW_s}\right) \\
    = &~ \exp\left(-\frac{1}{2}\left\|f^{*}\right\|^{2}_{X} + \int_{0}^{1}{f^{*}(X_s)dW_s}\right).
\end{align*}
On the event $\{Y = 1\}$, we have
\begin{align*}
    \frac{Q_1}{Q_0}(X) = &~ \exp\left(\int_{0}^{1}{(b^{*}_{1} - b^{*}_{0})(X_s)b^{*}_{1}(X_s)ds} - \dfrac{1}{2}\int_{0}^{1}{(b^{*}_{1} - b^{*}_{0})(b^{*}_{0} + b^{*}_{1})(X_s)ds} + \int_{0}^{1}{(b^{*}_{1} - b^{*}_{0})(X_s)dW_s}\right) \\
    = &~ \exp\left(\dfrac{1}{2}\int_{0}^{1}{f^{*2}(X_s)ds} + \int_{0}^{1}{f^{*}(X_s)dW_s}\right) \\
    = &~ \exp\left(\dfrac{1}{2}\left\|f^{*}\right\|^{2}_{X} + \int_{0}^{1}{f^{*}(X_s)dW_s}\right).
\end{align*}

We deduce for all $\varepsilon \in (0, 1/8)$ and for all $k \in [\![0,m-1]\!]$ that
\begin{align*}
    T^{k}_{1,1} = &~ \P\left(\left|\exp\left(-\frac{1}{2}\left\|f^{*}\right\|^{2}_{X} + \int_{0}^{1}{f^{*}(X_s)dW_s}\right) - 1\right| \leq 4\varepsilon \biggm\vert \|f^{*}\|_{X} \in [t_k,t_{k+1}], Y = 0\right) \\
    \leq &~ \P\left(\log(1 - 4\varepsilon) \leq \int_{0}^{1}{f^{*}(X_s)dW_s} - \frac{\|f^{*}\|^{2}_{X}}{2} \leq \log(1 + 4\varepsilon) \biggm\vert\|f^{*}\|_{X} \in [t_k,t_{k+1}], Y = 0\right) \\
    \leq &~ \P\left( -\frac{8\varepsilon}{\Delta_{b^{*}}} + \frac{\|f^{*}\|^{2}_{X}}{2\Delta_{b^{*}}} \leq \int_{0}^{1}{\frac{f^{*}(X_s)}{\Delta_{b^{*}}}dW_s} \leq \frac{4\varepsilon}{\Delta_{b^{*}}} + \frac{\|f^{*}\|^{2}_{X}}{2\Delta_{b^{*}}} \biggm\vert \|f^{*}\|_{X} \in [t_k,t_{k+1}], Y = 0\right) \\
    \leq &~ \P\left( -\frac{8\varepsilon}{\Delta_{b^{*}}} + \frac{t^{2}_k}{2\Delta_{b^{*}}} \leq \int_{0}^{1}{\frac{f^{*}(X_s)}{\Delta_{b^{*}}}dW_s} \leq \frac{4\varepsilon}{\Delta_{b^{*}}} + \frac{t^{2}_{k+1}}{2\Delta_{b^{*}}} \biggm\vert \|f^{*}\|_{X} \in [t_k,t_{k+1}], Y = 0\right),
\end{align*}
and
\begin{align*}
    T^{k}_{1,2} = &~ \P\left(\left|\exp\left(\frac{1}{2}\left\|f^{*}\right\|^{2}_{X} + \int_{0}^{1}{f^{*}(X_s)dW_s}\right) - 1\right| \leq 4\varepsilon \biggm\vert \|f^{*}\|_{X} \in [t_k,t_{k+1}], Y = 1\right) \\
    \leq &~ \P\left(\log(1 - 4\varepsilon) \leq \int_{0}^{1}{f^{*}(X_s)dW_s} + \frac{\|f^{*}\|^{2}_{X}}{2}\leq \log(1 + 4\varepsilon) \biggm\vert \|f^{*}\|_{X} \in [t_k,t_{k+1}], Y = 1\right) \\
    \leq &~ \P\left( -\frac{8\varepsilon}{\Delta_{b^{*}}} - \frac{\|f^{*}\|^{2}_{X}}{2\Delta_{b^{*}}} \leq \int_{0}^{1}{\frac{f^{*}(X_s)}{\Delta_{b^{*}}}dW_s} \leq \frac{4\varepsilon}{\Delta_{b^{*}}} - \frac{\|f^{*}\|^{2}_{2}}{2\Delta_{b^{*}}} \biggm\vert \|f^{*}\|_{X} \in [t_k,t_{k+1}], Y = 1\right) \\
    \leq &~ \P\left( -\frac{8\varepsilon}{\Delta_{b^{*}}} - \frac{t^{2}_{k+1}}{2\Delta_{b^{*}}} \leq \int_{0}^{1}{\frac{f^{*}(X_s)}{\Delta_{b^{*}}}dW_s} \leq \frac{4\varepsilon}{\Delta_{b^{*}}} - \frac{t^{2}_k}{2\Delta_{b^{*}}} \biggm\vert \|f^{*}\|_{X} \in [t_k,t_{k+1}], Y = 1\right).
\end{align*}
Under Assumption~\ref{ass:Margin}, the random variable $Z = (1/\Delta_{b^{*}})\int_{0}^{1}{f^{*}(X_s)dW_s}$ admits a density function $p_{Z}$ that is bounded on the real line $\R$. Thus, there exists a constant $C_Z > 0$ such that
\begin{equation*}
    \forall~ x\in\R, ~~ p_Z(x) \leq C_Z.
\end{equation*}
We finally obtain that for all $k \in [\![0,m-1]\!]$,
\begin{align*}
    T^{k}_{1,1} \leq &~ \int_{-\frac{8\varepsilon}{\Delta_{b^{*}}}+\frac{t^{2}_k}{2\Delta_{b^{*}}}}^{\frac{4\varepsilon}{\Delta_{b^{*}}}+\frac{t^{2}_{k+1}}{2\Delta_{b^{*}}}}{p_Z(x)dx} \leq C_Z\left(\frac{12}{\Delta_{b^{*}}}\varepsilon + \frac{1}{2}(t^{2}_{k+1} - t^{2}_k)\right) \leq C\left(\frac{12}{\Delta_{b^{*}}}\varepsilon + \frac{\overline{\Delta}_{b} - \underline{\Delta}_{b}}{2m}\right) \\
    T^{k}_{1,2} \leq &~ \int_{-\frac{8\varepsilon}{\Delta_{b^{*}}}-\frac{t^{2}_{k+1}}{2\Delta_{b^{*}}}}^{\frac{4\varepsilon}{\Delta_{b^{*}}}-\frac{t^{2}_k}{2\Delta_{b^{*}}}}{p_Z(x)dx} \leq C_Z\left(\frac{12}{\Delta_{b^{*}}}\varepsilon + \frac{1}{2}(t^{2}_{k+1} - t^{2}_k)\right) \leq C\left(\frac{12}{\Delta_{b^{*}}}\varepsilon + \frac{\overline{\Delta}_{b} - \underline{\Delta}_{b}}{2m}\right) 
\end{align*}
where $C>0$ is a new constant. Then, for all $k \in [\![0,m-1]\!]$,
$$ T^{k}_{1} \leq C\left(\frac{12}{\Delta_{b^{*}}}\varepsilon + \frac{\overline{\Delta}_{b} - \underline{\Delta}_{b}}{2m}\right) . $$
Using a similar reasoning, we obtain that for all $k \in [\![0,m-1]\!]$, 
$$ T^{k}_{2} \leq C\left(\frac{12}{\Delta_{b^{*}}}\varepsilon + \frac{\overline{\Delta}_{b} - \underline{\Delta}_{b}}{2m}\right) . $$
Thus, we conclude that for all $m \geq 1$,
\begin{align*}
    \P\left(0 < \left|\pi_1^*(X) - \frac{1}{2}\right| \leq \varepsilon\right) = &~ \sum_{k=0}^{m-1}{\P\left(0 < \left|\pi_1^*(X) - \frac{1}{2}\right| \leq \varepsilon \biggm\vert \|f^{*}\|_{X} \in [t_k,t_{k+1}]\right)\P\left(\|f^{*}\|_{X} \in [t_k,t_{k+1}]\right)} \\
    \leq &~ \sum_{k=0}^{m-1}{\left(T^{k}_{1} + T^{k}_{2}\right)\P\left(\|f^{*}\|_{X} \in [t_k,t_{k+1}]\right)} \\
    \leq &~ C\frac{12}{\Delta_{b^{*}}}\varepsilon + \frac{\overline{\Delta}_{b} - \underline{\Delta}_{b}}{2m}.
\end{align*}
Finally, we tend $m$ toward infinity and obtain
\begin{align*}
    \P\left(0 < \left|\pi_1^*(X) - \frac{1}{2}\right| \leq \varepsilon\right) \leq &~ C\frac{12}{\Delta_{b^{*}}}\varepsilon.
\end{align*}
\end{proof}

\subsection{Proof of Theorem~\ref{thm:RateMargin}}

\begin{proof}
    Since we have $b_0^*, b_1^* \in \Sigma(\beta, R)$ and $\Delta = \mathrm{O}(1/N)$, from the proof of Theorem~\ref{thm:EstimErrorERM}, we obtain
    \begin{align*}
    \mathbb{E}\left[R_2(\widehat{\bf h}) - R_2({\bf h}^{*})\right] \leq C\left[ \log^{4}(N)\left(\dfrac{A_N}{D_N}\right)^{2\beta} + \frac{D_N\log(N)}{N} + \dfrac{1}{N} \right].
   \end{align*}
   
   Then, for $D_N \propto N^{1/(2\beta+1)}$ and $A_N = \log(N)$, there exists a constant $C>0$ such that
   \begin{align*}
    \mathbb{E}\left[R_2(\widehat{\bf h}) - R_2({\bf h}^{*})\right]  \leq C\log^{2\beta+4}(N)N^{-2\beta/(2\beta+1)}.
   \end{align*}
   
    Finally, from Equation~\eqref{eq:LinkERmargin}, we conclude that
   \begin{equation*}
      \mathbb{E}\left[R(\widehat{g}) - R(g^{*})\right] \leq \left(\mathbb{E}\left[R_2(\widehat{\bf h}) - R_2({\bf h}^{*})\right]\right)^{2/3} \leq C\log^{(4\beta+8)/3}(N)N^{-4\beta/3(2\beta+1)}.
   \end{equation*}
\end{proof}

\section{Proofs of Section~\ref{sec:AdaptiveERM}}

\subsection{Proof of Theorem~\ref{thm:AdaptiveERMclassifier}}

\begin{proof}
    The adaptive ERM classifier $g_{\w{\bf h}}$ defined from the score function $\w{\bf h} = \w{\bf h}_{\w{D}_N^1, \w{D}_N^2}$ given by
    \begin{equation}\label{eq:MinPen}
        \w{R}_2(\w{\bf h}) + \mathrm{pen}(\w{D}_N^1, \w{D}_N^2) = \underset{D_N^1, D_N^2 \in \Xi_N}{\inf}{\left\{~\w{R}_2(\w{\bf h}_{D_N^1, D_N^2}) + \mathrm{pen}(D_N^1, D_N^2)\right\}}.
    \end{equation}
    
    For all $D_N^1, D_N^2 \in \Xi_N$, we have
    \begin{equation}\label{eq:First-UB-ExcessRisk}
        \E\left[R_2(\w{\bf h}) - R_2({\bf h}^{*})\right] \leq \E\left[\w{T}_{1,N}\right] + \E\left[\w{T}_{2,N}\right]
    \end{equation}
    
    with,
    \begin{align*}
        \w{T}_{1,N} = &~ \E\left[R_2(\w{\bf h}) - R_2({\bf h}^{*})\right] -2\left(\w{R}_2(\w{\bf h}) - \w{R}_2({\bf h}^{*})\right) - 2\mathrm{pen}(\w{D}_N^1, \w{D}_N^2), \\
        \w{T}_{2,N} = &~ 2\left(\w{R}_2(\w{\bf h}) - \w{R}_2({\bf h}^{*})\right) + 2 \mathrm{pen}(\w{D}_N^1, \w{D}_N^2).
    \end{align*}

\paragraph{Upper-bound of the second term $\mathbb{E}\left[\w{T}_{2,N}\right]$.}

   From Equation~\eqref{eq:MinPen}, we have
    \begin{multline*}
        \w{T}_{2,N} = 2\left(\w{R}_2(\w{\bf h}) + \mathrm{pen}(\w{D}_N^1, \w{D}_N^2)\right) - 2 \w{R}_2({\bf h}^{*}) \\
        = 2\underset{D_N^1, D_N^2 \in \Xi_N}{\inf}{\left\{\w{R}_2(\w{\bf h}_{D_N^1, D_N^2}) + \mathrm{pen}(D_N^1, D_N^2)\right\}} - 2\w{R}_2({\bf h}^{*}) \\
        = 2\underset{D_N^1, D_N^2 \in \Xi_N}{\inf}{\left\{\underset{{\bf h} \in \w{\bf H}_{D_N^1, D_N^2}}{\inf}{\left\{\w{R}_2({\bf h}) - \w{R}_2({\bf h}^{*})\right\}} + \mathrm{pen}(D_N^1, D_N^2)\right\}}.
    \end{multline*}
    
    Thus, since for all $D_1, D_2 \in \Xi_N$, the random space $\w{\bf H}_{D_1,D_2}$ depends on the sample $\mathcal{Y}_N = \left\{\widetilde{Y}_1, \ldots, \widetilde{Y}_N\right\}$, and the empirical risk $\w{R}_2$ is built from the learning sample $\mathcal{D}_N$ which is independent of $\mathcal{Y}_N$, we obtain
    \begin{multline}\label{eq:UB-T2N}
        \E\left[\w{T}_{2,N}\right] \leq 2\underset{D_N^1, D_N^2 \in \Xi_N}{\inf}{\left\{\E\left[\underset{{\bf h} \in \w{\bf H}_{D_N^1, D_N^2}}{\inf}{\left\{\w{R}_2({\bf h}) - \w{R}_2({\bf h}^{*})\right\}}\right] + \mathrm{pen}(D_N^1, D_N^2)\right\}} \\
        \leq 2\underset{D_N^1, D_N^2 \in \Xi_N}{\inf}{\left\{\mathbb{E}\left[R_2(\widetilde{\bf h}) - R_2({\bf h}^{*})\right] + \mathrm{pen}(D_N^1, D_N^2)\right\}},
    \end{multline}

where,
\begin{equation*}
\widetilde{{\bf h}} \in \argmin{{\bf h} \in \w{\bf H}_{D_1,D_2}} R_2({\bf h}).
\end{equation*}

\paragraph{Upper-bound of the first term $\mathbb{E}\left[\w{T}_{1,N}\right]$.}

We have on one side:
    \begin{multline*}
        \w{T}_{1,N} = \E\left[R_2(\w{\bf h}) - R_2({\bf h}^*)\right] - 2\left[\w{R}_2(\w{\bf h}) - \w{R}_2({\bf h}^*)\right] - 2\mathrm{pen}(\w{D}_N^1, \w{D}_N^2) \\
        = \E\left[\mathcal{E}_{\w{\bf h}} - \mathcal{E}_{\w{\bf h}_{\varepsilon}}\right] - 2\left(\w{\mathcal{E}}_{\w{\bf h}} - \w{\mathcal{E}}_{\w{\bf h}_{\varepsilon}}\right) + \w{T}^{\varepsilon}_{1,N}\\
        = \E\left[R_2(\w{\bf h}) - R_2(\w{\bf h}_{\varepsilon})\right] - 2\left(\w{R}_2(\w{\bf h}) - \w{R}_2(\w{\bf h}_{\varepsilon})\right) + \w{T}_{1,N}^{\varepsilon},
    \end{multline*}
    
    with 
    $$\w{T}^{\varepsilon}_{1,N} = \E\left[\mathcal{E}_{\w{\bf h}_{\varepsilon}}\right] - 2\w{\mathcal{E}}_{\w{\bf h}_{\varepsilon}} - 2\mathrm{pen}(\w{D}_N^1, \w{D}_N^2),$$
    
    and
    \begin{equation*}
       \mathcal{E}_{\bf h} = R_2({\bf h}) - R_2({\bf h}^{*}), ~ \w{\mathcal{E}}_{\bf h} = \w{R}_2({\bf h}) - \w{R}_2({\bf h}^{*}).
   \end{equation*}
    
    From Corollary~\ref{coro:CardH}, there exists a constant $C>0$ such that
    \begin{equation}\label{eq:upper-bound-T1N}
        \E\left[\w{T}_{1,N}\right] \leq C\left(N^{1/2}n^{1/2}(\log(N))^{3/2} + N\log^3(N)\right)\varepsilon + \E\left[\w{T}^{\varepsilon}_{1,N}\right].
    \end{equation}
    
    On the other side, one has
    \begin{equation*}
        \E\left[\w{T}^{\varepsilon}_{1,N}\right] = \E\left[\w{T}^{\varepsilon}_{1,N}\one_{\w{T}^{\varepsilon}_{1,N} > 0}\right] + \E\left[\w{T}^{\varepsilon}_{1,N}\one_{\w{T}^{\varepsilon}_{1,N} \leq 0}\right] \leq \int_{0}^{+\infty}{\P\left(\w{T}^{\varepsilon}_{1,N} > t\right)dt},
    \end{equation*}
    
    and for all $t > 0$,
    \begin{multline*}
        \P\left(\w{T}^{\varepsilon}_{1,N} > t\right) = \P\left(\E\left[\mathcal{E}_{\w{\bf h}_{\varepsilon}}\right] - 2\w{\mathcal{E}}_{\w{\bf h}_{\varepsilon}} > t + 2 \mathrm{pen}(\w{D}_N^1, \w{D}_N^2)\right) \\
        \leq \sum_{D_N^1, D_N^2 \in \Xi_N}{\P\left(\underset{{\bf h} \in \w{\bf H}^{\varepsilon}_{D_N^1, D_N^2}}{\sup}{\left\{~\mathcal{E}_{h} - 2\w{\mathcal{E}}_{\bf h}\right\} > t + 2 \mathrm{pen}(D_N^1, D_N^2)}\right)} \\
        \leq \sum_{D_N^1, D_N^2 \in \Xi_N}{\sum_{{\bf h} \in \w{\bf H}^{\varepsilon}_{D_N^1, D_N^2}}{\P\left(\mathcal{E}_{\bf h} - \w{\mathcal{E}}_{\bf h} > \dfrac{t + 2 \mathrm{pen}(D_N^1, D_N^2) + \mathcal{E}_{\bf h}}{2}\right)}}.
    \end{multline*}

    We also obtain from the proof of Theorem~\ref{thm:EstimErrorERM} that for all $D_N^1, D_N^2 \in \Xi_N$ and for all ${\bf h} \in \w{\bf H}^{\varepsilon}_{D_N^1, D_N^2}$, there exist constants $C_1,C_2>0$ such that 
    \begin{multline*}
        \P\left(\mathcal{E}_{\bf h} - \w{\mathcal{E}}_{\bf h} > \dfrac{t + 2 \mathrm{pen}(D_N^1, D_N^2) + \mathcal{E}_{\bf h}}{2}\right) \leq C_1\exp\left(-C_2N\left[\mathcal{E}_{\bf h} + t + 2\mathrm{pen}(D_N^1, D_N^2)\right]\right) \\
        \leq C_1\exp\left(-C_2N\left[ t + 2\mathrm{pen}(D_N^1, D_N^2)\right]\right).
    \end{multline*}
    
   Since from Corollary~\ref{coro:CardH},
   \begin{equation*}
    {\rm Card}\left(\w{\bf H}_{D_N^1, D_N^2}^{\varepsilon}\right) = \mathcal{N}(\varepsilon, \mathcal{S}_{D_N^1}^{K} \times \widetilde{\mathcal{S}}_{D_N^2}, \|.\|_{\infty}),
\end{equation*}
   
   we finally obtain that:
    \begin{multline}\label{eq:Teps1}
        \E\left[\w{T}^{\varepsilon}_{1,N}\right] \leq C\sum_{D_N^1, D_N^2 \in \Xi_N}{\mathcal{N}(\varepsilon, \mathcal{S}_{D_N^1}^{K} \times \widetilde{\mathcal{S}}_{D_N^2}, \|.\|_{\infty})\int_{0}^{+\infty}{\exp\left(-C_2N\left[t+2\mathrm{pen}(D_N^1, D_N^2)\right]\right)dt}} \\
        \leq C\sum_{D_N^1, D_N^2 \in \Xi_N}{\dfrac{\mathcal{N}(\varepsilon, \mathcal{S}_{D_N^1}^{K} \times \widetilde{\mathcal{S}}_{D_N^2}, \|.\|_{\infty})\exp\left(-2C_2N\mathrm{pen}(D_N^1, D_N^2)\right)}{C_2N}}.
    \end{multline}

from Proposition~\ref{prop:EpsNet}, there exists a constant $C>0$ depending on $K$ such that
\begin{equation*}
    \log\left(\mathcal{N}(\varepsilon, \mathcal{S}_{D_N^1}^{K} \times \widetilde{\mathcal{S}}_{D_N^2}, \|.\|_{\infty})\right) \leq C\left(D_N^1 + D_N^2\right)\log\left(\dfrac{\sqrt{D_N^1 + D_N^2 + M}\log^3(N)}{\varepsilon}\right).
\end{equation*}

Setting $\varepsilon = \dfrac{\sqrt{D_N^1 + D_N^2 + M}\log^3(N)}{N^4}$, there exists a constant $C > 0$ such that
\begin{equation}\label{eq:Teps2}
    \log\left(\mathcal{N}(\varepsilon, \mathcal{S}_{D_N^1}^{K} \times \widetilde{\mathcal{S}}_{D_N^2}, \|.\|_{\infty})\right) \leq C\left(D_N^1 + D_N^2\right)\log(N).
\end{equation}

Thus, we obtain from Equations~\eqref{eq:Teps1} and \eqref{eq:Teps2},
\begin{multline*}
    \E\left[\w{T}^{\varepsilon}_{1,N}\right]\\
    \leq \dfrac{C_1}{C_2N}\sum_{D_N^1, D_N^2 \in \Xi_N}{\exp\left(-C_2N\mathrm{pen}(D_N^1, D_N^2)\right) \exp\left(C\left(D_N^1 + D_N^2\right)\log(N) - C_2N\mathrm{pen}\left(D_N^1, D_N^2\right)\right)}.
\end{multline*}

We choose the penalty function $(x,y) \mapsto \mathrm{pen}(x, y)$ such that for all $D_N^1, D_N^2 \in \Xi_N$,
\begin{align*}
    C_2N\mathrm{pen}(D_N^1, D_N^2) \geq C(D_N^1 + D_N^2)\log(N) \iff \mathrm{pen}(D_N^1, D_N^2) \geq \frac{C}{C_2}\dfrac{(D_N^1 + D_N^2)\log(N)}{N}.
\end{align*}

Then, for $N$ large enough, we obtain that for all $D_N^1, D_N^2 \in \Xi_N$,
\begin{align*}
    \exp\left(C\left(D_N^1 + D_N^2\right)\log(N) - C_2N\mathrm{pen}\left(D_N^1, D_N^2\right)\right) \leq 1,
\end{align*}

and
\begin{multline}\label{eq:upper-bound-Tepsillon}
    \E\left[\w{T}^{\varepsilon}_{1,N}\right] \leq \frac{C_1}{C_2N}\sum_{D_N^1, D_N^2 \in \Xi_N}{\exp\left(-C_2N\mathrm{pen}(D_N^1, D_N^2)\right)}\\ \leq \dfrac{C_1}{C_2N}\sum_{k=1}^{+\infty}{\sum_{\ell=1}^{+\infty}{\exp\left(-Ck\log(N)\right)\exp\left(-C\ell\log(N)\right)}}\\
    \leq \dfrac{C}{N},
\end{multline}

where the constant $C>0$ depends on $C_1$ and $C_2$. We deduce from Equations~\eqref{eq:upper-bound-T1N} and \eqref{eq:upper-bound-Tepsillon} that there exists a constant $C>0$ such that
\begin{equation}\label{eq:Final-upper-bound-T1N}
    \E\left[\w{T}_{1,N}\right] \leq \dfrac{C}{N}.
\end{equation}

\paragraph{Conclusion.}

We conclude from Equations~\eqref{eq:First-UB-ExcessRisk}, \eqref{eq:UB-T2N} and \eqref{eq:Final-upper-bound-T1N} that
\begin{equation*}
    \E\left[R_2(\w{\bf h}) - R_2({\bf h}^{*})\right] \leq 2\underset{D_N^1, D_N^2 \in \Xi_N}{\inf}{\left\{\E\left[R_2(\widetilde{\bf h}) - R_2({\bf h}^{*})\right] + \mathrm{pen}\left(D_N^1, D_N^2\right)\right\}} + \dfrac{C}{N}.
\end{equation*}
\end{proof}

\newpage

\bibliographystyle{ScandJStat}
\bibliography{mabiblio.bib}

\end{document}